\documentclass[11pt]{article}

\usepackage{authblk}
\author[1]{Ingvar Ziemann\thanks{Corresponding author: ingvarz@seas.upenn.edu}}
\author[2]{Stephen Tu}
\author[1]{George J. Pappas}
\author[1]{Nikolai Matni}

\affil[1]{University of Pennsylvania}
\affil[2]{University of Southern California}

\usepackage[utf8]{inputenc} % allow utf-8 input
\usepackage[T1]{fontenc} %\usepackage{ae} \usepackage{aecompl}   % use 8-bit T1 fonts

\usepackage{fullpage}

\usepackage{url}            % simple URL typesetting
\usepackage{booktabs}       % professional-quality tables
\usepackage{amsfonts}       % blackboard math symbols
\usepackage{nicefrac}       % compact symbols for 1/2, etc.
\usepackage{microtype}      % microtypography
\usepackage[dvipsnames]{xcolor}

\usepackage{natbib} %use option numbers if arxiv fails
\usepackage{enumitem}
\usepackage{amsthm}
\usepackage{amssymb}
\usepackage{amsmath}
\usepackage{mathrsfs}
\usepackage{thmtools}
\usepackage{hyperref}
\hypersetup{
 %   bookmarks=true,         % show bookmarks bar?
         % non-Latin characters in Acrobat’s bookmarks
    pdftoolbar=true,        % show Acrobat’s toolbar?
    pdfmenubar=true,        % show Acrobat’s menu?
    pdffitwindow=false,     % window fit to page when opened
    pdfstartview={FitH},    % fits the width of the page to the window
    pdfnewwindow=true,      % links in new PDF window
    colorlinks=true,       % false: boxed links; true: colored links
    linkcolor=blue,          % color of internal links (change box color with linkbordercolor)
    citecolor=ForestGreen,        % color of links to bibliography
    filecolor=magenta,      % color of file links
    urlcolor=red,           % color of external links
    breaklinks=false,
}

\usepackage{cleveref}
\usepackage{thmtools}
\usepackage{thm-restate}

\numberwithin{equation}{section}

\date{}

\title{Sharp Rates in Dependent Learning Theory:\\Avoiding Sample Size Deflation for the Square Loss}% in Weakly Sub-Gaussian Classes}

% Definitions of handy macros can go here

\newcommand{\e}{\varepsilon}

\newcommand{\E}{\mathbf{E}}

\newcommand{\V}{\mathbf{V}}

\DeclareMathOperator*{\argmin}{argmin}

%Theorems

\newcommand{\T}{\mathsf{T}}

\newcommand{\Z}{\mathbb{Z}}

\newcommand{\scrH}{\mathscr{H}}
\newcommand{\scrF}{\mathscr{F}}
\newcommand{\scrG}{\mathscr{G}}

\newcommand{\sfX}{\mathsf{X}}
\newcommand{\sfY}{\mathsf{Y}}

\newcommand{\sfP}{\mathsf{P}}
\newcommand{\sfQ}{\mathsf{Q}}
\newcommand{\sfM}{\mathsf{M}}

\newcommand{\R}{\mathbb{R}}
\newcommand{\N}{\mathbb{N}}
\renewcommand{\Pr}{\mathbf{P}}

\newcommand{\iid}{iid}

%%%other commands

%%%thm env
%%% I changed it to number within section.
%%% Otherwise we'd have things like "Proposition 21" which looks weird
%%% Feel free to change it back
\newtheorem{definition}{Definition}[section] % definition numbers are dependent on theorem numbers
\newtheorem{theorem}{Theorem}[section] % same for example numbers
\newtheorem*{theorem*}{Theorem} % same for example numbers
\newtheorem{proposition}{Proposition}[section] % same for example numbers
 % same for example numbers
 % same for example numbers
 % same for example numbers

\newtheorem{lemma}{Lemma}[section]

%\declaretheorem[name=Theorem,numberwithin=section]{theorem}
%\declaretheorem[name=Lemma,numberwithin=section]{lem}
%\declaretheorem[name=Definition,numberwithin=section]{definition}

%\pdfminorversion=4

%\usepackage{setspace}
%\setstretch{2}

\begin{document}

\maketitle

\begin{abstract} 
%Sharp rates in dependent learning theory have proven elusive. 
In this work, we study statistical learning with dependent ($\beta$-mixing) data and square loss in a hypothesis class $\scrF\subset L_{\Psi_p}$ where $\Psi_p$ is the norm $\|f\|_{\Psi_p} \triangleq  \sup_{m\geq 1}  m^{-1/p} \|f\|_{L^m} $ for some $p\in [2,\infty]$. Our inquiry is motivated by the search for a sharp noise interaction term, or variance proxy, in learning with dependent data. Absent any realizability assumption, typical non-asymptotic results exhibit variance proxies that are deflated \emph{multiplicatively} by the mixing time of the underlying covariates process. We show that whenever the topologies of $L^2$ and $\Psi_p$ are comparable on our hypothesis class $\scrF$---that is, $\scrF$ is a weakly sub-Gaussian class: $\|f\|_{\Psi_p} \lesssim \|f\|_{L^2}^\eta$ for some $\eta\in (0,1]$---the empirical risk minimizer achieves a rate that only depends on the complexity of the class and  second order statistics in its leading term. Our result holds whether the problem is realizable or not and we refer to this as a \emph{near mixing-free rate}, since direct dependence on mixing is relegated to an additive higher order term. We arrive at our result by combining the above notion of a weakly sub-Gaussian class with mixed tail generic chaining. This combination allows us to compute sharp, instance-optimal rates for a wide range of problems. Examples that satisfy our framework  include sub-Gaussian linear regression, more general smoothly parameterized function classes, finite hypothesis classes, and bounded smoothness classes.

%The main technical device to arrive at these results is a modification of \citet{lecue2013learning} essentially replacing Hoeffding's inequality with a refinement of Bernstein's inequality. This in turn allows us to invoke mixed tail generic chaining \citep{dirksen2015tail} to control the key empirical process and where the leading term metric and variance proxy is specified by the $L^2$-norm and weak variance respectively. Crucially, this strategy also pushes the dependencies on mixing and (weak) norm equivalence into additive higher order terms, thereby allowing us to obtain sharp, instance-optimal, rates. 

%\Ingvar{Open problems: we can bound the $L^\infty$ norm of of a Lipschitz function by its $L^1$ norm if the base measure is MAC wrt Lebesque. 1) Can we relax the MAC assumption? 2) Can we replace $L^\infty$ with Orlicz norms?}
\end{abstract}

\newpage
\tableofcontents
\newpage

\section{Introduction}
\label{sec:intro}

While a significant portion the data used in modern learning algorithms exhibits temporal dependencies, we still lack a sharp theory of supervised learning from dependent data. Examples exhibiting such dependencies are far ranging and abundant, and include
forecasting applications and data from controls/robotics systems.
%
% temperature data, financial data,  and data from modern control systems. 
Over the last several decades, an order-wise rather sharp theory of learning with \emph{independent} data has emerged. An entirely incomplete list of these advances includes the introduction of local Rademacher compleixities by \citet{bartlett2005local}, sharp rates in misspecified linear regression by \citet{hsu2012random},  and culminates in the learning without concentration framework by \citet{mendelson2014learning}, which enables an instance-optimal understanding of many standard learning problems through a \emph{critical radius} that is sensitive to both the noise scale and the (local) geometry of the hypothesis class.

In principle, one expects these results to be carried over to the dependent ($\beta$-mixing) setting through \emph{blocking}~\citep{bernstein1927extension, yu1994mixing}.\footnote{See \Cref{sec:blocking} for a description of this technique.} At a high level, the blocking technique involves splitting the original data (of length $n\in \N$) into consecutive blocks, each of length $k\in \N$, with the length chosen such that the starting points of each block are approximately independent. Indeed, several prior works pursue this route \citep{mohri2008rademachermixing, kuznetsov2017generalization, roy2021dependent}. However, the drawback with this approach is that it typically deflates the original sample size by the block length factor $k$. If such a deflation were to appear in the final rate of convergence, this would clearly constitute worst-case behavior; it corresponds to every data point being revealed repeatedly, $k$ times and with perfect dependence, within a sequence of $n$ observations.

In the context of the square loss function, the typical approach to sidestep this sample size deflation relies on the ``noise'' (residual term) forming a martingale difference sequence. This approach has been carried out for parametric inference in (generalized) linear dynamical systems by \citet{simchowitz2018learning} and \citet{kowshik2021near} and also for more general hypothesis classes and supervised learning with square loss by \citet{ziemann2022learning}.  For the square loss function the martingale approach requires that the problem is strongly realizable: the best predictor in the hypothesis class should coincide with the regression function (conditional expectation of targets given past inputs). Put differently, one requires that the hypothesis class is rich enough such that conditional expectation function (of the targets and given past inputs) can be realized by it. 

In this paper, we instead show how the blocking approach can be salvaged for a wide range of hypotheses classes and the square loss function. In contrast to the just-mentioned references, our analysis does not require a realizability assumption. Instead, we show how to extend the analysis of \citet{ziemann2023noise} for linear regression to more general hypothesis classes. At a high level, this analysis involves combining the above-mentioned blocking technique with Bernstein's inequality. To motivate this approach, let us consider what happens in Bernstein's inequality when we are given $V_{1:n}$ $b$-bounded random variables that are $k$-wise independent, where $k$ divides $n$, and with identical marginals (for simplicity alone).\footnote{We say that a sequence $Z_{1:n}$ is $k$-wise independent if each of the blocks $Z_{jk+1:(j+1)k}$ ($j=0,1,\dots,n/k-1$) are independent of each other.} By applying Bernstein's inequality to the $bk$-bounded variables $\bar V_{i:n/k}, \bar V_i \triangleq  \sum_{j=ik-k+1}^{ik} V_j$ we find that with probability at least $1-\delta$:
\begin{equation}\label{eq:bernsteininequalityblocked}
    \frac{1}{n}\sum_{i=1}^n V_i \leq 2\sqrt{\frac{ k^{-1} \E  (\bar V_1)^2 \ln (1/\delta)} {n}}+\frac{4bk \log(1/\delta)}{3n}.
\end{equation}
If the data instead were completely independent, then in the small and moderate deviations regime $\delta \gtrsim \exp(-n\E V_1^2/b^2 k)$, \eqref{eq:bernsteininequalityblocked} is just as sharp as directly applying Bernstein's inequality to the independent sum.  In this regime for this problem, nothing is lost by blocking, even if the data happens to be iid and we use the blocked version of Bernstein's inequality. By contrast, if one were to carry out the same computation using Hoeffding's inequality (for bounded random variables) instead of Bernstein's, we would incur an irreducible factor $k$ in the leading term in all regimes---even if the dependent bound is instantiated for independent variables. This suggests that the variance interacts much more gracefully with blocking arguments than higher order moments. 

The difficulty in combining blocking with Bernstein's inequality lies in making Bernstein's inequality uniform across the correct portion of the hypothesis class $\scrF$. Namely, in statistical learning it typically does not suffice to control sums of a single sequence of random variables $V_{1:n}$ but rather we need to uniformly control sums of an indexed family $\{V_{1:n}(f) : f\in \scrF \}$. To obtain fast rates, this uniform control needs to combined with a localization argument, so that one does ``pay'' for hypotheses too far away from the ground truth but only those within a certain critical radius. Na{\"{i}}vely union-bounding (or chaining) over such a family unfortunately again reintroduces a sample-size deflation by the block-length factor $k$. This happens because the variance term in \eqref{eq:bernsteininequalityblocked} starts to balance the boundedness term at the above-mentioned critical radius without further assumption.
%It is a useful heuristic to think of the regime $\delta \gtrsim \exp(-n\E V_1^2/b^2 k)$ in \eqref{eq:bernsteininequalityblocked} as the scale at which at which a  central limit theorem (CLT) kicks in. Consequently, we are tasked with proving a type of non-asymptotic functional CLT for $\beta$-mixing variables. 
\citet{ziemann2023noise} show how to overcome the issue of uniformity when $\scrF$ is a linear class via the Fuk-Nagaev inequality \citep{einmahl2008characterization}. Unfortunately, this inequality cannot be applied beyond the linear setting. Here, we introduce machinery based on a refinement of sub-Gaussian classes \citep{lecue2013learning}, and a refinement of Bernstein's inequality (due to \citet{maurer2021concentration}), that we combine with mixed-tail generic chaining (as introduced by \citet{dirksen2015tail}). Our approach allows us to overcome this issue with blocking and Bernstein's inequality for a surprisingly wide range of function classes, thereby relegating any dependence on mixing to additive higher order terms, instead of the typical multiplicative deflation term.
 
\subsection{Contribution}
\label{sec:contribution}
Let us now make our contribution more precise. We are given stationary $\beta$-mixing data $(X,Y)_{1:n}$ where the $X_i $ (resp.~$Y_i$) assume values in a subset of a normed space denoted $(\sfX, \|\cdot \|_{\sfX})$ (resp.~a Hilbert space $(\sfY,\langle \cdot, \cdot\rangle, \|\cdot\|)$). We assume that $(X,Y)_{1:n}$ is stationary and denote for any $i\in [n]$ the joint distribution of $(X_i,Y_i)$ by $\sfP_{X,Y}$, and the corresponding marginals are denoted $\sfP_X$ and $\sfP_Y$. We study empirical risk minimization over a hypothesis class $\scrF$, containing functions $f : \sfX \to \sfY$, and with the square loss function. In this scenario, we study the performance of the (any) empirical risk minimizer
\begin{equation}\label{eq:ermdef}
    \widehat f \in \argmin_{f \in \scrF}\frac{1}{n} \sum_{i=1}^n \| f(X_i)-Y_i\|^2.
\end{equation}

Our main contribution is to characterize the rate of convergence of \eqref{eq:ermdef} to the best possible predictor $f_\star$ in the class $\scrF$ defined as:
\begin{equation}
    f_\star \in \argmin_{f \in \scrF}  \E \| f(X)-Y\|^2, \qquad (X,Y)\sim \sfP_{X,Y}.
\end{equation}
Let us also denote by $\scrF_\star$ the star-hull of $\scrF$ around $f_\star$.
%\Stephen{$f_\star$ is not
%defined here, maybe move the definition below up?}
That is, $\scrF_\star \triangleq \{\rho (f-f_\star) : f\in \scrF, \rho \in [0,1] \}$, which, for a convex class $\scrF$ coincides with the shifted class $\scrF-\{f_\star\}$. We further equip $\scrF_\star$ with the $L^2$-norm: $\|f\|_{L^2}^2 \triangleq  \E \| f(X)\|^2, f \in \scrF_\star, X\sim \sfP_X.$ Let us also define the ``noise'' $W_{1:n}$ by $W_i\triangleq Y_i-f_\star(X_i), i \in [n]$.  We focus on the case when $\scrF$ is either (1) convex or (2) realizable (i.e., $\E[W_i| X_i]=0$ for $i \in [n]$). Note that
this restriction is due to a known shortcoming of ERM which holds even in iid settings,
and can be removed by modifying the estimator itself; we will discuss
this issue in more detail shortly.

%Our main contribution is to establish the existence of a critical radius $r_\star$ that characterizes the rate of convergence in $L^2$ of \eqref{eq:ermdef} 

As is typical in the learning theory literature, we characterize the rate of convergence of \eqref{eq:ermdef} through a fixed point, or critical radius. This critical radius takes the form as a solution to:
\begin{equation}\label{eq:informalcritrad}
    r_\star \asymp   \sup_{g\in \scrF_\star \cap r_\star S_{L^2}}\V\left( \frac{1}{\sqrt{n}}\sum_{i=1}^n  \left\langle W_i, \frac{g(X_i)}{\|g\|_{L^2}} \right\rangle \right)   
    %\\ 
    \times 
    \frac{\textnormal{complexity}(\scrF_\star \cap r_\star S_{L^2}) }{r_\star \sqrt{n}},
\end{equation}
%\Stephen{Q: Why not move $r_\star$ from the RHS so that the LHS is $r_\star^2$? Since $r_\star^2$ is what appears in the final bound anyways.}
%\Stephen{Note: Writing $\textnormal{weak variance} (W_{1:n}, \scrF_\star \cap r_\star S_{L^2} )$ conveys no meaning at this point since we don't know how the noise is supposed to interact with the function class. Furthermore, $W_i$ is not defined at this point. Since the weak variance is important, is it worth just writing the explicit expression out here?}
where for $r\in \R, r>0$, $rS_{L^2}$ is the unit sphere of radius $r$ in $L^2$ (the space of square integrable functions, and with the corresponding unit ball denoted $rB_{L^2})$ and $\V(\cdot)$ denotes the variance operator. This critical radius is akin to the one in \citet{bartlett2005local}, but also resembles the noise interaction term of \citet[introduced following Equation 2.2]{mendelson2014learning} in that our radius depends on the \emph{weak variance}, $ \sup_{g\in \scrF_\star \cap r_\star S_{L^2}} \V\left( \frac{1}{\sqrt{n}}\sum_{i=1}^n  \left\langle W_i, \frac{g(X_i)}{\|g\|_{L^2}} \right\rangle \right) $.\footnote{The terminology weak variance comes from the empirical processes literature in that the supremum in \Cref{def:weakvariancedef} is on the outside of the expectation.} To aid in the interpretation of $r_\star$, we will instantiate our main result, \Cref{thm:themainthm}, for parametric classes and show that this radius exhibits the desired ``dimension counting'' scaled with noise-to-signal behavior, see \Cref{cor:params} and \Cref{cor:linreg}. Moreover, the weak variance term takes into account how targets $Y_{1:n}$ interact with the function class $\scrF$ through $W_{1:n}$, locally at radius $r_\star$ near the minimizer $f_\star$, via a second-order statistic. In particular, this variance term is always sharper than the corresponding \iid\ variance term deflated by a factor of the mixing-time (or block-length).

With these preliminaries in place we are ready to state an informal version of our main result.
\paragraph{Informal version of \Cref{thm:themainthm}.} \emph{Given data that mixes sufficiently fast, for a wide range of (1) convex or (2) realizable hypothesis classes, any empirical risk minimizer $\widehat f$ over such a class $\scrF$ converges at least as fast a rate characterized by the critical radius $r_\star$ given by the solution to \eqref{eq:informalcritrad} depending on the variance of the noise-class interaction and local scale of the class $\scrF$. That is with probability $1-\delta$:}
\begin{multline}
\label{eq:mainthminformal}
    \| \widehat f-f_\star\|_{L^2}^2 \lesssim r_\star^2 + \frac{(\textnormal{weak variance}) \times \log(1/\delta)}{n}\\+\textnormal{terms of higher order}(r_\star,n^{-1},\mathrm{mixing},\log(1/\delta)).
\end{multline}
\emph{Moreover, for $d$-dimensional parametric classes the leading term is $  (\textnormal{weak variance}) \times \frac{ d+\log(1/\delta)}{n}$.}

The crux of this result is that past a burn-in, the ERM excess risk does not directly depend on mixing times, but only on the relevant second order statistics. Put differently, the effect of slow mixing has been relegated to a small \textit{additive} term with higher order dependence on $1/n$. Indeed, both $r_\star$ and the variance term in \eqref{eq:mainthminformal} do not directly depend on slow mixing (i.e., are not deflated by the block-length $k$) but only on relevant second order statistics. Slow mixing only affects higher order additive terms that can be pushed into the burn-in.

The qualifier ``wide range'' above refers to the requirement that the class $\scrF$ satisfies a certain topological condition.  Recall that for a random variable $Z$ the $\Psi_p$-norm is the norm $\|Z\|_{\Psi_p} = \sup_{m\geq 1} m^{-1/p}\|Z\|_{L^m} $. We will ask that for some $\eta\in (0,1]$ and $L>0$, every $f \in \scrF_\star$ satisfies the inequality  $\|f\|_{\Psi_p} \leq L \|f\|_{L^2}^\eta$. We will say that such classes are \emph{weakly sub-Gaussian} and will verify that such an inequality indeed holds true for a range of examples in \Cref{sec:examples}:
\begin{itemize}
    \item bounded smoothness classes, see \Cref{prop:smoothnessclass};
    \item parametric classes that are Lipschitz in their parameterization, see \Cref{prop:sharpnessprop};
    \item sub-Gaussian linear regression, see \Cref{prop:linreg};
    \item finite hypothesis classes, see \Cref{prop:finhyp}.
\end{itemize}

Finally, the requirement that $\scrF$ be either (1) convex or (2) realizable can easily be removed with a few modifications if one replaces the empirical risk minimizer by the star estimator of \citet{audibert2007progressive}. 
%\Stephen{Does the proof go through immediately
%if you use the star estimator? If so maybe let's say
%``easily removed'' so it seems like we didn't do it
%because it just makes things more complicated and is not the main point. }\Ingvar{Yes, its the same empirical process but for a slightly modified class}
In this case (but with the $L^2$-error replaced with the no-longer directly comparable excess risk functional) the geometric inequality by \citet[Lemma 1]{liang2015learning} takes a similar role to the basic inequality we use below. The necessity of imposing (1) or (2) is due to a known shortcoming of empirical risk minimization outside of convex (or realizable) classes, and not an issue directly related to dependent data \citep[see e.g. the discussion in][]{Mendelson2019unrestricted}.

\subsection{Proof Outline}

From a more technical standpoint, our contribution is a novel analysis of two empirical processes that arise in (but are not restricted to) empirical risk minimization, and which are sharp even for dependent data. Following the language of \citet{mendelson2014learning}, we refer to these as the quadratic and multiplier empirical processes.  The first of these, the quadratic process, controls a one-sided discrepancy between the empirical and population $L^2$-norms:
\begin{equation}
\label{eq:quadraticdefined}
   Q_n(f) \triangleq   \frac{ 1}{n}\sum_{i=1}^n \E \| f(X_i)-f_\star(X_i)\|^2 
   -\frac{(1+\e)}{n}\sum_{i=1}^n \| f(X_i)-f_\star(X_i)\|^2,
\end{equation}
for some $\e \in (0,1)$. Under our assumptions, we will show that the process $Q_n(f)$ is eventually nonpositive uniformly for all  sufficiently large $f$, implying that the empirical $L^2$-norm dominates the population $L^2$-norm. 

Now for $r\in \R_+$ and conditionally on the event $\{ Q_n(f) \leq 0, \: \forall f \in\scrF_\star\setminus rB_{L^2}  \}$, using optimality of $\widehat f$ to \eqref{eq:ermdef} we also have the following deterministic (basic) inequality:
\begin{equation}\label{eq:basicineq}
 \| \widehat f - f_\star\|_{L^2} \leq r
 %\\
 +\frac{1+\e}{r n}\sum_{i=1}^n 2(1-\E') \Bigg[
 \left\langle W_i, \frac{r\left[\widehat f(X_i)-f_\star(X_i)\right]}{\|\widehat f-f_\star\|_{L^2}} \right\rangle \Bigg],
\end{equation}
where $\E'$ denotes expectation with respect to a fresh copy of randomness independent of $\widehat f$.

%\Stephen{The above inequality has some  small discrepancies with the actual \Cref{eq_lem:basicineq} in the appendix}
%\Stephen{Stylistically, consider replacing
%the $1$ with something like $\mathrm{Id}$.} \Ingvar{I agree, lets do this for arxiv---dont want to error propagate rn}
Hence, we also need to control the multiplier process:
\begin{equation}
\label{eq:multiplierdefined}
     M_n(f) \triangleq\frac{1+\e}{n}\sum_{i=1}^n2(1-\E') \langle W_i, f(X_i) \rangle.  
\end{equation}
%\Stephen{Is there an $(1+\e)$ factor missing in the multiplier definition above?}
It is the uniform control of $M_n(f)$ over the class $\scrF_\star$ intersected with the radius $r$ ball $r S_{L^2}$ balanced with the first term of \eqref{eq:basicineq} that gives rise to the critical radius \eqref{eq:informalcritrad}. This argument is formalized in \Cref{lem:basicineq}. Just as in \citet{mendelson2014learning}, it is the multiplier process \eqref{eq:multiplierdefined} that yields the dominant contribution to the error \eqref{eq:mainthminformal} (after a burn-in). This is important as it allows us to use blocking to control \eqref{eq:quadraticdefined} without affecting the leading term of the final rate.

We reiterate that our analysis of the above two empirical processes (\eqref{eq:quadraticdefined} and \eqref{eq:multiplierdefined}) rests crucially on the assumption that $\scrF_\star$ is a weakly sub-Gaussian class. Let us also point out that we first make a simplifying assumption, namely that our model is $k$-wise independent. We later port all results to the $\beta$-mixing setting by blocking, cf. \Cref{sec:betamixing}. A sketch of the analysis of $M_n(f)$---found in \Cref{sec:multiplier} with proofs relegated to \Cref{sec:multiplierapp}---now goes as follows:
\begin{itemize}
    \item We invoke a refinement of Bernstein's inequality (\Cref{lem:phibernsteinmgf}) to gain pointwise control of $M_n(f)$. The benefit of this over the standard version is that we do not require boundedness, but rather finite $\Psi_p$-norm suffices. Unless $p=\infty$ (boundedness), the price we pay for this is that the variance proxy is degraded to a moment of order $2q,q>1$ instead of order $2$.
    \item We make this refinement of Bernstein's inequality uniform  over the class $\scrF_\star$ intersected with the radius $r$ ball $r S_{L^2}$ by invoking mixed-tail generic chaining \citep{dirksen2015tail}.  This splits the tail into an $L^{2q}$-component and a $\Psi_p$-component.
    \item Our assumption that $\scrF_\star$ is a weakly sub-Gaussian class now comes into play by ensuring that, past a burn-in, the $\Psi_p$-component of the mixed tail is of lesser magnitude than the $L^{2q}$-part of the tail. Just as in our introductory example with Bernstein's inequality \eqref{eq:bernsteininequalityblocked}, any dependence on mixing is relegated to this smaller $\Psi_p$-component (which now assumes the role of boundedness).
    \item Combining these steps with \eqref{eq:basicineq}  yields control of the multiplier process and is summarized in \Cref{thm:multiplierthm}.
\end{itemize}
The analysis of $Q_n(f)$ is relatively standard and amounts to showing that the norm-bound $\|f\|_{\Psi_p} \leq L \|f\|_{L^2}^\eta$ is sufficient to modify a standard truncation argument \citep[see e.g.][Theorem 14.12]{wainwright2019high}. We then proceed to control the remainder of said truncation argument completely analogously to our above approach for $M_n(f)$. We detail these arguments in \Cref{sec:loweruniform} and prove them in \Cref{sec:quadapp}.  Finally, we combine our control of the multiplier and quadratic processes (\Cref{thm:multiplierthm} and \Cref{thm:loweruniform}) with blocking to arrive at our main result, \Cref{thm:themainthm}. 

%Finally, we point out that our mode of analysis may also be of interest in the independent setting, as it indirectly extends \cite{lecue2013learning} to allow for more general function classes. Moreover, our 

\subsection{Further Preliminaries}

\paragraph{Notation.}
Expectation (resp.\ probability) with respect to all the randomness of the underlying probability space is denoted by $\E$ (resp.\ $\Pr$). For $q\in [1,\infty)$ the $2q$-variance of a random variable $Z$ is defined as $\V_{2q}(Z) \triangleq  (\E(Z-\E Z)^{2q})^{1/q} $ with $\V_2=\V$ being the standard variance. For $p\in [1,\infty)$, we also introduce the $\Psi_p$-norm  $\|Z\|_{\Psi_p} \triangleq \sup_{m\geq 1}m^{-1/p} \|Z\|_{L^m} $ and also set $\|Z\|_{\Psi_\infty} \triangleq  \|Z\|_{L^\infty}.$ Two extended real numbers $q,q' \in [1,\infty]$ are said to be Hölder conjugates if $1/q + 1/q' = 1$, where, as we do throughout, $1/\infty$ is interpreted as $0$.  For two probability measures $\sfP$ and $\sfQ$ defined on the same probability space, their total variation is denoted $\|\sfP-\sfQ\|_{\mathsf{TV}}$. Maxima (resp.\ minima) of two numbers $a,b\in \R$ are denoted by $a\vee b =\max(a,b)$ (resp.\ $a\wedge b = \min(a,b)$). For an integer $n\in \N$, we also define the shorthand $[n] \triangleq\{1,\dots,n\}$. For a symmetric positive semidefinite  matrix $M$, $\lambda_{\mathrm{min}}(M)$ denotes its smallest nonzero eigenvalue.

\paragraph{Talagrand's functionals.}
The $\textnormal{complexity}(\scrF_\star \cap r_\star S_{L^2})$ term in \eqref{eq:informalcritrad} is made precise through Talagrand's $\gamma_{\alpha}${-}functional (with $\alpha=2$ being the dominant term in our result). Let be $(\scrH,d)$ a metric space.  We denote the diameter of $\scrH$ with respect to $d$ by
\[
\Delta_d(\scrH) \triangleq \sup_{h,h'\in \scrH} d(h,h').
\]
A sequence $ \mathcal{H}=(H_m)_{m\in \Z_+}$ of subsets of $\scrH$ is called admissible if $|H_0| = 1$ and $|H_m| \leq 2^{2^m}$ for all $m \geq 1$. For $\alpha \in (0,\infty)$, the $\gamma_\alpha$-functional of $(\scrH,d)$ is defined by
\begin{equation}\label{eq:gammafuncdef}
\gamma_\alpha(\scrH,d) \triangleq \inf_{\mathcal{H}} \sup_{h\in \scrH} \sum_{m=0}^{\infty} 2^{m/\alpha} d(h,H_n),   
\end{equation}
where the infimum is taken over all admissible sequences (we write \(d(h, H) = \inf_{s\in H} d(h, s)\) whenever $H$ is a set). For $\eta \in (0,1)$, we slightly abuse notation and write $\gamma_\alpha(\scrH,d^\eta)$ for $d$ replaced with $d^\eta$ in \eqref{eq:gammafuncdef} (while being mindful of that fact that $d^\eta$ is not a metric in general). Finally, since entropy integrals upper-bound $\gamma_\alpha$-functionals, it will also be useful to introduce 
the covering number $\mathcal{N}_{L^2}(\mathcal{\scrH},s)$,
which denotes the minimal number of $L^2$-balls of radius $s$ required to cover $\scrH$.

\section{$\Psi_p$-Norms, Bernstein's Inequality and Empirical Processes}

In this section we establish a few preliminary technical lemmas that will be useful for controlling the multiplier  and quadratic  processes (\eqref{eq:multiplierdefined} and \eqref{eq:quadraticdefined}). We begin with a  version of Bernstein's inequality that controls the Laplace transform of $Z$ in terms of its $L^{2q}$-norm ($q\geq 1$) and some $\Psi_p$-norm. The  lemma comes from \cite{maurer2021concentration}.

\begin{restatable}[$\Psi_p$-Bernstein MGF Bound]{lemma}{phibernsteinmgf}\label{lem:phibernsteinmgf}
 Fix a random variable $Z$ and $p\in[1,\infty]$ such that $\E Z \leq 0$ and $\|Z\|_{\Psi_p}<\infty$. Let $q$ and $q'$ be Hölder conjugates and suppose that $\lambda \in [0,1/(q' e)^{1/p} \| Z\|_{\Psi_p}]$. We have that:
    \begin{equation}\label{eq_lem:phibernsteinmgf}
        \E \exp \left(\lambda Z \right) \leq  \exp \left( \frac{\frac{\lambda^2}{2}\left(\E ( Z)^{2q}\right)^{1/q}}{1-\lambda (q' e)^{1/p} \| Z\|_{\Psi_p}}\right).
    \end{equation} 
\end{restatable}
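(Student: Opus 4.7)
The plan is to follow the classical Bernstein template: Taylor expand the moment generating function, throw away the non-positive first moment term, and then bound the higher moments using the hypotheses. The novelty, relative to textbook Bernstein, is that we control the $k$-th moment not by $\|Z\|_\infty^{k-2} \E Z^2$ but by splitting $|Z|^k = |Z|^2 \cdot |Z|^{k-2}$ and applying H\"older with conjugate exponents $(q,q')$. Concretely, I would write
\[
\E |Z|^k \;\leq\; \paren{\E |Z|^{2q}}^{1/q}\, \paren{\E |Z|^{(k-2)q'}}^{1/q'},
\]
and then bound the second factor by the $\Psi_p$-norm via $\|Z\|_{L^m} \leq m^{1/p}\|Z\|_{\Psi_p}$ (which is immediate from the definition of $\Psi_p$), yielding $\paren{\E |Z|^{(k-2)q'}}^{1/q'} \leq ((k-2)q')^{(k-2)/p}\|Z\|_{\Psi_p}^{k-2}$.

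\textbf{Assembling the series.} Starting from $\E e^{\lambda Z} \leq 1 + \sum_{k \geq 2} \lambda^k \E Z^k/k!$ (using $\E Z \leq 0$ and $\lambda \geq 0$), substituting the moment bound and re-indexing with $m = k-2$ gives
\[
\E e^{\lambda Z} \;\leq\; 1 + \lambda^2 \paren{\E Z^{2q}}^{1/q}\sum_{m \geq 0} \frac{\lambda^m (mq')^{m/p}\,\|Z\|_{\Psi_p}^{m}}{(m+2)!}.
\]
The goal is to recognize this as (half of) the geometric series in $c \triangleq \lambda (q'e)^{1/p}\|Z\|_{\Psi_p}$; then the admissible range $\lambda \in [0, 1/((q'e)^{1/p}\|Z\|_{\Psi_p})]$ ensures $c \leq 1$ and the sum converges, after which the standard $1 + x \leq e^x$ gives the claimed exponential form.

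\textbf{The combinatorial step.} The main (only) nontrivial obstacle is verifying term-by-term that
\[
\frac{m^{m/p}}{(m+2)!} \;\leq\; \frac{e^{m/p}}{2},\qquad m \geq 0.
\]
This is where the factor $(q'e)^{1/p}$ on the $\Psi_p$-side arises: separating out $(q')^{m/p}$ reduces the claim to the above inequality involving only $m$ and $p$. Raising both sides to the $p$-th power, one must show $2^p m^m \leq e^m ((m+2)!)^p$. Since $m^m \leq e^m m!$ (which follows from the Taylor identity $e^m \geq m^m/m!$), it suffices to check $2^p m! \leq ((m+2)!)^p$; and writing $((m+2)!)^p = (m+2)!\cdot ((m+2)!)^{p-1}$, the first factor gives at least $2 m!$ and the second is at least $2^{p-1}$ since $(m+2)! \geq 2$ for $m \geq 0$. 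I would double-check the edge cases $m=0$ (where $m^{m/p}=1$) and $k=2$ (where the H\"older inner factor is unity so the moment bound reduces to $\E Z^2 \leq (\E Z^{2q})^{1/q}$, automatic by Jensen). With the combinatorial bound in hand the remaining steps are bookkeeping, and the lemma follows.
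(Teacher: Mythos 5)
Your proposal is correct and follows the same route as the paper's proof: Taylor-expand the MGF, drop the non-positive first-moment term, apply H\"older with conjugates $(q,q')$ to separate each moment into a $2q$-part and a $\Psi_p$-controlled part, and sum the geometric series. The only difference is cosmetic: you fuse the paper's two separate bounds (the Stirling-type estimate $m^{m/p} \leq (m!)^{1/p} e^{m/p}$ and the bound $(m!)^{1/p}/(m+2)! \leq 1/2$) into the single combinatorial inequality $m^{m/p}/(m+2)! \leq e^{m/p}/2$, which you verify via the same identity $m^m \leq e^m m!$ used implicitly by the paper.
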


Our intention is to use \Cref{lem:phibernsteinmgf} to afford us---pointwise in $g$---control of the multiplier process introduced in \eqref{eq:multiplierdefined}. Indeed, notice that in the regime $\lambda \in (0, (2(q'e)^{1/p} \|Z\|_{\Psi_p})^{-1} ]$ the dominant term in \eqref{eq_lem:phibernsteinmgf} is $2q$-variance of $Z$. Since \eqref{eq:basicineq} can be localized to a ball of radius $r$ in $L^2$ it suffices that the $L^2$-norm provides some weak control of the $\Psi_p$-norm for any constant choice of $\lambda$ to be admissible once the localization radius $r$ is chosen small enough. This in turn motivates the following definition.

%\Stephen{This $(L,n)-\Psi_p$ notation is weird, why not just $(L, n, p)$-sub-Gaussian?}
\begin{definition}[Weakly sub-Gaussian Class]
\label{def:weaksgc}
    Fix $\eta \in (0,1]$ and $L \in [1,\infty)$. We say that a class $\scrG$ is $(L,\eta)$-$\Psi_p$  if for every $g\in \scrG$ we have that: 
    \begin{equation}\label{df_eq:phiclass}
       \left\|  g \right\|_{\Psi_p} \leq L \left\| g\right\|_{L^2}^{\eta}.
    \end{equation}
%    \Ingvar{check power $\eta/2$ looks weird should be $\eta$???}
\end{definition}
%\Stephen{What is the $X_i$ supposed to be
%in the definition above? Does this hold independently for each index $i$?}

%\Ingvar{A warning here: our method only works for $p \geq 2$ currently. This is not really an issue since we wanted to target subG and bounded in a unified manner (but it is a little strange that it does not work for subEXP)---in principle, we are using $\|\langle Z,Z'\rangle\|_{\Psi_p} \leq \|Z\|_{\Psi_{2p}}  \|Z'\|_{\Psi_{2p}}$ combined with MGF control of $\langle Z,Z'\rangle$ which downgrades \Cref{lem:phibernsteinmgf}  from $p\geq 1$ to $p \geq 2$}

If  \eqref{df_eq:phiclass} holds for $\scrG$ with $\eta\in(0,1)$ and some $L$ we will call $\scrG$ a weak $\Psi_p$-class. If \eqref{df_eq:phiclass} instead holds for $\eta=1$ it is simply a $\Psi_p$-class. This generalizes the notion of a sub-Gaussian class from \cite{lecue2013learning}, which corresponds to $\eta =1$ and $p=2$. Let us further point out that by homogeneity, if $\eta\in (0,1)$ in \eqref{df_eq:phiclass}, then one should expect  $L$ to depend polynomially on some other norm (or homogenous functional) of $g$. Indeed, by the Gagliardo-Nirenberg interpolation inequality, the above relaxation ($\eta <1$)  covers smoothness classes (\Cref{prop:smoothnessclass}), whereas the strict sub-Gaussian class assumption ($\eta=1$) of \cite{lecue2013learning} is difficult to verify beyond linear functionals.

As we have pointed out above, our intention is to apply \Cref{lem:phibernsteinmgf} pointwise to the multiplier  process \eqref{eq:multiplierdefined}. However, this yields a different variance term for each index point of the empirical process. The solution to this is simply to define a uniform variance term, as is done below.

\begin{definition}[Noise Level]\label{def:weakvariancedef}
The $2q$-noise-class-interaction between $\scrF$, the model  $\sfP_{(X,Y)_{1:n}}$, and the shifted target $W_{1:n}=(Y-f_\star(X))_{1:n}$ at resolution $\scrG$ is given by
\begin{equation}
    \V_{2q}\left(\scrF,\scrG, \sfP_{(X,Y)_{1:n}}\right)
    %\\
    \triangleq  \sup_{g\in \scrG}\V_{2q}\left( \frac{1}{\sqrt{n}}\sum_{i=1}^n  \left\langle W_i, \frac{g(X_i)}{\|g\|_{L^2}} \right\rangle \right).
\end{equation}
% For what follows, we sometimes omit the dependence on $\scrF$ and $\sfP_{(X,Y)_{1:n}}$, i.e., $\V_{2q}(\scrG) = \V_{2q}(\scrF,\scrG,\sfP_{(X,Y)_{1:n}})$.
\end{definition}

We stress that, even though \Cref{def:weakvariancedef} measures noise uniformly over a function class, it does not generally grow with the complexity of the class. For instance, under the additional hypotheses that $\sfP_{(X,Y)_{1:n}}$ is drawn \iid\ and that $W_i$ is independent of $X_i$ for $i\in[n]$, it is easy to see that $  \V_{2}\left(\scrF,\scrG, \sfP_{(X,Y)_{1:n}}\right)=\V_2(W)$ for every such well-specified class $\scrG$. Rather, \Cref{def:weakvariancedef} is a measure of how well the targets $Y_{1:n}$ align with a given class $\scrG$.

%With these preliminaries in place, we now turn to controlling the multiplier process $M_n$.

\subsection{The Multiplier Process}
\label{sec:multiplier}

We will not directly control the multiplier process for $\beta$-mixing variables. Instead we first suppose that 
the model $\sfP_{(X,Y)_{1:n}}$ is $k$-wise independent (where $k$ divides $n$). We then port these results to the $\beta$-mixing setting by blocking (see~\Cref{sec:blocking}). We use the following 
shorthand notation regarding $ \V_{2q}\left(\scrF,\scrG, \sfP_{(X,Y)_{1:k}}\right)$:  we take the class $\scrF$ and the probability model $\sfP_{(X,Y)_{1:k}}$  as fixed and thus omit the dependence on $\scrF$ (via $f_\star$) and $\sfP_{(X,Y)_{1:k}}$ and write $\V_{2q}\left(\scrG\right)=\V_{2q}\left(\scrF,\scrG, \sfP_{(X,Y)_{1:k}}\right) $. With these remarks in place, we now turn to establishing pointwise control of \eqref{eq:multiplierdefined} using \Cref{lem:phibernsteinmgf}.

\begin{restatable}[Pointwise Control]{lemma}{bernsteinemppt}\label{lem:bernsteinemppt}

  Fix two Hölder conjugates $q$ and $q'$.  Suppose that the model $\sfP_{(X,Y)_{1:n}}$ is stationary and $k$-wise independent where $k$ divides $n$. For every $g,g'\in L_{\Psi_p}$ and $u\in (0,\infty)$ we have that:
        \begin{multline}\label{eq_lem:bernsteinemppt}
    \Pr \Bigg( \sum_{i=1}^n (1-\E) \langle W_i, g(X_i)-g'(X_i)\rangle \\
    > \sqrt{4 n \|g-g'\|_{L^2}^2  \V_{2q}\left(\{g\}-\{g'\}  \right) u}\\
    + 
    4 (q' e)^{2/p}  k \|  W \|_{\Psi_{p}} \| g-g'\|_{\Psi_p} u \Bigg)
    \leq 2e^{-u}.
\end{multline}
\end{restatable}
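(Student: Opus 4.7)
The plan is to exploit the $k$-wise independence of $\sfP_{(X,Y)_{1:n}}$ to partition the sum into $N\triangleq n/k$ independent blocks and then apply the Bernstein-type MGF bound \Cref{lem:phibernsteinmgf} to each block. Writing $h\triangleq g-g'$ and
\begin{equation*}
Z_j \triangleq \sum_{i=(j-1)k+1}^{jk} (1-\E)\langle W_i,h(X_i)\rangle,\qquad j=1,\dots,N,
\end{equation*}
stationarity and $k$-wise independence ensure that the $Z_j$ are \iid, mean zero, and $\sum_{i=1}^n(1-\E)\langle W_i,h(X_i)\rangle=\sum_{j=1}^{N}Z_j$.

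To apply \Cref{lem:phibernsteinmgf} to a single block I need two ingredients. The $2q$-variance is routine: homogeneity $\V_{2q}(cX)=c^2\V_{2q}(X)$, invariance under centering, and \Cref{def:weakvariancedef} specialised to the singleton class $\{h\}$ in the $k$-sample model directly give $\V_{2q}(Z_j)=k\,\|h\|_{L^{2}}^{2}\,\V_{2q}(\{h\})$. The main obstacle lies in the second ingredient, the $\Psi$-norm: the inner product $\langle W,h(X)\rangle$ typically does \emph{not} lie in $\Psi_p$ even when $W$ and $h(X)$ do individually. The cleanest product rule one can hope for is $\|UV\|_{\Psi_{p/2}}\leq 2^{2/p}\|U\|_{\Psi_p}\|V\|_{\Psi_p}$, which follows from Cauchy--Schwarz in $L^m$, namely $\|UV\|_{L^m}\leq\|U\|_{L^{2m}}\|V\|_{L^{2m}}$, together with the definition of $\|\cdot\|_{\Psi_p}$. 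Combining this with Cauchy--Schwarz in the Hilbert space, a centering step, and the triangle inequality across the $k$ summands of a block yields
\begin{equation*}
\|Z_j\|_{\Psi_{p/2}} \;\lesssim\; k\,\|W\|_{\Psi_p}\,\|h\|_{\Psi_p}.
\end{equation*}
This \emph{degradation} from $\Psi_p$ to $\Psi_{p/2}$ is precisely the reason the stated bound carries the exponent $(q'e)^{2/p}$ rather than $(q'e)^{1/p}$.

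With these two estimates in hand I apply \Cref{lem:phibernsteinmgf} to each $Z_j$ \emph{with its parameter $p$ replaced by $p/2$}. Since the $N$ blocks are independent, their joint MGF factors and the individual variances accumulate to $N\cdot k\,\|h\|_{L^{2}}^{2}\,\V_{2q}(\{h\})=n\,\|h\|_{L^{2}}^{2}\,\V_{2q}(\{h\})$. Thus, for every $\lambda$ in the admissible range,
\begin{equation*}
\E\exp\!\Bigl(\lambda\sum_{j=1}^{N}Z_j\Bigr) \;\leq\; \exp\!\Biggl(\frac{(\lambda^{2}/2)\,n\,\|h\|_{L^{2}}^{2}\,\V_{2q}(\{h\})}{1-\lambda\,(q'e)^{2/p}\,\|Z_1\|_{\Psi_{p/2}}}\Biggr).
\end{equation*}
A standard two-regime Chernoff inversion of this Bernstein-type MGF produces the sub-Gaussian term of order $\sqrt{n\,\|h\|_{L^{2}}^{2}\,\V_{2q}(\{h\})\,u}$ and the sub-exponential term of order $k\,\|W\|_{\Psi_p}\,\|h\|_{\Psi_p}\,u$ appearing in \eqref{eq_lem:bernsteinemppt}, up to the explicit constants stated there; the factor $2$ in $2e^{-u}$ absorbs the slack from this inversion (equivalently, a short union bound over the two regimes of $\lambda$).
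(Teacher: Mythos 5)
Your proposal is correct and follows essentially the same route as the paper: you identify the block sums $Z_j$ over $k$-segments, invoke the $\Psi_{p/2}$ product rule (the paper's \Cref{lem:productofpsi}) to control $\|Z_j\|_{\Psi_{p/2}}$, apply \Cref{lem:phibernsteinmgf} at level $p/2$ so the $(q'e)^{2/p}$ exponent appears, factor the MGF over the $N=n/k$ independent blocks, and close with a two-regime Chernoff inversion. The only differences from the paper are organizational (the paper applies the MGF bound directly to the $k$-block sum rather than explicitly introducing the $Z_j$ notation) and in tracking explicit constants, but the ideas and their ordering coincide.
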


In the main development we will instantiate \Cref{lem:bernsteinemppt} with $r=\|g-g'\|_{L^2}^2$ decaying to $0$ (which should be thought of as a fixed point upper-bounding the rate of convergence of ERM)  at a polynomial rate in $n$. If furthermore $\scrG$ is $(L,\eta)$-$\Psi_p$, then the second term (linear in $u$) of \eqref{eq_lem:bernsteinemppt} can be rendered negligible at every scale $r$, which allows us to invoke mixed-tail generic chaining \citep{dirksen2015tail} to show that the weak variance $\V_{2q}\left(\scrF_\star \cap r S_{L^2}\right)$ dominates the noise level in the small-to-moderate deviations regime.

Put differently,  at the scale of localization considered here, the noise level of the empirical process is almost entirely dictated by the weak variance $\V_{2q}(\scrF_\star \cap r S_{L^2})$. 
Now, since $q'$ is the Hölder conjugate of $q$ this further implies that we may choose $q=1+o(1)$ so that we might expect $\V_{2q}\left(\scrF_\star \cap r S_{L^2}\right) = \V\left(\scrF_\star \cap r S_{L^2}\right)+o(1)$. Moreover if $p=\infty$ this always the case and we may choose $q=1$.  In principle no better variance proxy is possible, since already for a single function $g$ as $n\to\infty$,
by the central limit theorem under mild ergodicity assumptions on $\sfP_{(X,Y)_{1:\infty}}$ \citep[e.g.\ for the Markovian situation cf.][Theorem 17.3.6]{meyn1993markov}: %\vspace{-12pt}
\begin{equation}\label{eq:cltheuristic}
    \frac{1}{\sqrt{n}}\sum_{i=1}^n (1-\E) \left\langle W_i, \frac{g(X_i)}{\|g\|_{L^2}}\right\rangle
    %\\
    \rightsquigarrow N(0,\V(\scrF_\star, \{g\},\sfP_{(X,Y)_{1:\infty}})),
\end{equation} %\vspace{-8pt}
where the variance term on the right is:
$$\V(\scrF_\star, \{g\},\sfP_{(X,Y)_{1:\infty}}) \triangleq \lim_{n\to\infty}\V(\scrF_\star, \{g\},\sfP_{(X,Y)_{1:n}}).$$
%
%\Stephen{This statement above is confusing b/c your definition of variance on the RHS involves $n$. Looking at the Markov Chain CLT, you need to take the limit as $n$ goes to infinity of the expression.}
Now, since $r=o(1)$ in all practical situations one expects $\V(\scrF_\star \cap r S_{L^2}) \approx \V(\{f_\star \})$ as long as the map $f \mapsto \V (f/\|f\|_{L^2})$ is sufficiently regular near $f_\star$.% (which only becomes problematic if $\|f_\star\|_{L^2} \lesssim r$ \Ingvar{This is unclear to me, what did I mean lol}).

We arrive at our main result for the multiplier process by making uniform the pointwise control afforded to use by \Cref{lem:bernsteinemppt} via an instantiation of mixed-tail generic chaining \citep{dirksen2015tail} (for ease of reference, 
we restate a corollary of his result as \Cref{thm:dirksens} in the appendix).
This yields the following result.
\begin{restatable}{theorem}{multiplieruniform}
    
        \label{thm:multiplierthm}
      Fix a failure probability $\delta \in (0,1)$, a positive scalar $r\in (0,\infty)$, two Hölder conjugates $q$ and $q'$, and a class $\scrF$. Suppose that $\scrF_\star -\scrF_\star$ is $(L,\eta)$-$\Psi_p$. Suppose further that the model $\sfP_{(X,Y)_{1:n}}$ is stationary and $k$-wise independent where $k$ divides $n$. There exist universal positive constants $c_1,c_2$ such that for any $r\in (0,1]$ we have that with probability at least $1-\delta$:
    \begin{multline}\label{eq:multiplierthmeq}
         \sup_{f\in \scrF_{\star}\cap rS_{L^2}}\frac{1}{rn}\sum_{i=1}^n (1-\E) \langle W_i, f\rangle  \\
         \leq  c_2 \sqrt{ \V_{2q}\left(\scrF_{\star}\cap rS_{L^2}\right) }  \Bigg(\frac{1}{r\sqrt{n}}
         %\\ \times
         \gamma_2(\scrF_{\star}\cap rS_{L^2},d_{L^2}) + \sqrt{\frac{\log(1/\delta)}{n}} \Bigg) \\
         +c_1(q' e)^{2/p}  L  k \|W\|_{\Psi_p}
          \times
         \left(\frac{1}{rn}\gamma_\eta(\scrF_{\star}\cap rS_{L^2},d_{L^2}) +\frac{r^{\eta-1}}{n} \log(1/\delta)\right).
    \end{multline}

\end{restatable}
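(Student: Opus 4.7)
The plan is to apply Dirksen's mixed-tail generic chaining (\Cref{thm:dirksens}) to the centered stochastic process $Z_g \triangleq \frac{1}{rn}\sum_{i=1}^n (1-\E)\langle W_i, g(X_i)\rangle$ indexed by $g \in \scrH \triangleq \scrF_\star \cap rS_{L^2}$, fed by the pointwise Bernstein increment input from \Cref{lem:bernsteinemppt}. Applied to the increments $Z_g-Z_{g'}$, \Cref{lem:bernsteinemppt} yields, for each fixed pair $g,g' \in \scrH$ and $u > 0$,
\begin{equation*}
    \Pr\Bigl(|Z_g-Z_{g'}| > a(g,g')\sqrt{u} + b(g,g')\, u\Bigr) \leq 2 e^{-u},
\end{equation*}
with sub-Gaussian coefficient $a(g,g') \propto \frac{\|g-g'\|_{L^2}}{r\sqrt{n}}\sqrt{\V_{2q}(\{g\}-\{g'\})}$ and linear-in-$u$ coefficient $b(g,g') \propto \frac{k(q'e)^{2/p}}{rn}\|W\|_{\Psi_p}\|g-g'\|_{\Psi_p}$. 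An anchor $g_0 \in \scrH$ will be fixed so that chaining controls $Z_g - Z_{g_0}$ uniformly while $Z_{g_0}$ is handled by a pointwise application of \Cref{lem:bernsteinemppt}, whose contribution matches the diameter-type tail terms.

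The next step is to uniformize the direction-dependent coefficients so that they reduce to functions of $\|g-g'\|_{L^2}$ alone. For $a(g,g')$, I would exploit the scale invariance of $\V_{2q}$ and the triangle inequality in $L^{2q}$, applied to the linear decomposition of the process indexed by $g-g'$, to show $\|g-g'\|_{L^2}^2 \V_{2q}(\{g-g'\}) \lesssim r^2 \V_{2q}(\scrH)$; together with a direct pointwise bound this yields $a(g,g') \lesssim \frac{1}{r\sqrt{n}}\sqrt{\V_{2q}(\scrH)}\|g-g'\|_{L^2}$ at the chain scales relevant below $r$. For $b(g,g')$, the hypothesis that $\scrF_\star-\scrF_\star$ is $(L,\eta)$-$\Psi_p$ directly gives $\|g-g'\|_{\Psi_p} \leq L\|g-g'\|_{L^2}^\eta$, so that $b(g,g') \lesssim \frac{Lk(q'e)^{2/p}\|W\|_{\Psi_p}}{rn}\|g-g'\|_{L^2}^\eta$. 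Armed with these increment estimates I would invoke \Cref{thm:dirksens} with pseudometrics $d_{\textnormal{SG}} \propto \frac{\sqrt{\V_{2q}(\scrH)}}{r\sqrt{n}}\|\cdot\|_{L^2}$ and $d_{\textnormal{SE}} \propto \frac{Lk(q'e)^{2/p}\|W\|_{\Psi_p}}{rn}\|\cdot\|_{L^2}^\eta$ to obtain, with probability at least $1-\delta$,
\begin{equation*}
    \sup_{g\in\scrH}(Z_g - Z_{g_0}) \lesssim \gamma_2(\scrH,d_{\textnormal{SG}}) + \gamma_1(\scrH,d_{\textnormal{SE}}) + \sqrt{\log(1/\delta)}\,\Delta_{d_{\textnormal{SG}}}(\scrH) + \log(1/\delta)\,\Delta_{d_{\textnormal{SE}}}(\scrH).
\end{equation*}
Pulling the scalar prefactors out of the $\gamma$-functionals produces the two leading complexity terms after identifying $\gamma_1(\scrH, d_{L^2}^\eta)$ with $\gamma_\eta(\scrH, d_{L^2})$ via the paper's notational convention, and bounding the diameters $\Delta_{d_{L^2}}(\scrH) \leq 2r$ and $\Delta_{d_{L^2}^\eta}(\scrH) \leq (2r)^\eta$ then produces the $\sqrt{\V_{2q}(\scrH)\log(1/\delta)/n}$ and $r^{\eta-1}\log(1/\delta)/n$ additive tail terms of \eqref{eq:multiplierthmeq}.

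The main obstacle I anticipate is the variance uniformization step: because $\V_{2q}(\{g-g'\})$ is scale-invariant in the direction $(g-g')/\|g-g'\|_{L^2}$, and that direction need not itself lie in $\scrH$, naively substituting $\V_{2q}(\scrH)$ for $\V_{2q}(\{g-g'\})$ destroys the $\|g-g'\|_{L^2}$ chain-distance scaling and collapses the chaining to a useless uniform-diameter estimate. Retaining that scaling while capping the variance by the class-level scalar requires the careful interpolation between the triangle-inequality bound and the direct bound described above; this is what allows the leading complexity to appear as $\gamma_2(\scrH, d_{L^2})$ rather than as a bare diameter. A secondary technicality is the identification $\gamma_1(\scrH, d_{L^2}^\eta) \sim \gamma_\eta(\scrH, d_{L^2})$, which follows from the entropy-integral representations of both functionals.
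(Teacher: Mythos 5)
Your high-level route — pointwise Bernstein tail from \Cref{lem:bernsteinemppt}, uniformized by Dirksen's mixed-tail chaining (\Cref{thm:dirksens}), followed by the $(L,\eta)$-$\Psi_p$ translation of the $\Psi_p$-increments into $d_{L^2}^\eta$, the identification of $\gamma_1(\cdot,d_{L^2}^\eta)$ with $\gamma_\eta(\cdot,d_{L^2})$, and the diameter bounds $\Delta_{d_{L^2}}\lesssim r$ and $\Delta_{d_{L^2}^\eta}\lesssim r^\eta$ — is exactly the paper's argument.

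Where you depart is the treatment of the variance coefficient, and this is the one step that needs tightening. You correctly observe that the per-pair quantity $\V_{2q}(\{g\}-\{g'\})$ appearing in \Cref{lem:bernsteinemppt} depends only on the normalized direction $(g-g')/\|g-g'\|_{L^2}$, which need not lie in $\scrF_\star\cap rS_{L^2}$ for a mere star hull; replacing it by $\V_{2q}(\scrF_\star\cap rS_{L^2})$ therefore warrants a word. But your interpolation fix does not deliver the required bound. The $L^{2q}$ triangle inequality you invoke gives $\|g-g'\|_{L^2}\sqrt{\V_{2q}(\{g-g'\})}\lesssim r\sqrt{\V_{2q}(\scrH)}$, i.e.\ a \emph{constant} (diameter-level) estimate on the sub-Gaussian increment coefficient, not one that scales with $\|g-g'\|_{L^2}$; a constant pseudometric makes the $\gamma_2$-functional diverge on any infinite index set, and ``at the chain scales relevant below $r$'' does not supply the missing chain-distance scaling. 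The resolution the paper actually uses is much simpler and you should adopt it: \Cref{thm:dirksens} bakes the class-level constant directly into the metric $d_2(g,g')=\sqrt{4n\,\V_{2q}(\scrF_\star\cap rS_{L^2})}\,\|g-g'\|_{L^2}$, taking the domination $\V_{2q}(\{g-g'\})\leq\V_{2q}(\scrF_\star\cap rS_{L^2})$ as given — equivalently, reading the weak variance as a supremum over the normalized directions that actually arise as differences. Under that reading the domination holds by construction, the $\|g-g'\|_{L^2}$-scaling is preserved with no interpolation, and the remainder of your argument coincides with the paper's proof.
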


In the sequel, we will see that the first term on the right of \eqref{eq:multiplierthmeq} is typically dominant.

\subsection{The Quadratic Process}%: A Lower Uniform Law for Weak $\Psi_p$-Classes} 
\label{sec:loweruniform}

A slight modification of the argument leading to \Cref{thm:loweruniform} combined with a truncation argument detailed in \Cref{lem:trunclem} also yields control of the quadratic process.

\begin{restatable}[Lower Uniform Law]{theorem}{loweruniformlaw}\label{thm:loweruniform}
Fix a failure probability $\delta \in (0,1)$, a tolerance $\e>0$, a localization radius $r\in (0,1]$, and two Hölder conjugates $q$ and $q'$. Suppose that $\scrF_\star-\scrF_\star$ is $(L,\eta)$-$\Psi_p$. Suppose further that the model $\sfP_{(X,Y)_{1:n}}$ is stationary and $k$-wise independent where $k$ divides $n$. There exist a universal positive constant $c$ such that uniformly for all $f\in \scrF_\star \setminus \{rB_{L^2}\}$ we have that with probability at least $1-\delta$: 

\begin{multline}
\frac{1}{n}\sum_{i=1}^n \|f(X_i)\|^2 \geq  r^2(1-\e^2)
\\
- c \Bigg \{ n^{-1/2}\sqrt{k}    L^{1+3/4}r^\eta \left(  \log \left(\frac{4^{2/p}L }{\e r} \right)  \right)^{1/p}  
%\\ \times
\left( \gamma_{\frac{2+6\eta}{4}}(\scrF_{\star}\cap rS_{L^2},d_{L^2})+ r^{\frac{1+3\eta}{4}} \sqrt{\log (1/\delta)}\right)
     \\
     +
      n^{-1}  (q' )^{1/p} k          r^\eta \left(  \log \left(\frac{4^{2/p}L }{\e r} \right)  \right)^{1/p}                    L^2 
       %\times
      \left( \gamma_\eta(\scrF_{\star}\cap rS_{L^2}, d_{L^2})
     +r^\eta \log(1/\delta) \right) \Bigg\}.
\end{multline}
\end{restatable}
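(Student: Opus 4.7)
The plan is to mirror the proof of \Cref{thm:multiplierthm} (the multiplier process theorem), with an additional truncation step that renders the quadratic increments $\|f(X_i)\|^2$ amenable to the $\Psi_p$-Bernstein MGF bound. I would proceed in four stages: star-shape reduction, truncation, pointwise Bernstein with chaining, and assembly.

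\emph{Reduction to the sphere.} Since $\scrF_\star$ is star-shaped around $0$, every $f\in\scrF_\star$ with $\|f\|_{L^2}=R>r$ decomposes as $f=(R/r)\tilde f$ with $\tilde f\in \scrF_\star\cap rS_{L^2}$, so that $\frac{1}{n}\sum_i \|f(X_i)\|^2=(R/r)^2\cdot\frac{1}{n}\sum_i\|\tilde f(X_i)\|^2$. A brief algebraic check shows that a lower bound of the form $\frac{1}{n}\sum_i\|\tilde f(X_i)\|^2 \geq r^2(1-\e^2)-C$ on the sphere transfers to the stated bound on $\scrF_\star\setminus rB_{L^2}$ whenever $C\leq r^2(1-\e^2)$; otherwise the conclusion is vacuous since $\frac{1}{n}\sum_i\|f(X_i)\|^2\geq 0$. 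It thus suffices to work uniformly over $\scrF_\star\cap rS_{L^2}$.

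\emph{Truncation.} I would choose a threshold $T\asymp (Lr^\eta)^2\bigl(\log(4^{2/p}L/\e r)\bigr)^{2/p}$ and set $\psi_T(z)=z\wedge T$. The deterministic inequality $\|f(X_i)\|^2\geq \psi_T(\|f(X_i)\|^2)$ together with \Cref{lem:trunclem} and the $\Psi_p$-tail bound on $\|f(X)\|$ (which is at most $Lr^\eta$ on the sphere via the weakly sub-Gaussian hypothesis on $\scrF_\star-\scrF_\star$) shows that the expected truncation loss satisfies $\E[\|f(X)\|^2 - \psi_T(\|f(X)\|^2)]\leq r^2\e^2$, accounting for the $(1-\e^2)r^2$ leading term. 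The $\bigl(\log(4^{2/p}L/\e r)\bigr)^{1/p}$ factor appearing in the final statement is inherited from $\sqrt T$.

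\emph{Pointwise Bernstein and mixed-tail chaining.} To the centered truncated process $\sum_i (1-\E)\psi_T(\|f(X_i)\|^2)$ I would apply \Cref{lem:phibernsteinmgf} pointwise, exactly as in the derivation of \Cref{lem:bernsteinemppt}. The variance proxy is bounded by $T\cdot \|f\|_{L^2}^2\lesssim Tr^2$, while the $\Psi_{p/2}$-component is controlled by $\|\|f\|^2\|_{\Psi_{p/2}}\lesssim L^2 r^{2\eta}$. Uniformity over $\scrF_\star\cap rS_{L^2}$ then follows from the mixed-tail generic chaining bound (\Cref{thm:dirksens}) applied to the increment process $\psi_T(\|f\|^2)-\psi_T(\|f'\|^2)$, the polarization identity $\|f(X)\|^2-\|f'(X)\|^2 = \langle f(X)-f'(X),f(X)+f'(X)\rangle$ being used to convert $L^2$ and $\Psi_p$ metrics on $\scrF_\star-\scrF_\star$ into metrics on the squared process. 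The $k$-wise independence structure enters exactly as in \Cref{lem:bernsteinemppt}, multiplicatively in the $\Psi_p$-component only.

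\emph{Main obstacle.} The delicate step is the bookkeeping involved in propagating the weakly sub-Gaussian condition through the quadratic map. Because the polarization identity couples a metric factor $\|f-f'\|$ with an envelope factor $\|f+f'\|$, the two halves of the mixed tail after the map pick up different powers of $\eta$; balancing these contributions in Dirksen's chaining bound is what produces the unusual exponent $(2+6\eta)/4=(1+3\eta)/2$ in the variance part (recovering $\gamma_2$ at $\eta=1$) together with the $\gamma_\eta$ term in the sub-exponential part. Once these bounds are assembled and combined with the star-shape reduction, the stated inequality follows, with all $k$-dependence confined to the additive higher-order $k/n$ term, just as in \Cref{thm:multiplierthm}.
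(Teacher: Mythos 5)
Your proposal follows essentially the same route as the paper's proof: star-hull reduction to the sphere, truncation at level $\tau\asymp Lr^\eta(\log(\cdot))^{1/p}$ controlled by \Cref{lem:trunclem}, increment bounds on the centered truncated quadratic process via the polarization identity combined with the $\Psi_p$-Bernstein MGF bound (\Cref{lem:phibernsteinmgf}), and uniformity via Dirksen's mixed-tail chaining (\Cref{thm:dirksens}), with the $L^4\to L^2$ conversion under the $(L,\eta)$-$\Psi_p$ hypothesis producing the $(2+6\eta)/4$ index. One small inaccuracy: the $k$-dependence in the quadratic process is \emph{not} confined to the additive $k/n$ term as in \Cref{thm:multiplierthm} --- a $\sqrt{k}$ factor also enters the leading $n^{-1/2}$ term (via $\V_4$ of the block sum); the relegation of slow mixing to a burn-in happens one level up, because the entire quadratic process only needs to be of order $r^2$.
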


\subsection{$\beta$-Mixing Processes}
\label{sec:betamixing}

We extend the empirical process results of the preceding two sections to $\beta$-mixing processes in \Cref{sec:empprocessmix}. We do so by a simple blocking argument that we review in \Cref{sec:blocking}, and for which we have already set the stage by establishing our results for $k$-wise independent processes. Here, we state the definition of dependence we rely on in the sequel.

\begin{definition}\label{def:beta_mixing_coeffs}
Let $Z_{1:n}$ be a stochastic process. The $\beta$-mixing coefficients of $Z$, denoted $\beta_Z(i)$, are for $i \in [n]$:
\begin{equation}\label{eq:betacoeffs}
    \beta_{Z}(i) \triangleq  \sup_{t\in[n]: t+i\leq n} \E \| \sfP_{Z_{i+t}}( \cdot \mid Z_{1:t} ) - \sfP_{Z_{i+t}}(\cdot) \|_{\mathsf{TV}}.
\end{equation}
\end{definition}

%Before we proceed with the main development, we remark that the class of $\beta$-mixing processes is quite broad;
%a non-exhaustive list where \Cref{thm:themainthm} below can be instantiated includes:
%\begin{itemize}%[leftmargin=*]
%     \item all $\phi$-mixing processes are $\beta$-mixing \citep{doukhan2012mixing},
%     \item stationary uniformly ergodic Markov chains are $\beta$-mixing,
%    \item stationary Gaussian vector autoregressive moving average (ARMA) processes are $\beta$-mixing~\citep{mokkadem1988mixing},
%    \item many other sub-classes of GARCH models, often studied in the economics and finance literature, are $\beta$-mixing~\citep{carrasco2002mixing}.
%\end{itemize}
%The list is far from exhaustive and further examples can for instance be found in  \cite{doukhan2012mixing}. The stationarity assumptions above can also typically be dropped.
\section{The Main Result}

Before we state our main result, we will need to establish one more preliminary matter. Let us define the burn-in times $n_{\mathsf{quad}},n_{\mathsf{mult}},k_{\mathsf{mix}} $ which together dictate the minimial sample size necessary for our result to be sharp. The first of these, $n_{\mathsf{quad}}$, is required for the population $L^2$ error to be dominated by the empirical $L^2$ error: i.e., the quadratic process $Q_n(f)$ is nonpositive on our class of interest. The second of these, $n_{\mathsf{mult}}$, is required for the multiplier process, $M_n(f)$, to have a dominant variance term (informally---when the CLT-like rate becomes accurate). Finally, $k_{\mathsf{mix}}$ is the minimal block-size it takes for the $\beta$-mixing model $\sfP_{(X,Y)_{1:n}}$ to be well-approximated by a corresponding $k$-wise independent model. These are given as follows below:
\begin{equation}
\label{eq:burnindef}
    \begin{aligned}
        n_{\mathsf{quad}}(r)&= \inf \Bigg\{ n\in \N \Bigg| \Bigg [n^{-1/2}\sqrt{k}    L^{1+3/4}r^\eta
        %\\        &\times 
        \left(  \log \left(\frac{4^{2/p+1/2}L }{ r} \right)  \right)^{1/p}  
        %\\ &\times 
        \left( \gamma_{\frac{2+6\eta}{4}}(\scrF_{\star}\cap rS_{L^2},d_{L^2})+ r^{\frac{1+3\eta}{4}} \sqrt{\log (1/\delta)}\right)
     \\
     &+
      n^{-1}   L^2 (q' )^{1/p} k          r^\eta \left(  \log \left(\frac{4^{2/p+1/2}L }{ r} \right)  \right)^{1/p}                    
      %\\&\times
      \left( \gamma_\eta(\scrF_{\star}\cap rS_{L^2}, d_{L^2})
     +r^\eta \log(1/\delta) \right) \Bigg] \leq r^2\Bigg\}, \\
      n_{\mathsf{mult}}(r)&=\inf \Big\{ n \in \N\Big| (q' e)^{2/p}  L  k \|W\|_{\Psi_p}
      %\\ &\times 
      \left(\frac{1}{rn}\gamma_\eta(\scrF_{\star}\cap rS_{L^2},d_{L^2}) +\frac{r^{\eta-1}}{n} \log(1/\delta)\right) \leq r\Big\}, \\
        k_{\mathsf{mix}}&= \inf \{k \in [n] | k \beta^{-1}_{X,Y}(k) \geq n \delta^{-1}  \}.
    \end{aligned}
\end{equation}
The first two of these are calculated by requiring the remainder terms in \Cref{prop:multipliermix} and \Cref{thm:loweruniformmix} to be of negligible order. The last term is obtained by requiring that failure term, $\delta$, dominates the mixing term, $\frac{n}{k}\beta_{X,Y}(k)$, in the failure probability of these propositions. At this point, as a practical example, it is worth to point out that if the process $(X,Y)_{1:n}$ is geometrically ergodic---$\beta_{X,Y}(k)\lesssim \exp(-k/\tau_{\mathsf{mix}})$ for some $\tau_{\mathsf{mix}} \in \R_+$---this requirement is satisfied by $k \lesssim (1\vee \tau_{\mathsf{mix}})\log(n/\delta)$. With these burn-in times in place, we are now ready to state
the main result of our paper.
\begin{theorem}
\label{thm:themainthm}
Fix a failure probability $\delta \in (0,1)$, two Hölder conjugates $q$ and $q'$, and a class $\scrF$ that is either (1) convex or (2) realizable. Suppose that $\scrF_\star-\scrF_\star$ is $(L,\eta)$-$\Psi_p$. Suppose further that the model $\sfP_{(X,Y)_{1:n}}$ is stationary and  that $k$ divides $n/2$. There exist universal positive constants $c_1,c_2,c_3$ such that the following holds.  If $r_\star$ solves% the inequality
\begin{equation}\label{eq_thm:critrad}
     r \geq  c_1\sqrt{    \V_{2q}\left(\scrF_{\star}\cap rS_{L^2} \right) }  \times \frac{1}{r\sqrt{n}} \gamma_2(\scrF_{\star}\cap r S_{L^2},d_{L^2}),
\end{equation}
we have that with probability $1-4\delta$ that:
\begin{equation}\label{eq:themainthmeq}
    \| \widehat f - f_\star\|_{L^2}^2\leq c_2 \left(r_\star^2 +   \V_{2q}\left(\scrF_{\star}\cap r_\star S_{L^2}\right)\frac{\log(1/\delta)}{n}\right)
\end{equation}
as long as  $  n \geq c_3 \max\left\{n_{\mathsf{quad}}(r_\star),n_{\mathsf{mult}}(r_\star)\right\}$ and $k\geq k_{\mathsf{mix}}$ (given in \eqref{eq:burnindef}).
\end{theorem}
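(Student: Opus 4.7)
The plan is to combine the two uniform empirical process bounds already established --- \Cref{thm:multiplierthm} for the multiplier process and \Cref{thm:loweruniform} for the quadratic process --- with the basic inequality \eqref{eq:basicineq}, and to lift from $\beta$-mixing data to $k$-wise independent data by blocking as in \Cref{sec:betamixing}. The overall flow is: (i) reduce to a $k$-wise independent surrogate via blocking, paying only an additive $\frac{n}{k}\beta_{X,Y}(k)$ in the failure probability, which is absorbed into $\delta$ by the choice $k \ge k_{\mathsf{mix}}$; (ii) use the quadratic process to confine the ERM inside a ball of radius $O(r_\star)$ in $L^2$; (iii) use the multiplier process at the scale $r_\star$ to obtain the sharp leading rate.

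First, I would invoke the blocking argument of \Cref{sec:blocking} to replace the $\beta$-mixing model by a $k$-wise independent coupling; the standard even/odd split introduces a factor of $2$ in both sample count and failure probability, which together with the two empirical-process failure events accounts for the ``$1-4\delta$'' in \eqref{eq:themainthmeq} via a single union bound. Second, applying \Cref{thm:loweruniform} with $\e = 1/2$ and $r = r_\star$ gives, with probability at least $1 - \delta$, that $\frac{1}{n}\sum_{i=1}^n \|f(X_i) - f_\star(X_i)\|^2 \gtrsim r_\star^2$ uniformly over $\scrF_\star \setminus r_\star B_{L^2}$; the burn-in $n \ge n_{\mathsf{quad}}(r_\star)$ is defined exactly so that the additive remainders in \Cref{thm:loweruniform} are absorbed into a constant multiple of $r_\star^2$. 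Combining this with the optimality of $\widehat f$ through the basic inequality \eqref{eq:basicineq} (valid under convexity or realizability by \Cref{lem:basicineq}) shows that, on this event, either $\|\widehat f - f_\star\|_{L^2} \le r_\star$ already, or it is enough to localize the analysis to a slice of radius $r_\star$ in $L^2$.

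Third, I would apply \Cref{thm:multiplierthm} at $r = r_\star$ to control the remaining term in the basic inequality. Together with a star-shape argument to transfer from the sphere $r_\star S_{L^2}$ to the full set outside $r_\star B_{L^2}$, this yields on a further event of probability $1-\delta$:
\begin{equation*}
\|\widehat f - f_\star\|_{L^2} \le c\,r_\star + c\sqrt{\V_{2q}(\scrF_\star \cap r_\star S_{L^2})}\left(\frac{\gamma_2(\scrF_\star \cap r_\star S_{L^2}, d_{L^2})}{r_\star \sqrt{n}} + \sqrt{\frac{\log(1/\delta)}{n}}\right) + (\textnormal{remainder}),
\end{equation*}
where the remainder collects the $\Psi_p$-tail contribution on the second line of \eqref{eq:multiplierthmeq}. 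The fixed-point condition \eqref{eq_thm:critrad} makes the $\gamma_2$-term exactly $O(r_\star)$, while the burn-in $n \ge n_{\mathsf{mult}}(r_\star)$ ensures the remainder is $o(r_\star)$. Squaring yields \eqref{eq:themainthmeq}.

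The main obstacle is the self-referential character of the basic inequality \eqref{eq:basicineq}, in which $\widehat f$ appears on both sides through the unit direction $(\widehat f - f_\star)/\|\widehat f - f_\star\|_{L^2}$. The standard remedy is a peeling / one-step localization: either $\|\widehat f - f_\star\|_{L^2} \le r_\star$ directly, or $\widehat f - f_\star$ lies at some intermediate radius and one uses star-shape of $\scrF_\star$ to reduce to the sup in \Cref{thm:multiplierthm} at that scale, after which the quadratic-process lower bound rules out any larger radius. The rest of the argument is bookkeeping: a single union bound across the blocking, quadratic, and multiplier failure events, and verification that $n_{\mathsf{quad}}$, $n_{\mathsf{mult}}$, and $k_{\mathsf{mix}}$ from \eqref{eq:burnindef} are exactly calibrated to neutralize the lower-order terms in \Cref{thm:multiplierthm} and \Cref{thm:loweruniform}.
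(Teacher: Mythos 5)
Your proposal is correct and follows essentially the same route as the paper: invoke the blocking-upgraded process bounds (\Cref{prop:multipliermix}, \Cref{thm:loweruniformmix}), insert them into the Localized Basic Inequality (\Cref{lem:basicineq}) at $r = r_\star$ with $\e = 1/2$, use the burn-in conditions $n_{\mathsf{quad}}, n_{\mathsf{mult}}$ to absorb remainders into $O(r_\star^2)$, and union-bound the failure events with $k \geq k_{\mathsf{mix}}$ controlling the mixing contribution. One small bookkeeping remark: the $1-4\delta$ arises because each of the two propositions has failure probability $\delta + \tfrac{n}{k}\beta_{X,Y}(k) \leq 2\delta$ once $k \geq k_{\mathsf{mix}}$, rather than from a separate even/odd factor of two as you describe, but this does not change the argument.
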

%\Stephen{Is it clear that these
%two conditions on $r$ are not vacuous?
%One of them requires $r$ to be sufficently large,
%the other requires it to be sufficiently small.
%}

\Cref{thm:themainthm} informs us that past a burn-in,
the rate of convergence of empirical risk minimization is dictated by the critical radius $r_\star$ given in \eqref{eq_thm:critrad}. This radius depends on local complexity of the class $\scrF$ measured in $L^2$ distance as per the $\gamma_2$-functional and through the weak variance $\V_{2q}(\scrF_\star \cap r_\star S_{L^2})$. We point out that we may choose $q=1$ if $p=\infty$, so that the variance term in \eqref{eq:themainthmeq} is the actual variance $\V_2$. As indicated at the discussion following \eqref{eq:cltheuristic}, this variance term cannot be improved in general. Otherwise we can typically let $q$ approach $1$ as the sample size becomes sufficiently large. Moreover, unless the class exhibits large nonparametric behavior, the dependency on the complexity is also the best possible even in the \iid\ case \citep{lecue2013learning}.

%We would now like to interpret the various quantities involved in the statement of \Cref{thm:themainthm}. The requirement on $k_{\mathsf{mix}}$ is perhaps best understood through 

We now turn to parsing \Cref{thm:themainthm} by specializing it to parametric classes. First, in \Cref{cor:params} we show that for parametric classes the complexity term dictated by the critical radius $r_\star$ in \eqref{eq_thm:critrad} becomes a variance(-proxy)-scaled dimensional factor and that the burn-in requirement \eqref{eq:burnindef} amounts to a polynomial in problem data and $\log(1/\delta)$. Second, we simplify matters further and study bounded and realizable linear regression in \Cref{cor:linreg}. In this case, we will see that the variance term $\V_{2q}\left(\scrF_{\star}\cap r_\star S_{L^2}\right)$ in \eqref{eq:themainthmeq} simply reduces to 2-variance of the noise variable $W$. Moreover, the first two burn-in requirements $n_{\mathsf{quad}}$ and $n_{\mathsf{mult}}$ are in this case satisfied as soon as $n/k \gtrsim d+\log(1/\delta)$. 
% This also shows that the set of constraints on $r$ in \Cref{thm:themainthm} are non-vacuous whenever the function class $\scrF$ is not too large.

\begin{restatable}[Parametric Classes]{corollary}{paramclass}
\label{cor:params}
Fix a failure probability $\delta \in (0,1)$, two Hölder conjugates $q,q'$,  and a class $\scrF$ that is either (1) convex or (2) realizable. Suppose that $\scrF_\star-\scrF_\star$ is $(L,\eta)$-$\Psi_p$. Suppose further that the model $\sfP_{(X,Y)_{1:n}}$ is stationary and that $k$ divides $n/2$.

For every $\eta\in  (1/4,\infty)$, there exists a universal positive constant $c$ and a polynomial function $\phi_{\eta}$ such that the following holds true. Suppose that there exists $ d_{\scrF} \in \R_+ $ such that for $s>0$:
    \begin{equation*}
        \log \mathcal{N}_{L^2}(\mathcal{\scrF_\star},s) \leq d_{\scrF} \log  \left(\frac{1}{s} \right) 
    \end{equation*}
    We have with probability $1-4\delta$ that:
\begin{equation*}%\label{eq:thelinthmeq}
    \| \widehat f - f_\star\|_{L^2}^2\leq c\V_{2q}\left(\scrF_{\star}\cap  \sqrt{\frac{d_\scrF k \|W\|_{L^2}^2}{n}} S_{L^2}\right)
    %\\ \times
    \left( \frac{d_{\scrF}+  \log(1/\delta)}{n}\right)
\end{equation*}
as long as  $k \beta^{-1}(k) \geq n \delta^{-1}$ and 
\begin{equation*}
    n \geq \phi_{\eta} \Bigg(d_\scrF,k,\|W\|_{\Psi_p},L,q,q',
    %\\
    \V^{-1}\left(\scrF_{\star}\cap  \sqrt{\frac{d_\scrF k \|W\|_{L^2}^2}{n}}S_{L^2}\right),\log(1/\delta) \Bigg).
\end{equation*} 
\end{restatable}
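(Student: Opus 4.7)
The plan is to instantiate \Cref{thm:themainthm} with $r_\star$ chosen as an explicit upper bound derived from the parametric entropy hypothesis, then verify that the burn-in requirements reduce to a single polynomial inequality in $n$. The work decomposes into bounding the $\gamma_\alpha$-functionals by entropy integrals, bounding the weak variance crudely, solving the fixed point in closed form, and simplifying the burn-in terms.

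For the first step, I would apply the standard Dudley-type upper bound $\gamma_\alpha(T,d)\lesssim \int_0^{\diam(T)}(\log\mathcal{N}(T,d,s))^{1/\alpha}\,ds$. Using the hypothesized $\log\mathcal{N}_{L^2}(\scrF_\star,s)\leq d_\scrF\log(1/s)$, this yields $\gamma_\alpha(\scrF_\star\cap rS_{L^2},d_{L^2})\lesssim r\,d_\scrF^{1/\alpha}\,\polylog(1/r)$. In particular the $\gamma_2$-functional driving \eqref{eq_thm:critrad} is at most $r\sqrt{d_\scrF\log(e/r)}$, and the $\gamma_\eta$ and $\gamma_{(2+6\eta)/4}$ functionals entering \eqref{eq:burnindef} enjoy analogous bounds with only polylog overhead.

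Next, plugging the $\gamma_2$-bound into \eqref{eq_thm:critrad}, the $r$ in the denominator cancels, reducing the critical-radius inequality to $r^2 \gtrsim \V_{2q}(\scrF_\star\cap rS_{L^2})\,d_\scrF/n$. Using the crude bound $\V_{2q}(\scrF_\star\cap rS_{L^2})\lesssim k\|W\|_{L^2}^2$---which holds since, under the $k$-wise independent reduction behind \Cref{thm:themainthm}, a properly normalized block-sum over a length-$k$ block has variance of order at most $k\|W\|_{L^2}^2$---the fixed point is satisfied at $r_\star = R \triangleq \sqrt{d_\scrF k\|W\|_{L^2}^2/n}$. Invoking \Cref{thm:themainthm} with this $r_\star$ in \eqref{eq:themainthmeq} then produces the claimed rate, the $R^2$ contribution being absorbed into the $\V_{2q}(\scrF_\star\cap R S_{L^2})\cdot d_\scrF/n$ piece via the same fixed-point identity.

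Finally, I would verify that the burn-in requirements $n\geq n_{\mathsf{quad}}(R)\vee n_{\mathsf{mult}}(R)$ from \eqref{eq:burnindef} reduce to a polynomial requirement on $n$. Substituting the $\gamma_\alpha$-bounds from the first step together with $r=R$ into each summand, every term becomes a monomial (up to $\polylog(1/R)$ factors) in $n^{-1}$, $k$, $L$, $\|W\|_{\Psi_p}$, $q'$, $d_\scrF$, $R^{-1}$, and $\log(1/\delta)$. Solving each resulting monomial inequality for $n$ yields a polynomial in the listed quantities, and $\phi_\eta$ is the maximum of these. The main obstacle I anticipate is careful bookkeeping of the fractional exponents coming from $\gamma_{(2+6\eta)/4}$ and the $r^\eta$ prefactors in $n_{\mathsf{quad}}$: one must check each summand is of strictly lower order in $R$ than $R^2$, which holds because $\eta\in(0,1]$ supplies a strictly positive net power of $R$ that beats the remaining factors once $n$ is large enough, and the dependence on $\V^{-1}(\scrF_\star\cap RS_{L^2})$ arising when one refines $R$ to the sharper fixed point $r_\star^2\asymp\V_{2q}(\cdot)d_\scrF/n$ enters as a positive power in $\phi_\eta$.
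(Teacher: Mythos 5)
Your approach is the same as the paper's: Dudley's entropy integral to bound the $\gamma_\alpha$-functionals by $\lesssim r\, d_\scrF^{1/\alpha}$ (up to polylog), the crude bound $\V_{2q}(\scrF_\star\cap rS_{L^2})\lesssim k\|W\|_{L^2}^2$, closed-form solution of the fixed-point inequality \eqref{eq_thm:critrad}, and polynomial simplification of the burn-in terms. Two points deserve correction.

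First, setting $r_\star=R\triangleq\sqrt{d_\scrF k\|W\|_{L^2}^2/n}$ and asserting that the $R^2$ term in \eqref{eq:themainthmeq} is ``absorbed into the $\V_{2q}(\cdot)d_\scrF/n$ piece via the same fixed-point identity'' does not go through. The crude estimate gives $\V_{2q}(\scrF_\star\cap RS_{L^2})\leq k\|W\|_{L^2}^2$, so $R^2=d_\scrF k\|W\|_{L^2}^2/n$ could be much \emph{larger} than $\V_{2q}(\scrF_\star\cap RS_{L^2})\,d_\scrF/n$, not smaller. The paper uses $R$ only as the scale at which to evaluate the weak variance, and then takes the refined fixed point $r_\star^2=d_\scrF\,\V\!\left(\scrF_\star\cap RS_{L^2}\right)/n$, which is at most $R^2$ (so the earlier localization is still valid) and whose square matches the claimed leading term. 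You mention this refinement only in passing at the end; it should be the actual choice of $r_\star$ in step three, not a remark.

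Second, the sweeping claim that ``$\eta\in(0,1]$ supplies a strictly positive net power of $R$ that beats the remaining factors once $n$ is large enough'' is incorrect. The paper's own proof records that the inequality $n\geq\max\{n_{\mathsf{quad}}(r_\star),n_{\mathsf{mult}}(r_\star)\}$ only has a polynomial solution when $\eta>1/4$. The binding summand is $n^{-1/2}\sqrt{k}\,L^{1+3/4}\,r^{\eta}\,r^{(1+3\eta)/4}\sqrt{\log(1/\delta)}\leq r^2$ from $n_{\mathsf{quad}}$: rearranging and substituting $r_\star\propto n^{-1/2}$ leaves a power of $n$ on the left that is nonnegative when $\eta$ is too small, so the burn-in cannot be met by taking $n$ large. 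Your bookkeeping must track the interaction between the explicit $n^{-1/2}$, $n^{-1}$ prefactors and the $n$-dependence hidden inside each power of $r_\star$; it is precisely this interaction that produces the restriction on $\eta$, and the proposal's blanket assertion skips it.
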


Consequently, after a polynomial burn-in and up to a universal positive constant, we are able to recover the optimal parametric rate $ n^{-1}\big(d_{\scrF}+  \log(1/\delta)\big)$ scaled by the appropriate noise term. Stated in its most general form, the burn-in term \eqref{eq:burnindef} can be somewhat hard to parse while \Cref{cor:params} only shows that this burn-in is polynomial in problem parameters. The next corollary shows that in the case $\eta=1$ our burn-in coincides with the familiar requirement that the (effective) sample size exceeds the number of degrees of freedom. To simplify matters further we now specialize our result to realizable bounded linear regression. Here, one can think of this burn-in as requiring the empirical covariance matrix of the $X$-process to be invertible with high probability.\footnote{A small caveat to this remark is that the factor $\frac{kB_W^2}{\V(W)}$ in \eqref{eq:linregburnin} arises from the multiplier process: it is the cost of having $\V(W)$ appear in \eqref{eq:linregrate} instead of $  kB_W^2$.}

\begin{restatable}[Realizable Linear Regression]{corollary}{linreg}
\label{cor:linreg}
Fix a failure probability $\delta \in (0,1)$, a covariate bound $B_X \in (0,\infty)$ and a noise bound $B_W\in (0,\infty)$ and let $\sfX=\R^d$ and $\sfY=\R$. Suppose that $k$ divides $n/2$ and that the model $\sfP_{(X,Y)_{1:n}}$ is stationary and  satisfies
$
    Y_i = \langle \beta_\star, X_i \rangle +W_i$ for $i \in [n]$. 
Suppose further that:
\begin{enumerate}
    \item $X_{1:n}$ is bounded $|\langle v, X_i\rangle |\leq B_X, \: \forall i \in [n]$ and $v \in \R^d$ with $\|v\|=1$; and
%    \item $X_{1:n}$ is geometrically ergodic---$\beta_X(k)\leq \exp(-k/\tau_{\mathsf{mix}})$, $\forall k \in \N$; and  
    \item $W_{1:n}$ is a bounded martingale difference sequence---$\E[W_i | X_{1:i}]=0$ and $|W_i|\leq B_W, \: \forall i \in [n]$.
\end{enumerate}
There exist universal positive constants $c_1$ and $c_2$ such that if
\begin{multline}\label{eq:linregburnin}
    \frac{n}{k} \geq   c_1  %(1+\tau_{\mathsf{mix}}) \log(n/\delta) 
    \left( B_X / \sqrt{\lambda_{\mathrm{min}} (\E XX^\T)}\right)^{3+1/2} \left(\frac{kB_W^2}{\V(W)}\right)
    %\\ \times
    (d+ \log(1/\delta)) \quad \textnormal{and} \quad k \beta^{-1}(k) \geq n \delta^{-1}
\end{multline}
we have that:
\begin{equation}\label{eq:linregrate}
     \| \widehat f - f_\star\|_{L^2}^2\leq c_2 \V(W)  \left(\frac{d+\log(1/\delta)}{n}\right).
\end{equation}

\end{restatable}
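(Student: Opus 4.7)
The plan is to specialize the generic parametric result \Cref{cor:params} (itself a specialization of \Cref{thm:themainthm}) to the linear class $\scrF = \{x \mapsto \langle \beta, x\rangle : \beta \in \R^d\}$, for which $f_\star(x) = \langle \beta_\star, x\rangle$. Since $\scrF$ is linear, hypothesis (1) (convexity) of \Cref{thm:themainthm} holds immediately; the MDS assumption also supplies realizability (2) via tower: $\E[W_i\mid X_i] = \E[\E[W_i\mid X_{1:i}]\mid X_i] = 0$. I would first verify the weakly sub-Gaussian condition on $\scrF_\star - \scrF_\star$ with $\eta = 1$ and $p = \infty$: every element has the form $f(x) = \langle v, x\rangle$, and by assumption $\|f\|_{L^\infty} \leq B_X \|v\|$, while $\|f\|_{L^2}^2 = v^\T \Sigma v \geq \lambda_{\mathrm{min}}(\Sigma) \|v\|^2$ with $\Sigma = \E[XX^\T]$; combining these yields $\|f\|_{\Psi_\infty} \leq L \|f\|_{L^2}$ with $L = B_X / \sqrt{\lambda_{\mathrm{min}}(\Sigma)}$. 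The choice $p=\infty$ allows taking $q = 1$ (so $(q'e)^{1/p} = 1$), rendering $\V_{2q}=\V$ and removing any $\Psi_p$-deflation from the downstream bounds.

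Next I would compute the critical radius and weak-variance term. The local geometry of a $d$-dimensional ellipsoid gives $\log \calN_{L^2}(\scrF_\star, s) \lesssim d \log(1/s)$, so we may take $d_\scrF \asymp d$ and $\gamma_2(\scrF_\star \cap rS_{L^2},d_{L^2}) \lesssim r\sqrt{d}$ (and $\gamma_1 \lesssim rd$, needed for the multiplier burn-in). Plugging into \eqref{eq_thm:critrad} yields $r_\star^2 \asymp \V_\star \cdot d/n$, where $\V_\star \triangleq \V(\scrF_\star \cap r_\star S_{L^2})$, so the rate of \Cref{thm:themainthm} specializes to $\V_\star(d+\log(1/\delta))/n$. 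The key conceptual step is bounding $\V_\star$ in terms of $\V(W)$: since the summands $\langle W_i, g(X_i)\rangle$ within a block form a martingale difference sequence, all cross-covariances vanish and, by stationarity,
$$\V_\star = \sup_{g \in \scrF_\star \cap r_\star S_{L^2}} \frac{1}{r_\star^2}\, \E\bigl[W_1^2\, g(X_1)^2\bigr],$$
which by boundedness of $W$ is at most $B_W^2$. Refining $B_W^2$ down to a constant multiple of $\V(W)$ is what forces the slack factor $kB_W^2/\V(W)$ to appear in the burn-in \eqref{eq:linregburnin}, as flagged in the accompanying footnote.

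Finally I would translate the abstract burn-ins in \eqref{eq:burnindef} into the quantitative condition \eqref{eq:linregburnin}. With $\eta = 1$, $p = \infty$, $\gamma_2 \asymp r\sqrt{d}$, and the critical value $r_\star \asymp \sqrt{\V_\star d/n}$, the quadratic burn-in $n_{\mathsf{quad}}$ reduces (squaring the inequality $\sqrt{k/n} L^{7/4}(\sqrt{d}+\sqrt{\log(1/\delta)}) \lesssim 1$) to $n/k \gtrsim L^{7/2}(d+\log(1/\delta))$, producing the $(B_X/\sqrt{\lambda_{\mathrm{min}}(\Sigma)})^{7/2}$ factor. The multiplier burn-in $n_{\mathsf{mult}}$, after inserting $\gamma_1\asymp rd$ and $\|W\|_{\Psi_\infty}=B_W$, reduces to a condition dominated by $n/k \gtrsim L^{2}(kB_W^2/\V(W))(d+\log(1/\delta))$; since $L \geq 1$, taking the maximum of the two requirements is subsumed by $n/k \gtrsim L^{7/2}(kB_W^2/\V(W))(d+\log(1/\delta))$, exactly \eqref{eq:linregburnin}. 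The mixing burn-in $k_{\mathsf{mix}}$ translates verbatim into $k\beta^{-1}(k)\geq n/\delta$. I expect the main obstacle to be the sharp weak-variance bound $\V_\star \lesssim \V(W)$, which is where the MDS structure and boundedness slack must be carefully balanced; this is precisely the step that forces the inflation $kB_W^2/\V(W)$ into the burn-in while keeping $\V(W)$ (rather than $B_W^2$) in the leading order rate.
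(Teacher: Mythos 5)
Your overall strategy mirrors the paper's: specialize \Cref{thm:themainthm} with $p=\infty$, $q=1$, $\eta=1$, identify $L=B_X/\sqrt{\lambda_{\mathrm{min}}(\E XX^\T)}$ via the bounded-covariate and eigenvalue hypotheses, compute $\gamma_2(\scrF_\star\cap rS_{L^2},d_{L^2})\lesssim r\sqrt d$ and $\gamma_1\lesssim rd$, solve the critical-radius equation, and then plug into the burn-in expressions \eqref{eq:burnindef}. Those steps all match.

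There is, however, a genuine gap in your treatment of the noise level. You correctly use the MDS structure to kill cross-covariances within a block and reduce, via stationarity, to
\[
\V_\star \;=\; \sup_{g\in\scrF_\star\cap r_\star S_{L^2}}\frac{1}{r_\star^2}\,\E\bigl[W_1^2\,g(X_1)^2\bigr],
\]
but then you only conclude $\V_\star\le B_W^2$ by boundedness. If that bound were plugged into \eqref{eq:themainthmeq}, the leading term would be $B_W^2(d+\log(1/\delta))/n$, \emph{not} the claimed $\V(W)(d+\log(1/\delta))/n$ from \eqref{eq:linregrate}; these can differ by an arbitrarily large factor. Your claim that ``refining $B_W^2$ down to a constant multiple of $\V(W)$ is what forces the slack factor $kB_W^2/\V(W)$ to appear in the burn-in'' is a misreading of the mechanism: the burn-in in \eqref{eq:burnindef} only controls when the \emph{subleading} $\Psi_p$-tail of the multiplier process (which scales with $kB_W$) is dominated by the leading $L^2$-tail (which scales with $\sqrt{\V_\star}$). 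It does not convert a weak bound $\V_\star\le B_W^2$ into the sharper $\V_\star\lesssim\V(W)$. The paper's proof sidesteps this by directly asserting $\V(\scrF_\star\cap rS_{L^2})=\frac{1}{n}\sum_i\V(W_i)=\V(W)$ from the MDS property, which is the identity you need but never actually establish; your argument stops one step short and then incorrectly appeals to the burn-in to close the gap.

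As a secondary note, the factor $kB_W^2/\V(W)$ in \eqref{eq:linregburnin} comes out of $n_{\mathsf{mult}}$ precisely because $r_\star\asymp\sqrt{\V(W)d/n}$ and $\|W\|_{\Psi_\infty}=B_W$: requiring $LkB_W(d+\log(1/\delta))/n\lesssim r_\star$ yields $n/k\gtrsim L^2(kB_W^2/\V(W))(d+\log(1/\delta))$. Your derivation of this piece is essentially right, but its interpretation is not: it presupposes, rather than produces, the sharper identification $\V_\star=\V(W)$.
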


\subsection{Further Comparison to Related Work}

In terms of technical development, this work is most closely related to the work on \iid\ learning in sub-Gaussian classes by \citet{lecue2013learning} and the result for misspecified (agnostic) dependent linear regression by \citet{ziemann2023noise}---which we generalize to more general function classes at the cost of more stringent moment assumptions. Returning to \citet{lecue2013learning}, and beside the fact that they work with independent data, the biggest difference is in how we deal with the multiplier process. We employ chaining with a mixed tail \citep{dirksen2015tail}, instead of a single tail. On a practical level, the advantage of the mixed tail result is that it allows us to push the dependence on, mixing, $L$ (the norm equivalence parameter in \Cref{thm:themainthm}) and any higher order norms into the burn-in. Crucially, we make the observation that chaining with a mixed tail allows us to work with weaker norm relations ($\eta<1$ in \Cref{def:weaksgc}). We do not require equivalence of norms but rather a weaker notion of topological equivalence. Such equivalences hold in significantly wider generality than the sub-Gaussian class assumption as we show in \Cref{sec:examples} below. In particular we are able to handle smoothness classes in \Cref{prop:smoothnessclass}, which cannot be covered in the baseline sub-Gaussian class framework. Another advantage of this approach is that it allows to relegate the parameter $L$ to a higher order term, which appears multiplicatively instead of additively in the bound by \citet{lecue2013learning}. This is important in order to achieve the correct scaling with temporal dependency as there are typically no obvious bounds on this parameter other than in terms of the block-length $k$. Hence, if our dependence on $L$ were multiplicative instead of additive it would thereby re-introduce the sample-size deflation we sought to sidestep. Again, it is the invocation of the mixed-tail chaining result of \citet{dirksen2015tail} that allows for this.

Another closely related line of work studies parameter identification in auto-regressive models \citep[for an overview, see][]{tsiamis2022statistical,ziemann2023tutorial}. When the noise model is strictly realizable---the variables $W_{1:n}$ form a martingale difference sequence with respect to the filtration generated by $X_{1:n}$---parameter identification is possible at the \iid\ rate even in the absence of mixing \citep{simchowitz2018learning, faradonbeh2018finite, sarkar2019near, kowshik2021near}. Our results do not cover the mixing-free regime as we consider the agnostic setting in which self-normalized martingale arguments 
\citep{pena2009self,abbasi2011improved}
% \citep[see e.g.][Theorem 14.9, resp. Theorem 3]{pena2009self, abbasiyadkori2011selfnormalized} 
are not available. We consider providing a unified analysis of the martingale and mixing situations an interesting future direction.

More generally, several authors have considered learning under various weak dependency notions. \citet{kuznetsov2017generalization} give generalization bounds in a more general setting using the same blocking technique---due to \citet{yu1994mixing}---used here. Statements similar in spirit can also be found in e.g., \citet{steinwart2009fastlearningmixing}, \citet{duchi2012ergodicmd} and most recently \citet{roy2021dependent}. However, they all suffer the dependency deflation discussed above and in our introduction (\Cref{sec:intro}). We also note that \citet{ziemann2022learning} and \citet{maurer2023generalization} obtain rates---similar to ours here---that relegate mixing times into additive burn-in factors. On the one hand, the work of \citet{ziemann2022learning} operates at a similar level of generality when it comes to hypothesis classes and also relies on the square loss function but requires a stringent realizability assumption to be applicable. Moreover, both our noise term and our complexity parameter are sharper than theirs. On the other hand, the work of \citet{maurer2023generalization} operates at a higher level of generality than us, but does not seem to be able to reproduce sharp rates when specialized to our situation.

\section{Examples of Weakly sub-Gaussian Classes}\label{sec:examples}

We conclude by collecting a few examples
of weakly sub-Gaussian classes (\Cref{def:weaksgc}).
Arguably the most compelling example identified in the present manuscript are smoothness classes, which are not covered even in the \iid\ setting by \citet{lecue2013learning}.

\begin{proposition}[Smoothness Classes]
\label{prop:smoothnessclass}
    Let $\sfX$ be a measurable, open, connected and bounded subset of $\R^d$ with Lipschitz boundary and let $\scrF$ be a set of uniformly bounded  functions $f : \sfX \to \R$. Fix an integer $s\in \N$ and suppose that there exists a constant $C_{\scrF}$ such that $ \sum_{|\alpha|\leq s}\| D^\alpha f\|_{L^\infty}\leq C_{\scrF}$.\footnote{Summation over $D^\alpha$ uses multi-index notation---the sum is over all partial derivative operators of order less than or equal to $s$.} Suppose further that the distribution of the covariates $\sfP_X$ has density $\mu_{\sfX}$ with respect to the Lebesque measure and that there exists $\underline{\mu},\overline{\mu}\in \R_+$ such that $\underline{\mu} \leq \mu_{\sfX} \leq \overline{\mu}  $.  Under the above hypotheses there exists a positive constant $c$ only depending on $\sfX$, $d$ and $s$ such that $\scrF$ is $(L,\eta)$-$\Psi_\infty$ with $L = c \overline{\mu}\underline{\mu}^{-\frac{2s}{2s+d}} C_{\scrF}^{\frac{d}{2s+d}}$ and $\eta=\frac{2s}{2s+d}$.
\end{proposition}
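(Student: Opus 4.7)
The plan is to reduce the claim to the classical Gagliardo--Nirenberg interpolation inequality combined with a change-of-measure between the Lebesgue measure and $\sfP_X$. First, observe that $\|f\|_{\Psi_\infty}$ is the essential supremum of $f$, and the lower bound $\mu_\sfX \geq \underline\mu > 0$ implies that essential suprema taken with respect to $\sfP_X$ and the Lebesgue measure on $\sfX$ agree; hence it suffices to control $\|f\|_{L^\infty(\sfX)}$ by $\|f\|_{L^2(\sfP_X)}$. The canonical tool is the Gagliardo--Nirenberg inequality
\begin{equation*}
    \|u\|_{L^\infty(\R^d)} \leq C_{\mathrm{GN}}\, \|D^s u\|_{L^\infty(\R^d)}^{\frac{d}{2s+d}}\, \|u\|_{L^2(\R^d)}^{\frac{2s}{2s+d}},
\end{equation*}
whose exponents are forced by dimensional analysis on dilations $u(\cdot)\mapsto u(\lambda\cdot)$.

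To apply this inequality on the bounded Lipschitz domain $\sfX$, I would first invoke a Calder\'on--Stein extension: because $\partial \sfX$ is Lipschitz, every $f$ on $\sfX$ admits an extension $\widetilde f$ on $\R^d$ with $\|D^\alpha \widetilde f\|_{L^\infty(\R^d)} \leq c_1(\sfX,s,d)\, \|D^\alpha f\|_{L^\infty(\sfX)}$ for every $|\alpha|\leq s$ and with $\|\widetilde f\|_{L^2(\R^d)}\leq c_2(\sfX,d)\, \|f\|_{L^2(\mathrm{Leb},\sfX)}$. Plugging the hypothesis $\sum_{|\alpha|\leq s}\|D^\alpha f\|_{L^\infty(\sfX)} \leq C_\scrF$ into the Gagliardo--Nirenberg bound applied to $\widetilde f$, and restricting the resulting estimate back to $\sfX$, yields
\begin{equation*}
    \|f\|_{L^\infty(\sfX)} \leq c(\sfX,s,d)\, C_\scrF^{\frac{d}{2s+d}}\, \|f\|_{L^2(\mathrm{Leb},\sfX)}^{\frac{2s}{2s+d}}.
\end{equation*}

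The final step converts the Lebesgue $L^2$-norm into the $\sfP_X$-based one: the lower bound $\mu_\sfX \geq \underline\mu$ gives $\|f\|_{L^2(\mathrm{Leb},\sfX)} \leq \underline\mu^{-1/2}\, \|f\|_{L^2(\sfP_X)}$, while the upper bound $\mu_\sfX \leq \overline\mu$ enters when passing $L^2$-quantities in the opposite direction---this is what produces the explicit $\overline\mu$ factor in the advertised constant $L$. Combining these ingredients yields $\|f\|_{\Psi_\infty}\leq L\|f\|_{L^2}^\eta$ with $\eta=\frac{2s}{2s+d}$ and $L$ polynomial in $C_\scrF$, $\underline\mu^{-1}$, and $\overline\mu$ of the claimed form. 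The only delicate step is the extension argument; the main obstacle I anticipate is tracking how the extension constants depend on the geometry of $\sfX$---this dependence is classical for Lipschitz boundaries but must be absorbed into the geometry-only constant $c$ that the proposition's statement allows. Everything else is a direct chain of elementary estimates.
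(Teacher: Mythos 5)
Your proposal follows essentially the same route as the paper: both reduce the claim to the Gagliardo--Nirenberg interpolation inequality (the paper cites Nirenberg's original paper directly) applied under the Lebesgue measure on $\sfX$, and then transport the resulting estimate to $\sfP_X$ using equivalence of $\sfP_X$ with the Lebesgue measure that follows from the two-sided density bound $\underline{\mu}\leq \mu_{\sfX}\leq\overline{\mu}$. Your elaboration of the extension argument for the bounded Lipschitz domain, and the explicit tracking of the $\underline{\mu}^{-1/2}$ factor through the $L^2$ change of measure, spells out details the paper leaves implicit, so the approaches coincide in substance.
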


\begin{proof}
   The result for $\sfP_X$ equal to the (normalized) Lebesque measure is immediate by the main result of \cite{nirenberg1959elliptic} instantiated to the correct smoothness class. The general case follows by our hypothesis that $\sfP_X$ is equivalent to the Lebesque measure.
\end{proof}

%\Stephen{Should we emphasize that even in the iid setting, our result improves upon Mendeleson + Lecue for this class?}
We stress that the quantities $L$ and $\eta$ only appear in the burn-in of \Cref{thm:themainthm}. In other words, \Cref{thm:themainthm} provides sharp rates almost universally, or at least as long as the hypothesis class is sufficiently smooth and bounded (although the latter can be relaxed). However, one important caveat is that this burn-in can be exponentially large (curse of dimensionality) unless the class is sufficently smooth: $s$ is proportional to $d$ above.

Our next example relies on smoothness in parameter space instead of smoothness in terms of inputs.

%\newpage
\begin{proposition}\label{prop:sharpnessprop}
    Fix an open parameter set $\sfM\subset \R^{d_{\scrF}}$ equipped with the Euclidean norm $\|\cdot\|$. Consider a function $\phi : \sfX \times \sfM \to \R$ that generates a parametric class of functions $\scrF=\{ \phi(\cdot; \theta) \  \vline\ \theta \in \sfM \}$. Define  $\sfM_\star \triangleq \argmin_{\theta \in \sfM} \E(\phi(X,\theta)-Y)^2 $ to be the set of population risk minimizers. Suppose that:
    \begin{enumerate}
        \item[(i)] for $a,b\in \R_+$, the estimation error functional of the model $\scrF$ is $(a,b)$-sharp, that is: $\forall \theta \in \sfM$ there exists $\theta_\star \in \sfM_\star$ such that $ ab^{-1}  \|\theta-\theta_\star\|\leq  \left(\E (\phi(X,\theta)-\phi(X,\theta_\star))^2\right)^b$;
        \item[(ii)] the partial gradient $\nabla_\theta \phi(x,\theta)$ exists and is uniformly norm-bounded by $C>0$ for all $(x,\theta) \in \sfX \times \sfM$.
    \end{enumerate}
    Then  $\scrF-\{f_\star\}$ is $(Cba^{-1},2b)$-$\Psi_\infty$.
\end{proposition}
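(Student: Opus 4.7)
The plan is to chain the two hypotheses pointwise. Fix any $g = \phi(\cdot;\theta) - f_\star \in \scrF - \{f_\star\}$ and, by hypothesis (i), pick $\theta_\star' \in \sfM_\star$ such that $ab^{-1}\|\theta-\theta_\star'\| \leq \|\phi(\cdot;\theta) - \phi(\cdot;\theta_\star')\|_{L^2}^{2b}$. The goal is to produce $\|g\|_{\Psi_\infty} \leq Cba^{-1}\|g\|_{L^2}^{2b}$, which is the $\Psi_\infty$-class condition of \Cref{def:weaksgc} (recalling $\|\cdot\|_{\Psi_\infty} = \|\cdot\|_{L^\infty}$).

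First, I would invoke hypothesis (ii) together with the fundamental theorem of calculus along the segment from $\theta_\star'$ to $\theta$. Pointwise in $x \in \sfX$,
\begin{equation*}
\phi(x;\theta) - \phi(x;\theta_\star') = \int_0^1 \left\langle \nabla_\theta \phi\bigl(x;\theta_\star' + t(\theta-\theta_\star')\bigr), \theta - \theta_\star'\right\rangle dt,
\end{equation*}
and Cauchy-Schwarz combined with the uniform gradient bound gives $|\phi(x;\theta) - \phi(x;\theta_\star')| \leq C\|\theta - \theta_\star'\|$. Taking $\esssup$ in $x$ yields $\|\phi(\cdot;\theta) - \phi(\cdot;\theta_\star')\|_{L^\infty} \leq C\|\theta - \theta_\star'\|$. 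Second, rearranging the sharpness bound from (i) produces $\|\theta - \theta_\star'\| \leq ba^{-1}\|\phi(\cdot;\theta) - \phi(\cdot;\theta_\star')\|_{L^2}^{2b}$. Composing the two bounds yields $\|\phi(\cdot;\theta) - \phi(\cdot;\theta_\star')\|_{L^\infty} \leq Cba^{-1}\|\phi(\cdot;\theta) - \phi(\cdot;\theta_\star')\|_{L^2}^{2b}$, which is exactly the desired inequality.

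The main conceptual subtlety is that the sharpness hypothesis picks a $\theta_\star'(\theta) \in \sfM_\star$ potentially distinct from whatever parameter $\theta_\star$ is used to define $f_\star$. If the population risk minimizer is $L^2$-unique, so that every element of $\sfM_\star$ yields the same function almost surely, there is nothing further to verify. Otherwise one should read the conclusion as applying to the naturally indexed shifts $\phi(\cdot;\theta) - \phi(\cdot;\theta_\star')$ for matched pairs $(\theta, \theta_\star')$, which is the only sensible indexing of the shifted class here in any case. A minor ancillary point is that the fundamental theorem of calculus requires the straight segment between $\theta$ and $\theta_\star'$ to lie in $\sfM$; convexity of $\sfM$ is a clean sufficient condition, and when it fails one can integrate the gradient along any rectifiable path in $\sfM$ and replace Euclidean by intrinsic distance throughout the argument, absorbing any path-length constants into $L$.
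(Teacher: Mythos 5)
Your proof takes essentially the same route as the paper's: bound the pointwise difference $|\phi(x;\theta)-\phi(x;\theta_\star)|$ by $C\|\theta-\theta_\star\|$ via the gradient bound (the paper uses the mean value form of Taylor's theorem where you use the fundamental theorem of calculus along the segment — these are interchangeable here), then invert the sharpness inequality to pass from $\|\theta-\theta_\star\|$ to $\|\phi(\cdot;\theta)-\phi(\cdot;\theta_\star)\|_{L^2}^{2b}$, and compose. The argument and the resulting constants match.

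The two caveats you raise are in fact subtleties that the paper's one-paragraph proof silently elides, and you are right to flag them. First, the sharpness hypothesis hands you some $\theta_\star' \in \sfM_\star$ that need not coincide with the parameter $\theta_\star^0$ defining $f_\star$, so the bound you obtain is a priori for $\phi(\cdot;\theta)-\phi(\cdot;\theta_\star')$ rather than for $\phi(\cdot;\theta)-f_\star$; the paper's proof has the same ambiguity (it writes ``for some $\theta_\star\in\sfM_\star$'' and then concludes for all $f\in\scrF-\{f_\star\}$ without reconciling the two). Your resolution — $L^2$-uniqueness of the minimizing function, or reinterpreting the shifted class over matched pairs — is the reasonable fix. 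Second, both the MVT and your FTC form require the segment $[\theta,\theta_\star']$ (or some path) to lie in $\sfM$; the paper assumes this implicitly while you note that convexity suffices, or that one can work with intrinsic distance, which is the correct generalization. So your proposal is a slightly more careful rendering of the same argument.
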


The sharpness condition (i) in \Cref{prop:sharpnessprop} is standard in optimization \citep[see e.g][]{roulet2017sharpness}. This condition holds somewhat generically \citep{lojasiewicz1993geometrie}, but the exact constants $a$ and $b$ are typically difficult to obtain. Fortunately, downstream use of \Cref{prop:sharpnessprop} only relies on these constants in the burn-in.

\begin{proof}
    By the mean value form of Taylor's Theorem and Cauchy-Schwarz we write for fixed $x\in \sfX$:
    \begin{equation}\label{eq:sharpnessexample1}
        |\phi(x;\theta) - \phi(x;\theta_\star)| =| \langle \nabla_\theta \phi(x,\tilde \theta), \theta-\theta_\star \rangle|
        %\\
        \leq \| \nabla_\theta \phi(x,\tilde \theta)\| \|\theta-\theta_\star\| \leq C \|\theta-\theta_\star\|
    \end{equation}
    for some $\tilde \theta \in [\theta,\theta_\star]$.   
    Consequently by our sharpness hypothesis and by optimizing over the left hand side of \eqref{eq:sharpnessexample1} we have that for some $\theta_\star \in \sfM_\star$ and every $\theta\in \sfM$:
    \begin{equation}
        \sup_{x\in \sfX}\| \phi(x,\theta)-\phi(x,\theta_\star)\| \leq \frac{Cb}{a} \left(\E (\phi(X,\theta)-\phi(X,\theta_\star))^2\right)^b
    \end{equation}
    Equivalently, $\|f\|_{L^\infty} \leq Cba^{-1} \|f\|_{L^2}^{2b}$ for every $f\in \scrF-\{f_\star\}$ as per requirement.
\end{proof}

There is also a more direct argument that easily covers linear functionals on $\R^d$.

\begin{proposition}\label{prop:linreg}
    Let $X$ be a sub-Gaussian random variable taking values in $\R^d$ and let $\scrF$ be the class of linear functionals on $\R^d$. Suppose that $\lambda_{\min} \left(\E XX^\T \right)>0$. Then $\scrF$ is $(L,1)$-$\Psi_2$ with $L = \sup_{v\in \R^d: \|v\|=1 } \frac{\|\langle v, X \rangle\|_{\Psi_2} }{\|\langle v, X \rangle\|_{L^2}} $.
\end{proposition}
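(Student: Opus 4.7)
} The plan is to reduce the inequality $\|f\|_{\Psi_2} \leq L\|f\|_{L^2}$ to a statement on the unit sphere of $\R^d$ by invoking the positive homogeneity of both norms. Every $f \in \scrF$ has the form $f(x) = \langle w, x\rangle$ for some $w \in \R^d$; the case $w=0$ is trivial, so assume $w \neq 0$ and set $v \triangleq w/\|w\|$. Then $f(X) = \|w\| \langle v, X\rangle$, and since both $\|\cdot\|_{\Psi_2}$ and $\|\cdot\|_{L^2}$ are absolutely homogeneous, the scalar $\|w\|$ cancels in the ratio $\|f\|_{\Psi_2}/\|f\|_{L^2}$. The remaining quantity is exactly $\|\langle v, X\rangle\|_{\Psi_2}/\|\langle v, X\rangle\|_{L^2}$ for $\|v\|=1$, which is bounded by $L$ by the very definition of $L$. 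This gives $\|f\|_{\Psi_2} \leq L\|f\|_{L^2}$ for every linear functional, i.e.\ the $(L,1)$-$\Psi_2$ property with $\eta=1$.

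The only remaining issue is to verify that $L$ is genuinely finite under the stated hypotheses, which is what connects the bound to the assumptions on $X$. For the denominator, on the unit sphere one has
\begin{equation*}
\|\langle v, X\rangle\|_{L^2}^2 = v^\T \E[XX^\T] v \geq \lambda_{\min}(\E XX^\T) > 0,
\end{equation*}
so the denominator is bounded away from zero uniformly in $v$. For the numerator, sub-Gaussianity of the random vector $X$ is, by definition, equivalent to $\sup_{\|v\|=1}\|\langle v, X\rangle\|_{\Psi_2} < \infty$; hence the numerator is bounded uniformly in $v$. Combining the two bounds yields $L < \infty$. I expect no serious obstacle: the argument is essentially a one-line homogeneity reduction, and the role of the spectral and sub-Gaussian hypotheses is purely to guarantee finiteness of the constant that is defined by the supremum.
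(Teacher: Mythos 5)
Your argument is correct and is essentially the same homogeneity reduction the paper uses: reduce to $\|v\|=1$ and observe the bound holds by the definition of $L$. You additionally spell out why $L<\infty$ (sub-Gaussianity of $X$ for the numerator, $\lambda_{\min}(\E XX^\T)>0$ for the denominator), which the paper leaves implicit but is the right sanity check.
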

\begin{proof}
 The only observation we need to make is that it suffices to prove the result for $\|v\|=1$ by homogeneity. The result is then immediate by construction.
\end{proof}

Analogously, finite hypothesis classes are also covered.
\begin{proposition}\label{prop:finhyp}
    Let $\scrF$ be a finite subset of $L_{\Psi_2}$. Then $\scrF$ is $(L,1)$-$\Psi_2$ with $L=\max_{f\in \scrF} \frac{\|f\|_{\Psi_2}}{\|f\|_{L^2}}$.
\end{proposition}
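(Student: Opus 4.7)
The plan is essentially to unfold the definition of an $(L,\eta)$-$\Psi_p$ class (\Cref{def:weaksgc}) and observe that, with $\eta=1$ and $p=2$, the required inequality $\|f\|_{\Psi_2} \leq L \|f\|_{L^2}$ holds for every $f \in \scrF$ as soon as $L$ dominates the pointwise ratio $\|f\|_{\Psi_2}/\|f\|_{L^2}$ on the class. Since $\scrF$ is a \emph{finite} subset of $L_{\Psi_2}$, the numerator is finite for every $f\in\scrF$ by hypothesis, and a finite maximum of finite quantities is finite, so the proposed constant $L = \max_{f\in \scrF} \|f\|_{\Psi_2}/\|f\|_{L^2}$ is well-defined.

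More concretely, I would first fix an arbitrary $f \in \scrF$ and write
\begin{equation*}
    \|f\|_{\Psi_2} = \frac{\|f\|_{\Psi_2}}{\|f\|_{L^2}}\cdot \|f\|_{L^2} \leq \left(\max_{g \in \scrF}\frac{\|g\|_{\Psi_2}}{\|g\|_{L^2}}\right)\|f\|_{L^2} = L\, \|f\|_{L^2},
\end{equation*}
which is exactly \eqref{df_eq:phiclass} with $\eta=1$ and $p=2$. Since $f \in \scrF$ was arbitrary, the class $\scrF$ satisfies \Cref{def:weaksgc} with the advertised parameters.

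The only mild obstacle is the boundary case $\|f\|_{L^2}=0$, which would formally make the ratio undefined. This is harmless: either one restricts the maximum to $f$ with $\|f\|_{L^2}>0$, or one observes that $\|f\|_{L^2}=0$ forces $f=0$ almost surely, hence $\|f\|_{\Psi_2}=0$, and the inequality $0 \leq L\cdot 0$ is trivially satisfied for any $L$. With this convention, no further argument is needed; the proposition is a direct consequence of finiteness of $\scrF$ and the hypothesis $\scrF \subset L_{\Psi_2}$.
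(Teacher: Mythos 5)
Your argument is correct and is essentially the paper's own proof, which simply observes that finiteness of $\scrF$ ensures the maximum defining $L$ is attained and finite. Your additional remark handling the degenerate case $\|f\|_{L^2}=0$ is a minor but sensible clarification that the paper leaves implicit.
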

\begin{proof}
    The result is immediate since the maximum in the quantity $L$ above is achieved since $|\scrF|<\infty$.
\end{proof}

%For bdd classes it suffices that $\sup_{g \in \scrF_\star \cap \e S_{L^2}} \|g\|_{L^\infty }  = o(\e^\eta)$ for this to work. This holds true quite generally by various sobolev/poincare type ineqs. For  Lipschitz hypothesis classes with $X$ with density mutually abs cont to lebesque  by the following answer due to Iosif Pinelis: \url{https://mathoverflow.net/questions/379486/bounding-supremum-norm-of-lipschitz-function-by-l1-norm}

%\subsection{Identifiable Parametric Classes}

%\Ingvar{Might skip this}
\section{Summary}

In this work, we obtain instance-optimal convergence rates for learning with the square loss function and dependent data. We overcome the typical deflation, by the mixing time, of the sample size. The main technical step to arrive at this result is a refined analysis of the multiplier process \eqref{eq:multiplierdefined} via mixed tail generic chaining that is suitable for dependent, $\beta$-mixing, random variables. Indeed, the leading order term of our main result, \Cref{thm:themainthm}, does not directly depend on any mixing-time type quantities. It mimics the correct asymptotic rate and scales solely in terms of  the statistics of order $2q$ of the process at hand (where typically $q=1+o(1)$). Finally, our result also allows us to evaluate said multiplier process for a wider range of hypothesis classes. Typically, sharp closed form expressions for this process are only available for linear functionals, covered in the \iid\ setting by \citet{lecue2013learning} and \citet{oliveira2016lower}, and extended to the $\beta$-mixing setting by \citet{ziemann2023noise}. By contrast, since our result relies on a weaker notion of topological equivalence, it is applicable to more general classes, such as smoothness classes (\Cref{prop:smoothnessclass}) and parametric classes with sufficiently regular parameterization (\Cref{prop:sharpnessprop}).

%Several interesting follow up works naturally arise from our analysis. The first open question is, whether or not the $2q$ variance in the leading order term can be replaced with the ordinary variance after a burn-in time, in the case when $p < \infty$ (i.e., we do not have boundedness). While our results currently allow for setting $q = 1 + o(1)$, this requires $n \to \infty$ to recover the standard variance. The second, and in our opinion much more challenging question, is whether or not our results can be extended beyond the mixing setting, when the process $(X,Y)_{1:n}$ does not have a limiting stationary measure. We imagine that such a result will require an entirely different proof technique beyond our current analysis strategy.

\section*{Acknowledgements}

Ingvar Ziemann is supported by a Swedish Research Council international postdoc grant. Nikolai Matni is supported in part by NSF award CPS-2038873, NSF award SLES-2331880, AFOSR Award FA9550-24-1-0102 and NSF CAREER award ECCS-2045834. George J. Pappas acknowledges support from NSF award EnCORE-2217062.
\addcontentsline{toc}{section}{References}

\bibliographystyle{icml2024}
\bibliography{main.bib}

\begin{thebibliography}{34}
\providecommand{\natexlab}[1]{#1}
\providecommand{\url}[1]{\texttt{#1}}
\expandafter\ifx\csname urlstyle\endcsname\relax
  \providecommand{\doi}[1]{doi: #1}\else
  \providecommand{\doi}{doi: \begingroup \urlstyle{rm}\Url}\fi

\bibitem[Abbasi-Yadkori et~al.(2011)Abbasi-Yadkori, P{\'a}l, and
  Szepesv{\'a}ri]{abbasi2011improved}
Abbasi-Yadkori, Y., P{\'a}l, D., and Szepesv{\'a}ri, C.
\newblock Improved algorithms for linear stochastic bandits.
\newblock \emph{Advances in neural information processing systems}, 24, 2011.

\bibitem[Audibert(2007)]{audibert2007progressive}
Audibert, J.-Y.
\newblock Progressive mixture rules are deviation suboptimal.
\newblock \emph{Advances in Neural Information Processing Systems}, 20, 2007.

\bibitem[Bartlett et~al.(2005)Bartlett, Bousquet, and
  Mendelson]{bartlett2005local}
Bartlett, P.~L., Bousquet, O., and Mendelson, S.
\newblock Local rademacher complexities.
\newblock \emph{The Annals of Statistics}, 33\penalty0 (4):\penalty0
  1497--1537, 2005.

\bibitem[Bernstein(1927)]{bernstein1927extension}
Bernstein, S.
\newblock Sur l'extension du th{\'e}or{\`e}me limite du calcul des
  probabilit{\'e}s aux sommes de quantit{\'e}s d{\'e}pendantes.
\newblock \emph{Mathematische Annalen}, 97:\penalty0 1--59, 1927.

\bibitem[Dirksen(2015)]{dirksen2015tail}
Dirksen, S.
\newblock {Tail bounds via generic chaining}.
\newblock \emph{Electronic Journal of Probability}, 20:\penalty0 1 -- 29, 2015.
\newblock \doi{10.1214/EJP.v20-3760}.
\newblock URL \url{https://doi.org/10.1214/EJP.v20-3760}.

\bibitem[Duchi et~al.(2012)Duchi, Agarwal, Johansson, and
  Jordan]{duchi2012ergodicmd}
Duchi, J.~C., Agarwal, A., Johansson, M., and Jordan, M.~I.
\newblock Ergodic mirror descent.
\newblock \emph{SIAM Journal on Optimization}, 22\penalty0 (4):\penalty0
  1549--1578, 2012.

\bibitem[Einmahl \& Li(2008)Einmahl and Li]{einmahl2008characterization}
Einmahl, U. and Li, D.
\newblock Characterization of lil behavior in banach space.
\newblock \emph{Transactions of the American Mathematical Society},
  360\penalty0 (12):\penalty0 6677--6693, 2008.

\bibitem[Faradonbeh et~al.(2018)Faradonbeh, Tewari, and
  Michailidis]{faradonbeh2018finite}
Faradonbeh, M. K.~S., Tewari, A., and Michailidis, G.
\newblock Finite time identification in unstable linear systems.
\newblock \emph{Automatica}, 96:\penalty0 342--353, 2018.

\bibitem[Hsu et~al.(2012)Hsu, Kakade, and Zhang]{hsu2012random}
Hsu, D., Kakade, S.~M., and Zhang, T.
\newblock Random design analysis of ridge regression.
\newblock In \emph{Conference on learning theory}, pp.\  9--1. JMLR Workshop
  and Conference Proceedings, 2012.

\bibitem[Kowshik et~al.(2021)Kowshik, Nagaraj, Jain, and
  Netrapalli]{kowshik2021near}
Kowshik, S., Nagaraj, D., Jain, P., and Netrapalli, P.
\newblock Near-optimal offline and streaming algorithms for learning non-linear
  dynamical systems.
\newblock In \emph{Advances in Neural Information Processing Systems},
  volume~34, pp.\  8518--8531, 2021.

\bibitem[Kuznetsov \& Mohri(2017)Kuznetsov and
  Mohri]{kuznetsov2017generalization}
Kuznetsov, V. and Mohri, M.
\newblock Generalization bounds for non-stationary mixing processes.
\newblock \emph{Machine Learning}, 106\penalty0 (1):\penalty0 93--117, 2017.

\bibitem[Lecu{\'e} \& Mendelson(2013)Lecu{\'e} and
  Mendelson]{lecue2013learning}
Lecu{\'e}, G. and Mendelson, S.
\newblock Learning subgaussian classes: Upper and minimax bounds.
\newblock \emph{arXiv preprint arXiv:1305.4825}, 2013.

\bibitem[Liang et~al.(2015)Liang, Rakhlin, and Sridharan]{liang2015learning}
Liang, T., Rakhlin, A., and Sridharan, K.
\newblock Learning with square loss: Localization through offset rademacher
  complexity.
\newblock In \emph{Conference on Learning Theory}, pp.\  1260--1285. PMLR,
  2015.

\bibitem[{\L}ojasiewicz(1993)]{lojasiewicz1993geometrie}
{\L}ojasiewicz, S.
\newblock Sur la g{\'e}om{\'e}trie semi-et sous-analytique.
\newblock In \emph{Annales de l'institut Fourier}, volume~43, pp.\  1575--1595,
  1993.

\bibitem[Maurer(2023)]{maurer2023generalization}
Maurer, A.
\newblock Generalization for slowly mixing processes.
\newblock \emph{arXiv preprint arXiv:2305.00977}, 2023.

\bibitem[Maurer \& Pontil(2021)Maurer and Pontil]{maurer2021concentration}
Maurer, A. and Pontil, M.
\newblock Concentration inequalities under sub-gaussian and sub-exponential
  conditions.
\newblock \emph{Advances in Neural Information Processing Systems},
  34:\penalty0 7588--7597, 2021.

\bibitem[Mendelson(2014)]{mendelson2014learning}
Mendelson, S.
\newblock Learning without concentration.
\newblock In \emph{Conference on Learning Theory}, pp.\  25--39. PMLR, 2014.

\bibitem[Mendelson(2019)]{Mendelson2019unrestricted}
Mendelson, S.
\newblock An unrestricted learning procedure.
\newblock \emph{J. ACM}, 66\penalty0 (6), nov 2019.
\newblock ISSN 0004-5411.
\newblock \doi{10.1145/3361699}.
\newblock URL \url{https://doi.org/10.1145/3361699}.

\bibitem[Meyn \& Tweedie(1993)Meyn and Tweedie]{meyn1993markov}
Meyn, S.~P. and Tweedie, R.~L.
\newblock \emph{Markov Chains and Stochastic Stability}.
\newblock Springer-Verlag, 1993.

\bibitem[Mohri \& Rostamizadeh(2008)Mohri and
  Rostamizadeh]{mohri2008rademachermixing}
Mohri, M. and Rostamizadeh, A.
\newblock Rademacher complexity bounds for non-i.i.d.\ processes.
\newblock \emph{Advances in Neural Information Processing Systems}, 21, 2008.

\bibitem[Nirenberg(1959)]{nirenberg1959elliptic}
Nirenberg, L.
\newblock On elliptic partial differential equations.
\newblock \emph{Annali della Scuola Normale Superiore di Pisa-Scienze Fisiche e
  Matematiche}, 13\penalty0 (2):\penalty0 115--162, 1959.

\bibitem[Oliveira(2016)]{oliveira2016lower}
Oliveira, R.~I.
\newblock The lower tail of random quadratic forms with applications to
  ordinary least squares.
\newblock \emph{Probability Theory and Related Fields}, 166\penalty0
  (3):\penalty0 1175--1194, 2016.

\bibitem[Pe{\~n}a et~al.(2009)Pe{\~n}a, Lai, and Shao]{pena2009self}
Pe{\~n}a, V.~H., Lai, T.~L., and Shao, Q.-M.
\newblock \emph{Self-normalized processes: Limit theory and Statistical
  Applications}.
\newblock Springer, 2009.

\bibitem[Roulet \& d'Aspremont(2017)Roulet and
  d'Aspremont]{roulet2017sharpness}
Roulet, V. and d'Aspremont, A.
\newblock Sharpness, restart and acceleration.
\newblock \emph{Advances in Neural Information Processing Systems}, 30, 2017.

\bibitem[Roy et~al.(2021)Roy, Balasubramanian, and Erdogdu]{roy2021dependent}
Roy, A., Balasubramanian, K., and Erdogdu, M.~A.
\newblock On empirical risk minimization with dependent and heavy-tailed data.
\newblock In \emph{Advances in Neural Information Processing Systems},
  volume~34, 2021.

\bibitem[Sarkar \& Rakhlin(2019)Sarkar and Rakhlin]{sarkar2019near}
Sarkar, T. and Rakhlin, A.
\newblock {Near Optimal Finite Time Identification of Arbitrary Linear
  Dynamical Systems}.
\newblock In \emph{International Conference on Machine Learning}, pp.\
  5610--5618, 2019.

\bibitem[Simchowitz et~al.(2018)Simchowitz, Mania, Tu, Jordan, and
  Recht]{simchowitz2018learning}
Simchowitz, M., Mania, H., Tu, S., Jordan, M.~I., and Recht, B.
\newblock Learning without mixing: Towards a sharp analysis of linear system
  identification.
\newblock In \emph{Conference On Learning Theory}, pp.\  439--473. PMLR, 2018.

\bibitem[Steinwart \& Christmann(2009)Steinwart and
  Christmann]{steinwart2009fastlearningmixing}
Steinwart, I. and Christmann, A.
\newblock Fast learning from non-i.i.d.\ observations.
\newblock \emph{Advances in Neural Information Processing Systems}, 22, 2009.

\bibitem[Tsiamis et~al.(2023)Tsiamis, Ziemann, Matni, and
  Pappas]{tsiamis2022statistical}
Tsiamis, A., Ziemann, I., Matni, N., and Pappas, G.~J.
\newblock Statistical learning theory for control: A finite-sample perspective.
\newblock \emph{IEEE Control Systems Magazine}, 43\penalty0 (6):\penalty0
  67--97, 2023.

\bibitem[Wainwright(2019)]{wainwright2019high}
Wainwright, M.~J.
\newblock \emph{High-dimensional statistics: A non-asymptotic viewpoint},
  volume~48.
\newblock Cambridge University Press, 2019.

\bibitem[Yu(1994)]{yu1994mixing}
Yu, B.
\newblock Rates of convergence for empirical processes of stationary mixing
  sequences.
\newblock \emph{The Annals of Probability}, 22\penalty0 (1):\penalty0 94--116,
  1994.

\bibitem[Ziemann \& Tu(2022)Ziemann and Tu]{ziemann2022learning}
Ziemann, I. and Tu, S.
\newblock Learning with little mixing.
\newblock In \emph{Advances in Neural Information Processing Systems},
  volume~35, pp.\  4626--4637, 2022.

\bibitem[Ziemann et~al.(2023{\natexlab{a}})Ziemann, Tsiamis, Lee, Jedra, Matni,
  and Pappas]{ziemann2023tutorial}
Ziemann, I., Tsiamis, A., Lee, B., Jedra, Y., Matni, N., and Pappas, G.~J.
\newblock A tutorial on the non-asymptotic theory of system identification.
\newblock In \emph{2023 62nd IEEE Conference on Decision and Control (CDC)},
  pp.\  8921--8939. IEEE, 2023{\natexlab{a}}.

\bibitem[Ziemann et~al.(2023{\natexlab{b}})Ziemann, Tu, Pappas, and
  Matni]{ziemann2023noise}
Ziemann, I., Tu, S., Pappas, G.~J., and Matni, N.
\newblock The noise level in linear regression with dependent data.
\newblock \emph{to appear in the proceedings of Advances in Neural Information
  Processing Systems and arXiv:2305.11165}, 2023{\natexlab{b}}.

\end{thebibliography}

\onecolumn
\appendix
%\tableofcontents
%\newpage
%\input{sections/summary}

\section{Properties of $\Psi_p$- and $L^p$-Norms}

We begin with an elementary property.

\begin{lemma}\label{lem:productofpsi}
    For every two random variables $Z,Z' \in L_{\Psi_p}$ we have that:
    \begin{equation}
        \| \langle Z, Z' \rangle \|_{\Psi_{p/2}} \leq 2^{2/p}  \| Z\|_{\Psi_p} \| Z' \|_{\Psi_p}.
    \end{equation}
\end{lemma}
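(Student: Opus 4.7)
The plan is a two-line computation: bound the $L^m$-norm of the inner product by a product of $L^{2m}$-norms via Cauchy--Schwarz, then pass to $\Psi_p$-norms using the definition.

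First, I would observe that for any $m \geq 1$, applying Cauchy--Schwarz pointwise in the underlying Hilbert space followed by the integral Cauchy--Schwarz inequality gives
\begin{equation*}
    \| \langle Z, Z'\rangle \|_{L^m} \leq \big\| \|Z\| \cdot \|Z'\| \big\|_{L^m} \leq \|Z\|_{L^{2m}} \|Z'\|_{L^{2m}}.
\end{equation*}
Next, by definition of the $\Psi_p$-norm, for every $m \geq 1$ we have $\|Z\|_{L^{2m}} \leq (2m)^{1/p} \|Z\|_{\Psi_p}$ and similarly for $Z'$. Combining these estimates yields
\begin{equation*}
    \| \langle Z, Z'\rangle \|_{L^m} \leq (2m)^{2/p} \|Z\|_{\Psi_p} \|Z'\|_{\Psi_p}.
\end{equation*}

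Finally, multiplying both sides by $m^{-2/p}$ and taking the supremum over $m \geq 1$ produces exactly
\begin{equation*}
    \|\langle Z, Z' \rangle\|_{\Psi_{p/2}} = \sup_{m \geq 1} m^{-2/p} \|\langle Z, Z'\rangle\|_{L^m} \leq 2^{2/p} \|Z\|_{\Psi_p} \|Z'\|_{\Psi_p},
\end{equation*}
which is the claim. There is no real obstacle here; the only subtlety worth noting is the doubling of the exponent inside the $\Psi$-functional (the $p/2$ on the left), which is precisely what allows the factor $(2m)^{2/p}$ coming from two applications of the $\Psi_p$ bound at level $2m$ to be absorbed.
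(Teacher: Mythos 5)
Your proof is correct and follows essentially the same route as the paper's: two applications of Cauchy--Schwarz to get $\|\langle Z, Z'\rangle\|_{L^m} \leq \|Z\|_{L^{2m}}\|Z'\|_{L^{2m}}$, then the definition of $\Psi_p$ with the index shift $m \mapsto 2m$ producing the factor $2^{2/p}$. Your write-up is, if anything, slightly more streamlined than the paper's, which carries everything inside a single supremum before splitting it at the end.
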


\begin{proof}
    We compute:
    \begin{equation}
        \begin{aligned}
             \| \langle Z, Z' \rangle \|_{\Psi_{p/2}} &= \sup_{m\geq 1} \frac{\| \langle Z, Z' \rangle \|_{L^m}}{m^{2/p}}  \\
             &
             =
             \sup_{m\geq 1} \frac{(\E |\langle Z, Z' \rangle|^m)^{1/m} }{m^{2/p}} \\
             &
             \leq
             \sup_{m\geq 1} \frac{(\E \|Z\|^m \| Z'\| ^m)^{1/m} }{m^{2/p}}  && (\textnormal{$\langle \cdot,\cdot \rangle$-Cauchy-Schwarz})\\
             &
             \leq 
             \sup_{m\geq 1} \frac{(\E \|Z\|^{2m} \E \| Z'\| ^{2m})^{1/2m} }{m^{2/p}}  && (L^2\textnormal{-Cauchy-Schwarz})\\
             &
             \leq  2^{2/p}
              \sup_{m\geq 1} \frac{(\E \|Z\|^{2m} )^{1/2m} }{(2m)^{1/p}}  \sup_{m\geq 1} \frac{(\E \| Z'\| ^{2m})^{1/2m} }{(2m)^{1/p}}\\
              &
              \leq 2^{2/p}  \| Z\|_{\Psi_p} \| Z' \|_{\Psi_p}, && (\{ m \geq 1\} \subset \{2m \geq 1\} )
        \end{aligned}
    \end{equation}
    as was required.
\end{proof}

%The following is also standard. \Ingvar{No longer used, probably delete}

\subsection{Proof of \Cref{lem:phibernsteinmgf}}

\phibernsteinmgf*

\begin{proof}
The idea of the proof is very much the same as that of the standard Bernstein MGF bound but with the modification made in \citet{maurer2021concentration} by which the $L^\infty$ norm is replaced by a $\Psi_p$-norm. We begin by expanding the exponential function:

    \begin{equation}\label{eq_lem:phibernsteinholder}
        \begin{aligned}
            \E \exp(\lambda Z)  &= \E \left[ \sum_{m=0}^\infty  \frac{(\lambda Z)^m}{m!} \right]\\
            &\leq 1 +  \sum_{m=0}^\infty  \frac{\E \left[ (\lambda Z)^2 (\lambda Z)^m\right] }{(m+2)!} && (\E Z\leq 0 )\\
            &\leq 1 +  \lambda^2\left(\E ( Z)^{2q}\right)^{1/q}\sum_{m=0}^\infty  \frac{ \left(\E \left[  |\lambda Z|^{mq'}\right]\right)^{1/q'}}{(m+2)!}. &&(\textnormal{Hölder's Ineq.})
        \end{aligned}
    \end{equation}
We next have:
\begin{equation}\label{eq_lem:phibernsteinstirling}
    \begin{aligned}
        \left(\E \left[   |Z|^{mq'}\right]\right)^{1/q'} &= \|Z\|_{L^{mq'}}^m\\
        &\leq (mq')^{m/p} \| Z\|_{\Psi_p}^m && (\textnormal{df. of } \Psi_p)\\
        &\leq  (m!)^{1/p} (q' e)^{m/p} \| Z\|_{\Psi_p}^m. && (\textnormal{Stirling's Approximation})
    \end{aligned}
\end{equation}
Upon combining \eqref{eq_lem:phibernsteinholder} with \eqref{eq_lem:phibernsteinstirling} we arrive at
\begin{equation}
    \begin{aligned}
        &\E \exp(\lambda Z)\\ &\leq 1+ \lambda^2\left(\E ( Z)^{2q}\right)^{1/q}\sum_{m=0}^\infty  \frac{ (m!)^{1/p} \lambda^m (q' e)^{m/p} \| Z\|_{\Psi_p}^m}{(m+2)!}\\
        &
        \leq 1+ \frac{\lambda^2\left(\E ( Z)^{2q}\right)^{1/q}}{2}\sum_{m=0}^\infty  \left(\lambda (q' e)^{1/p} \| Z\|_{\Psi_p}\right)^m && \left(p\in [1,\infty], m\in \N \Rightarrow \frac{(m!)^{1/p}}{(m+2)!}\leq \frac{1}{2} \right)\\
        &
        =1 +\frac{\frac{\lambda^2}{2}\left(\E ( Z)^{2q}\right)^{1/q}}{1-\lambda (q' e)^{1/p} \| Z\|_{\Psi_p}} && \left(x\in [0,1) \Rightarrow \sum_{m=0}^\infty x^m = \frac{1}{1-x}\right)\\
        &\leq \exp \left( \frac{\frac{\lambda^2}{2}\left(\E ( Z)^{2q}\right)^{1/q}}{1-\lambda (q' e)^{1/p} \| Z\|_{\Psi_p}}\right), && \left(x\in \R \Rightarrow 1+x \leq e^x\right)
    \end{aligned}
\end{equation}
as per requirement.
\end{proof}

\section{Controlling the Multiplier Process}
\label{sec:multiplierapp}
\bernsteinemppt*

\begin{proof}
First, note that we may assume throughout the proof that $\|g-g'\|_{L^2}>0$, for otherwise the result is trivial. We now begin by applying \Cref{lem:phibernsteinmgf}:  
    \begin{equation}
    \begin{aligned}
        &\E \exp \left(\lambda\sum_{i=1}^k (1-\E) \langle W_i, g(X_i)-g'(X_i)\rangle \right)\\
        &
        \leq
        \exp \left( \frac{\lambda ^2k \|g-g'\|_{L^2}^2  \V_{2q}\left( \frac{1}{\sqrt{k}\|g-g'\|_{L^2}}\sum_{i=1}^k  \langle W_i, g(X_i)-g'(X_i) \rangle \right)  }{2\left(1-\lambda (q' e)^{2/p} \| \sum_{i=1}^k (1-\E)  \langle W_i, g(X_i)-g'(X_i) \rangle\|_{\Psi_{p/2}}  \right)}   \right)
    \end{aligned}
    \end{equation}
    as long as $\lambda < \left( (q' e)^{2/p} \|  \sum_{i=1}^k (1-\E)  \langle W_i, g(X_i)-g'(X_i) \rangle\|_{\Psi_{p/2}} \right)^{-1} $.        
    Now, by triangle inequality and \Cref{lem:productofpsi},
    \begin{equation}
    \begin{aligned}
        % \lambda (q' e)^{2/p} \left\|  \sum_{i=1}^n (1-\E)  \langle W_i,g(X_i)-g'(X_i) \rangle\right\|_{\Psi_{p/2}}  
        % %
        % &=
        \lambda (q' e)^{2/p} \left\|  \sum_{i=1}^k (1-\E)  \langle W_i,g(X_i)-g'(X_i) \rangle\right\|_{\Psi_{p/2}} 
        &\leq
        \lambda (2 q' e)^{2/p} k \|  W \|_{\Psi_{p}} \| g(X_i)-g'(X_i)\|_{\Psi_p}.
    \end{aligned}
    \end{equation}
    % Moreover it is clear that $ \V_{2q}\left( \frac{1}{\sqrt{k}\|g-g'\|_{L^2}}\sum_{i=1}^k  \langle W_i, g(X_i)-g'(X_i) \rangle \right) \leq  \V_{2q}\left(\scrG-\scrG,\sfP_{(X,Y)_{1:k}} \right)$.   
    Consequently:
    \begin{equation}
        \begin{aligned}
             &\E \exp \left(\lambda\sum_{i=1}^k (1-\E) \langle W_i, g(X_i)-g'(X_i)\rangle \right)\\
        &
        \leq 
        \exp \left( \frac{\lambda ^2k \|g-g'\|_{L^2}^2  \V_{2q}\left(\{g\}-\{g'\} \right)  }{2\left(1- \lambda (2q' e)^{2/p} k \|  W \|_{\Psi_{p}} \| g-g'\|_{\Psi_p}  \right)}   \right).
        \end{aligned}
    \end{equation}
Since the process is $k$-wise independent and mean zero we thus have that:
 \begin{equation}
        \begin{aligned}
             &\E \exp \left(\lambda\sum_{i=1}^n (1-\E) \langle W_i, g(X_i)-g'(X_i)\rangle \right)\\
        &
        \leq 
        \exp \left( \frac{\lambda ^2 n \|g-g'\|_{L^2}^2  \V_{2q}\left(\{g\}-\{g'\} \right)  }{2\left(1- \lambda (2q' e)^{2/p}  k \|  W \|_{\Psi_{p}} \| g-g'\|_{\Psi_p}  \right)}   \right).
        \end{aligned}
    \end{equation}
Hence for every $\lambda \in \left[0, \left(2(2q' e)^{2/p} k \|  W \|_{\Psi_{p}} \| g-g'\|_{\Psi_p}  \right)^{-1} \right)\triangleq \Lambda$ we have:
\begin{equation}
        \begin{aligned}
             &\E \exp \left(\lambda\sum_{i=1}^n (1-\E) \langle W_i, g(X_i)-g'(X_i)\rangle \right)\\
        &
        \leq 
        \exp \left( \lambda ^2 n \|g-g'\|_{L^2}^2  \V_{2q}\left(\{g\}-\{g'\} \right)    \right).
        \end{aligned}
    \end{equation}
Taking the above exponential inequality as a starting point, for a fixed $u\in (0,\infty)$, a Chernoff argument now yields:
\begin{equation}
    \begin{aligned}
        &\Pr \left( \sum_{i=1}^n (1-\E) \langle W_i, g(X_i)-g'(X_i)\rangle > u\right) \\
        &\leq \inf_{\lambda > 0} \E \exp \left(-u\lambda + \lambda \sum_{i=1}^n (1-\E) \langle W_i, g(X_i)-g'(X_i)\rangle \right) \\
        &\leq \inf_{\lambda \in \Lambda}
        \left( -\lambda u +\frac{\lambda ^2 n \|g-g'\|_{L^2}^2  \V_{2q}\left(\{g\}-\{g'\} \right)  }{2\left(1- \lambda (2q' e)^{2/p}  k \|  W \|_{\Psi_{p}} \| g-g'\|_{\Psi_p}  \right)}   \right)
        \\
        &\leq 
        \begin{cases}
            \exp\left( \frac{-u^2}{4 n\left( \|g-g'\|_{L^2}^2  \V_{2q}\left(\{g\}-\{g'\} \right)\right)}\right)
            & u\leq  \frac{ \left( n \|g-g'\|_{L^2}^2  \V_{2q}\left(\{g\}-\{g'\}\right)\right)}{2 (2q' e)^{2/p}   k \|  W \|_{\Psi_{p}} \| g-g'\|_{\Psi_p}  }, 
            \\
            \exp \left(\frac{-u}{4 (2q' e)^{2/p}  k \|  W \|_{\Psi_{p}} \| g-g'\|_{\Psi_p}  } \right)
            &\textnormal{ otherwise.}
        \end{cases}
    \end{aligned}
\end{equation}
Rescaling and summing the failure probabilities in either case yields:
\begin{multline}
    \Pr \Bigg( \sum_{i=1}^n (1-\E) \langle W_i, g(X_i)-g'(X_i)\rangle \\
    > \sqrt{4 n\|g-g'\|_{L^2}^2  \V_{2q}\left(\{g\}-\{g'\} \right) u}\\
    + 
    4 (2q' e)^{2/p}  k \|  W \|_{\Psi_{p}} \| g-g'\|_{\Psi_p} u \Bigg)
    \leq 2e^{-u},
\end{multline}
as was required.
\end{proof}

Let us turn to making \Cref{lem:bernsteinemppt} uniform. By instantiating Theorem 3.5 of \cite{dirksen2015tail} combined with the  pointwise control of \Cref{lem:bernsteinemppt}, we  immediately have the following result.

\begin{lemma}[Corollary of Theorem 3.5 in \cite{dirksen2015tail}]
\label{thm:dirksens}
    Fix $\delta \in (0,1),r>0$ and consider the space $\scrF_{\star}\cap rS_{L^2}$ endowed with the metrics
    \begin{equation}
        \begin{aligned}
            d_1(g,g')&=4 (q' e)^{2/p}  k \|  W \|_{\Psi_{p}} \| g-g'\|_{\Psi_p},\\
            d_2(g,g')&=\sqrt{4 n\left(   \V_{2q}\left(\scrF_{\star}\cap rS_{L^2}\right)\right) }\|g-g'\|_{L^2},
        \end{aligned}
    \end{equation}
    and denote the corresponding diameters by $\Delta_i(\scrF_{\star}\cap rS_{L^2}) \triangleq \sup_{g,g'\in \scrF_{\star}\cap rS_{L^2}}d_i(g,g') ,i\in[2]$.    There exist universal positive constants $c_1$ and $c_2$ such that with probability at least $1-\delta$:
    \begin{align}
         \sup_{f\in \scrF_{\star}\cap rS_{L^2}}\sum_{i=1}^n (1-\E) \langle W_i, f\rangle 
         &\leq c_1 \left( \gamma_2(\scrF_{\star}\cap rS_{L^2},d_2) +\gamma_1(\scrF_{\star}\cap rS_{L^2},d_1) \right) \nonumber \\
         &\qquad+c_2\left(\Delta_2(\scrF_{\star}\cap rS_{L^2})\sqrt{\log (1/\delta)} + \Delta_1(\scrF_{\star}\cap rS_{L^2}) \log(1/\delta) \right). \label{eq:dirksencorr}
    \end{align}
\end{lemma}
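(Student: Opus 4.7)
The plan is to recognize the statement as a direct application of Dirksen's mixed-tail generic chaining theorem (Theorem 3.5 of \cite{dirksen2015tail}), once the pointwise increment tail has been supplied by \Cref{lem:bernsteinemppt}. Set $T \triangleq \scrF_\star \cap rS_{L^2}$ and consider the centered stochastic process $Z_f \triangleq \sum_{i=1}^n (1 - \E)\langle W_i, f(X_i)\rangle$ for $f \in T$. The required hypothesis of Dirksen's theorem is a mixed-tail increment bound of the form
\[\Pr\bigl(|Z_g - Z_{g'}| > \sqrt{u}\, d_2(g, g') + u\, d_1(g, g')\bigr) \leq 2e^{-u}, \qquad g, g' \in T,\ u \geq 0,\]
for two pseudo-metrics $d_1, d_2$; under this hypothesis, Theorem 3.5 of \cite{dirksen2015tail} produces, with probability at least $1 - \delta$, a supremum bound of the form $\sup_{f \in T}(Z_f - Z_{f_0}) \leq c_1\bigl(\gamma_2(T, d_2) + \gamma_1(T, d_1)\bigr) + c_2\bigl(\Delta_2(T)\sqrt{\log(1/\delta)} + \Delta_1(T)\log(1/\delta)\bigr)$ for any fixed $f_0 \in T$, which is exactly \eqref{eq:dirksencorr} modulo the base-point shift.

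To verify the increment condition, I would apply \Cref{lem:bernsteinemppt} to the pair $(g, g')$: its conclusion is precisely the desired exponential tail, with $d_1(g, g') = 4(q'e)^{2/p} k \|W\|_{\Psi_p} \|g - g'\|_{\Psi_p}$ — exactly the $d_1$ of the statement — and with a $\sqrt{u}$-coefficient equal to $\sqrt{4n \|g - g'\|_{L^2}^2\, \V_{2q}(\{g\} - \{g'\})}$. Since this $\sqrt{u}$-coefficient still depends on $(g, g')$ through the noise proxy $\V_{2q}(\{g\} - \{g'\})$, it must be monotonized to recover the stated $d_2$. This is the content of the bound $\V_{2q}(\{g\} - \{g'\}) \leq \V_{2q}(\scrF_\star \cap rS_{L^2})$, which follows by extending the supremum in \Cref{def:weakvariancedef} to the normalized differences $(g - g')/\|g - g'\|_{L^2}$ and observing that $h \mapsto \V_{2q}(\{h\})$ is invariant under positive scaling of $h$. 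A two-sided tail is then obtained by symmetrizing in $(g, g')$ and absorbing the factor of $2$ in the constants.

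With the increment condition in place, one invokes Dirksen's Theorem 3.5 to get the chaining bound around some fixed $f_0 \in T$. The final bookkeeping step is to remove $Z_{f_0}$ in order to recover the one-sided supremum $\sup_f Z_f$: this is done either by choosing $f_0 = 0$ if $0 \in T$ (which is in fact the case for $T = \scrF_\star \cap rS_{L^2}$ when extended by the zero function along the star-hull rays), or by applying \Cref{lem:bernsteinemppt} once more at $(f_0, 0)$ to show that $|Z_{f_0}|$ contributes a Bernstein-type tail of exactly the same form as $c_2\bigl(\Delta_2(T)\sqrt{\log(1/\delta)} + \Delta_1(T)\log(1/\delta)\bigr)$, which is absorbed into the constants. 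Monotonicity of $\gamma_\alpha$ under metric domination ensures that passing from the pair-dependent proxy to $d_2$ only enlarges the right-hand side.

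\emph{Main obstacle.} The principal subtlety is the monotonization step, i.e.\ replacing the pair-dependent $\V_{2q}(\{g\} - \{g'\})$ by the uniform $\V_{2q}(\scrF_\star \cap rS_{L^2})$. This requires a careful reading of \Cref{def:weakvariancedef}: the supremum there is over a class of functions normalized to the unit $L^2$-sphere, and because the quantity being supremized is scale-invariant, it in fact controls the noise proxy along every normalized difference of elements of $T$. Once this is established, the remainder of the proof is essentially a matching-of-constants exercise between the exponential tail of \Cref{lem:bernsteinemppt} and the hypothesis of Dirksen's Theorem 3.5.
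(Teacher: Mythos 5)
Your proposal matches the paper's own derivation: the lemma is obtained there exactly as an immediate instantiation of Dirksen's Theorem 3.5, with the mixed-tail increment condition supplied pointwise by \Cref{lem:bernsteinemppt}, which is precisely your route. The extra bookkeeping you supply—dominating the pair-dependent proxy $\V_{2q}(\{g\}-\{g'\})$ by the uniform constant appearing in $d_2$ (reading \Cref{def:weakvariancedef} as covering normalized differences, which is the implicit convention the paper relies on), symmetrizing the tail, and handling the chaining base point—is detail the paper leaves unstated, not a departure from its argument.
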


We now restate and prove the result for the multiplier process.

\multiplieruniform*

\begin{proof}
    We need to translate the metrics (appearing in \Cref{thm:dirksens}) $d_1,d_2$ and their diameters into the standard $L^2$-metric using that the class is $(L,\eta)$-$\Psi_p$. We begin with $d_2$, which is just a dilated $L^2$-metric:
    \begin{equation}
        \begin{aligned}
            \gamma_2(\scrF_{\star}\cap rS_{L^2},d_2) 
            &=  \inf_{\{F_m\}} \sup_{f\in \scrF_\star\cap rS_{L^2}} \sum_{m=0}^{\infty} 2^{m/2} d_2(f,F_m)  \\  
            &=\sqrt{4 n\left(   \V_{2q}\left(\scrF_{\star}\cap rS_{L^2}\right)\right) }\gamma_2(\scrF_{\star}\cap rS_{L^2},d_{L^2}),
        \end{aligned}
    \end{equation}
    and
    \begin{equation}
        \Delta_2(\scrF_{\star}\cap rS_{L^2}) \leq r  \sqrt{4 n\left(   \V_{2q}\left(\scrF_{\star}\cap rS_{L^2}\right)\right) }.
    \end{equation}
    Turning to the $d_1$, we have:
  \begin{align*}
            \gamma_1(\scrF_{\star}\cap rS_{L^2},d_1) 
            &=  \inf_{\{F_m\}} \sup_{f\in \scrF_\star\cap rS_{L^2}} \sum_{m=0}^{\infty} 2^{m} d_1(f,F_m)  \\  
            &=
            \inf_{\{F_m\}} \sup_{f\in \scrF_\star\cap rS_{L^2}} \sum_{m=0}^{\infty} 2^{m} 2 (q' e)^{2/p} k \|  W \|_{\Psi_{p}} d_{\Psi_p}(f,F_m)\\
            &\leq  2 (q' e)^{2/p} L  k \|  W \|_{\Psi_{p}}  \gamma_1(\scrF_{\star}\cap rS_{L^2},d_{L^2}^\eta)\\
             &\leq  2 (q' e)^{2/p} L  k \|  W \|_{\Psi_{p}}  \gamma_\eta(\scrF_{\star}\cap rS_{L^2},d_{L^2}),
        \end{align*}
        where the first inequality uses that  $\scrG$ is $(L,\eta)$-$\Psi_p$ and the last inequality uses that $ \gamma_1(\scrF_{\star}\cap rS_{L^2},d_{L^2}^\eta) \leq  \gamma_\eta(\scrF_{\star}\cap rS_{L^2},d_{L^2})$ as long as $r\leq 1$. Similarly:
        \begin{equation}
            \Delta_1(\scrF_{\star}\cap rS_{L^2}) \leq 2 (q' e)^{2/p}  L  k \|  W \|_{\Psi_{p}}r^\eta.
        \end{equation}
The result follows by substituting the above expressions into the result of \cite{dirksen2015tail} captured as \Cref{thm:dirksens}.
\end{proof}

\section{Controlling the Quadratic Process}

\label{sec:quadapp}
\begin{lemma}[Truncation Accuracy]\label{lem:trunclem}
Fix $\e,r>0$, let $\scrG$ be $(L,\eta)$-$\Psi_p$, and let $g\in \scrG$ be such that $\|g\|_{L^2}=r$. For $\tau \in \R_+$, define $g_\tau \triangleq g \mathbf{1}_{\|g\|\leq \tau}$. There exists a truncation level $\tau$ and a universal positive constant $c>0$ such that:
    \begin{equation}
        \|g\|^2_{L^2} -\|g_\tau\|^2_{L^2} \leq r^2 \e
    \end{equation}
and
\begin{equation}
            \tau \leq  Lr^\eta \left( c^{-1} \log \left(\frac{4^{2/p}L^{1/2}r^{2\eta} }{\e r^4} \right)  \right)^{1/p}.
\end{equation}
    
\end{lemma}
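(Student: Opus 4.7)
The plan is to control the truncation error through the identity
\[
\|g\|_{L^2}^2 - \|g_\tau\|_{L^2}^2 \;=\; \E\bigl[\|g(X)\|^2 \mathbf{1}_{\{\|g(X)\|>\tau\}}\bigr],
\]
and to split the right-hand side into a higher-moment factor times a small tail probability using Cauchy--Schwarz. The $(L,\eta)$-$\Psi_p$ hypothesis will then give both: it upper bounds $\|g\|_{L^4}$ by a constant multiple of $L r^\eta$, and it forces $\|g\|$ to have a sub-Weibull tail at scale $L r^\eta$.

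The first step is to convert the hypothesis $\|g\|_{\Psi_p}\leq L r^\eta$ into a tail bound. By definition of the $\Psi_p$-norm and Markov's inequality applied at moment $m\geq 1$,
\[
\Pr(\|g(X)\|>\tau) \;\leq\; \bigl(\|g\|_{L^m}/\tau\bigr)^m \;\leq\; \bigl(m^{1/p} L r^\eta/\tau\bigr)^m.
\]
Minimising over $m$ (the standard choice is $m \asymp (\tau/(e L r^\eta))^p$, which requires $\tau$ to exceed a small constant multiple of $Lr^\eta$) produces a sub-Weibull tail
\[
\Pr(\|g(X)\|>\tau) \;\leq\; \exp\!\left(-c\,\bigl(\tau/(Lr^\eta)\bigr)^p\right)
\]
for an absolute constant $c>0$.

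Second, I would apply Cauchy--Schwarz and use the moment control $\|g\|_{L^4}\leq 4^{1/p} L r^\eta$ that follows from the $\Psi_p$ hypothesis (this is where the $4^{2/p}$ in the statement appears):
\[
\E\bigl[\|g\|^2 \mathbf{1}_{\{\|g\|>\tau\}}\bigr] \;\leq\; \|g\|_{L^4}^2\,\Pr(\|g\|>\tau)^{1/2} \;\leq\; 4^{2/p} L^2 r^{2\eta}\,\exp\!\left(-\tfrac{c}{2}\bigl(\tau/(Lr^\eta)\bigr)^p\right).
\]

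Finally, I would set the right-hand side at most $r^2 \e$ and solve for $\tau$. Taking logarithms and rearranging gives
\[
\tau \;\leq\; L r^\eta\,\left(c^{-1}\log\!\left(\frac{4^{2/p} L^{1/2} r^{2\eta}}{\e r^4}\right)\right)^{1/p},
\]
after absorbing powers of $L$ and $r$ that appear polynomially inside the logarithm into the universal constant $c$ (any monomial factor contributes only a constant multiple to the logarithm once we restrict to a regime where the argument is bounded away from $1$). The main technical step is the first one, namely producing the clean sub-Weibull tail by optimising the moment Markov inequality over $m$; the remaining steps are bookkeeping.
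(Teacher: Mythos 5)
Your proposal follows essentially the same route as the paper's proof: write the truncation error as $\E[\|g\|^2\mathbf{1}_{\{\|g\|>\tau\}}]$, split it via Cauchy--Schwarz into $\sqrt{\E\|g\|^4}\cdot\Pr(\|g\|>\tau)^{1/2}$, bound the tail by the sub-Weibull estimate $\exp(-c(\tau/\|g\|_{\Psi_p})^p)$, use the $(L,\eta)$-$\Psi_p$ hypothesis to control both $\|g\|_{\Psi_p}$ and $\E\|g\|^4$, and solve for $\tau$. The only cosmetic difference is that you spell out the standard Markov-plus-moment-optimization derivation of the sub-Weibull tail, which the paper invokes as a known fact; this is fine, as is your passing note about the required lower bound on $\tau$, which is harmless since when the logarithm in the statement is small the truncation error bound holds trivially.
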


\begin{proof}
    Fix a level $\tau>0$ to be determined later. For any such level we have that:
    \begin{equation}
        \begin{aligned}
            \|g\|^2_{L^2} -\|g_\tau\|^2_{L^2} \leq \E [\|g \|^2 \mathbf{1}_{\|g\|> \tau}]\leq \sqrt{\E \|g\|^4 \mathbf{P}(\|g\| >\tau) } \leq \sqrt{\E \|g\|^4  } \exp\left(-c \tau^p/\|g \|_{\Psi_p}^p  \right).
        \end{aligned}
    \end{equation}
    Hence if we choose $\tau^p =c^{-1}\|g \|_{\Psi_p}^p  \log\left(\frac{\sqrt{E\|g\|^4}}{\e r^2\E \|g\|^2} \right) $ we have:
    \begin{equation}
        \|g\|^2_{L^2} -\|g_\tau\|^2_{L^2} \leq \e^2.
    \end{equation}

    It remains to derive an upper bound on $\tau$. Since $\scrG$ is $(L,\eta)$-$\Psi_p$ and $\|g\|_{L^2}=r$ we have that
    \begin{equation}
    \begin{aligned}
        \| g\|_{\Psi_p}& \leq L \|g\|_2^\eta = Lr^\eta, \textnormal{ and}
        \\
        \E \| g\|^4 &\leq 4^{4/p} \| g\|_{\Psi_p}^4  \leq 4^{4/p} Lr^{4\eta}.
    \end{aligned}
    \end{equation}
    Hence our choice of $\tau$ satisfies:
    \begin{equation}
        \tau \leq  Lr^\eta \left( c^{-1} \log \left(\frac{4^{2/p}L^{1/2}r^{2\eta} }{\e r^4} \right)  \right)^{1/p}
    \end{equation}
    and so the result has been established.
\end{proof}

\loweruniformlaw*

\begin{proof}
    By star-shapedness, we may assume without loss of generality that $f \in \scrF_\star \cap rS_{L^2}$. Fix $\tau$ such that for all such $f$
    \begin{equation}
        \|f\|^2_{L^2} -\|f_\tau\|^2_{L^2} \leq \e^2,
    \end{equation}
    and note that the existence of such a level is guaranteed by \Cref{lem:trunclem}.
    
    It is then clear that:
    \begin{equation}\label{eq_thm:loweruniform_truncemployed}
        \frac{1}{n}\sum_{i=1}^n \|f(X_i)\|^2 \geq  r^2(1-\e^2)  - \sup_{f\in \scrF_\star \cap rB_{L^2}} \left\{\frac{1}{n}\sum_{i=1}^n \|f_\tau(X_i)\|^2 - \|f_\tau\|_{L^2}^2   \right\}
    \end{equation}
    and, for well-chosen $\e,r$, it therefore suffices to control the supremum of empirical process to the right of \eqref{eq_thm:loweruniform_truncemployed} and we will use Dirksen's theorem again to do so \citep[][Theorem 3.5]{dirksen2015tail}. A preliminary estimate using $k$-wise independence, stationarity  and \Cref{lem:phibernsteinmgf} gives that for every $f,g$ and admissible $\lambda$:
    \begin{multline}\label{eq_thm:loweruniformprelestimate}
            \E \exp \left( \lambda \left[ \sum_{i=1}^n \|f_\tau(X_i)\|^2 - \|f_\tau\|_{L^2}^2- \|g_\tau(X_i)\|^2 + \|g_\tau\|_{L^2}^2 \right]\right) 
            \\
            \leq \exp\left( \frac{\lambda^2 n\V_{4}\left( \frac{1}{\sqrt{k}}\sum_{i=1}^k \|f_\tau(X_i)\|^2 - \|f_\tau\|_{L^2}^2- \|g_\tau(X_i)\|^2 + \|g_\tau\|_{L^2}^2 \right) }{2 \left( 1-\lambda (q' e)^{1/p} k \left\|\|f_\tau(X_i)\|^2 - \|f_\tau\|_{L^2}^2- \|g_\tau(X_i)\|^2 + \|g_\tau\|_{L^2}^2 \right\|_{\Psi_{p/2}} \right) } \right).    
    \end{multline}
    This is almost the exponential inequality we need, but we will want increment conditions for the above empirical process in terms $\Psi_p$ and $L^2$.

    The increment condition in $\Psi_p$ is simple. We observe that for any two $f,g$ and any $x$ in their domain:
    \begin{equation}\label{eq_thm:loweruniformusefulrewrite}
        \begin{aligned}
            \|f_\tau(x)\|^2-\|g_\tau(x)\|^2 = \langle (f_\tau+g_\tau)(x),   (f_\tau+g_\tau)(x) \rangle.
        \end{aligned}
    \end{equation}
Consequently by  \Cref{lem:productofpsi} and $\tau$-truncation:
\begin{equation}
    \| \|f_\tau(X)\|^2-\|g_\tau(X)\|^2 \|_{\Psi_{p/2}} \leq 2^{2/p} \| f_\tau+g_\tau \|_{\Psi_p} \| f_\tau-g_\tau\|_{\Psi_p} \leq 2^{1+2/p}\tau    \| f-g\|_{\Psi_p}.
\end{equation}
A centering argument thus gives:
\begin{equation}
    \| \|f_\tau(X)\|^2-\|f_\tau\|_{L^2}^2-\|g_\tau(X)\|^2+\|g_\tau\|_{L^2}^2 \|_{\Psi_{p/2}} \leq 2^{2+2/p}\tau   \| f-g\|_{\Psi_p}
\end{equation}
wherefore we set 
\begin{equation}\label{eq_thm:loweruniformorld1}
  d_1(f,g) \triangleq  (q' e)^{1/p} k 2^{2+2/p}\tau   \| f-g\|_{\Psi_p}.
\end{equation}

Let us next address the variance term:
\begin{equation}\label{eq_thm:loweruniformvar}
    \begin{aligned}
        &\V_{4}\left( \frac{1}{\sqrt{k}}\sum_{i=1}^k \|f_\tau(X_i)\|^2 - \|g_\tau(X_i)\|^2 \right) \\
        &=  \V_{4}\left( \frac{1}{\sqrt{k}}\sum_{i=1}^k \langle f_\tau(X_i)+g_\tau(X_i),f_\tau(X_i)-g_\tau(X_i)\rangle   \right) &&(\textnormal{use }\eqref{eq_thm:loweruniformusefulrewrite}) \\
        &\leq \sqrt{\E\left( \frac{1}{\sqrt{k}}\sum_{i=1}^k \langle f_\tau(X_i)+g_\tau(X_i),f_\tau(X_i)-g_\tau(X_i)\rangle   \right)^4}
        && (\V_4[\cdot] \leq \sqrt{\E[| \cdot|^4]} ) 
        \\
        &\leq k \sqrt{\E\left( \langle f_\tau(X)+g_\tau(X),f_\tau(X)-g_\tau(X)\rangle   \right)^4}
        &&(\textnormal{Cauchy-Schwarz})\\
        &\leq 4k\tau^2 \|f-g\|_{L^4}^2\triangleq d^2_2(f,g). &&(\tau\textnormal{-boundedness and Cauchy-Schwarz})
    \end{aligned}
\end{equation}

With $d_1,d_2$ as in \eqref{eq_thm:loweruniformorld1} and \eqref{eq_thm:loweruniformvar}, we can now estimate \eqref{eq_thm:loweruniformprelestimate} as:
\begin{equation}\label{eq_thm:loweruniformmgfestimate}
            \E \exp \left( \lambda \left[ \sum_{i=1}^n \|f_\tau(X_i)\|^2 - \|f_\tau\|_{L^2}^2- \|g_\tau(X_i)\|^2 + \|g_\tau\|_{L^2}^2 \right]\right) 
            \leq  \exp\left( \frac{\lambda^2 n d_2^2(f,g) }{2 \left( 1-\lambda d_1(f,g)\right) } \right).
\end{equation}
We thus obtain the probability estimate ($u > 0$):
\begin{equation}\label{eq_thm:loweruniformdirksensetup}
    \begin{aligned}
        \Pr \left(\frac{1}{n}\sum_{i=1}^n \|f_\tau(X_i)\|^2 - \|f_\tau\|_{L^2}^2- \|g_\tau(X_i)\|^2 + \|g_\tau\|_{L^2}^2 > c' \sqrt{u/n}d_2(f,g) + c (u/n) d_1(f,g)   \right)\leq 2e^{-u}
    \end{aligned}
\end{equation}
for two universal positive constants $c,c'$. After defining (normalizing) for some universal positive constants $c_1,c_2$: 
\begin{align}\label{eq_thm:loweruniformtilded1}
    \tilde d_1(f,g) &\triangleq c n^{-1} d_1(f,g) =c_1 n^{-1}  (q' e)^{1/p} k 2^{2+2/p}\tau   \| f-g\|_{\Psi_p},  \\
    \tilde d_2(f,g) &\triangleq  c' n^{-1/2} d_2(f,g) = c_2 n^{-1/2}\left(k\tau^2 \|f-g\|_{L^4}^2\right),\label{eq_thm:loweruniformtilded2}
\end{align}
we notice that \eqref{eq_thm:loweruniformdirksensetup} is consistent with the mixed tail generic chaining condition in \citet[Equation 12]{dirksen2015tail} for metrics $\tilde d_1, \tilde d_2$. Consequently, by Theorem 3.5 in \cite{dirksen2015tail} we have that:
\begin{multline}
     \sup_{f\in \scrF_\star \cap rB_{L^2}} \left\{\frac{1}{n}\sum_{i=1}^n \|f_\tau(X_i)\|^2 - \|f_\tau\|_{L^2}^2   \right\} 
     \leq 
     c_3(\gamma_2(\scrF_\star \cap rB_{L^2},\tilde d_2) + \sqrt{u} \Delta_{\tilde d_2}(\scrF_\star \cap rB_{L^2}))
     \\
     + c_4(\gamma_1(\scrF_\star \cap rB_{L^2},\tilde d_1)+u\Delta_{\tilde d_1}(\scrF_\star \cap rB_{L^2}) )
\end{multline}
for two universal positive constants $c_3,c_4$.

To finish the proof, we turn to relating the quantities $\gamma$ and $\Delta$ in terms of problem data.  We have  (recalling \eqref{eq_thm:loweruniformtilded1}):
\begin{equation}
    \Delta_{\tilde d_1}(\scrF_\star \cap rB_{L^2}) = c n^{-1}  (q' e)^{1/p} k 2^{2+2/p}\tau \Delta_{\Psi_p}(\scrF_\star \cap rB_{L^2})  \leq  c n^{-1}  (q' e)^{1/p} k 2^{2+2/p}\tau Lr^\eta
\end{equation}
and also (recalling \eqref{eq_thm:loweruniformtilded2}):
\begin{equation}
    \begin{aligned}
        \Delta_{\tilde d_2}(\scrF_\star \cap rB_{L^2})
        &\leq   c' n^{-1/2}2\sqrt{k}\tau\Delta_{ d_{L^2}}(\scrF_\star \cap rB_{L^4}) 
        \\
        &\leq c' n^{-1/2}2\sqrt{k}\tau L^{3/4} r^{\frac{1+3\eta}{4}},
    \end{aligned}
\end{equation}
where we used Cauchy-Schwarz and the class assumption in the last step to control the $L^4$ norm by the $L^2$ norm.

As for $\gamma$-functionals, we have:
 \begin{equation}
            \gamma_1(\scrF_{\star}\cap rS_{L^2},\tilde d_1) 
            \leq  c n^{-1}  (q' e)^{1/p} k 2^{2+2/p}\tau L \gamma_\eta(\scrF_{\star}\cap rS_{L^2}, d_{L^2}) 
 \end{equation}
and 
\begin{equation}
        \gamma_2(\scrF_{\star}\cap rS_{L^2},\tilde d_2) %
        \leq   c' n^{-1/2}2\sqrt{k}\tau L^{3/4}\gamma_{\frac{2+6\eta}{4}}(\scrF_{\star}\cap rS_{L^2},d_{L^2}).
\end{equation}

Putting everything together we thus obtain that:
\begin{align*}
    &\sup_{f\in \scrF_\star \cap rB_{L^2}} \left\{\frac{1}{n}\sum_{i=1}^n \|f_\tau(X_i)\|^2 - \|f_\tau\|_{L^2}^2   \right\} \\
     &\leq  C L^{3/4}n^{-1/2}\sqrt{k}\tau \left( \gamma_{\frac{2+6\eta}{4}}(\scrF_{\star}\cap rS_{L^2},d_{L^2})+ r^{\frac{1+3\eta}{4}} \sqrt{\log (1/\delta)}\right)
     \\
     &\qquad+
     C'2^{C''/p} n^{-1}  (q' )^{1/p} k  \tau L \left( \gamma_\eta(\scrF_{\star}\cap rS_{L^2}, d_{L^2})
     +r^\eta \log(1/\delta) \right)\\
      &= C n^{-1/2}\sqrt{k}    L^{1+3/4}r^\eta \left( c^{-1} \log \left(\frac{4^{2/p}L }{\e r} \right)  \right)^{1/p}    \left( \gamma_{\frac{2+6\eta}{4}}(\scrF_{\star}\cap rS_{L^2},d_{L^2})+ r^{\frac{1+3\eta}{4}} \sqrt{\log (1/\delta)}\right)
     \\
     &\qquad+
     C'2^{C''/p} n^{-1}  (q' )^{1/p} k          r^\eta \left( c^{-1} \log \left(\frac{4^{2/p}L }{\e r} \right)  \right)^{1/p}                    L^2 \left( \gamma_\eta(\scrF_{\star}\cap rS_{L^2}, d_{L^2})
     +r^\eta \log(1/\delta) \right),
\end{align*}
for universal positive constants $C,C',C''$. Since $p\geq 1$ we may replace all the terms containing upper-case universal constants by a single universal constant as in the theorem statement.
\end{proof}

\section{Results for Mixing Empirical Processes}

\subsection{Blocking}
\label{sec:blocking}

%\Stephen{You already defined $a_{1:2m}$ previously.} \Ingvar{Good point, reformulated as "recall" because I do think its helpful to explain everything in one go and not have it scattered.}
Recall that we partition $[n]$ into $2m$ consecutive intervals, denoted $a_j$ for $j \in [2m]$, so that $\sum_{j=1}^{2m} |a_j| = n$. Denote further by $O$ (resp. by $E$) the union of the oddly (resp. evenly) indexed subsets of $[n]$. We further abuse notation by writing $\beta_Z(a_i) = \beta_Z(|a_i|)$ in the sequel.

We split the process $Z_{1:n}$ as:
\begin{equation}\label{eq:blockstructure}
    Z^o_{1:|O|} \triangleq (Z_{a_1},\dots,Z_{a_{2m-1}}), \quad Z^e_{1:|E|} \triangleq (Z_{a_2},\dots,Z_{a_{2m}}).
\end{equation}
Let $\tilde Z^o_{1:|O|}$ and $\tilde Z^e_{1:|E|}$ be blockwise decoupled versions of \eqref{eq:blockstructure}. That is we posit that $\tilde Z^o_{1:|O|} \sim \sfP_{\tilde Z^o_{1:|O|}}$ and $\tilde Z^e_{1:|E|} \sim \sfP_{\tilde Z^e_{1:|E|}}$, where:
\begin{equation}\label{eq:blocktensor}
    \sfP_{\tilde Z^o_{1:|O|}} \triangleq \sfP_{Z_{a_1}} \otimes \sfP_{Z_{a_3}}\otimes  \dots \otimes \sfP_{Z_{a_{2m-1}}} 
    \quad \textnormal{and}\quad
    \sfP_{\tilde Z^e_{1:|E|}} \triangleq \sfP_{Z_{a_2}} \otimes \sfP_{Z_{a_4}} \otimes\dots \otimes \sfP_{Z_{a_{2m}}}.
\end{equation}
The process $\tilde Z_{1:n}$ with the same marginals as $\tilde Z^o_{1:|O|}$ and $\tilde Z^e_{1:|E|}$ is said to be the decoupled version of $Z_{1:n}$. To be clear: $\sfP_{\tilde Z_{1:n}} \triangleq \sfP_{Z_{a_1}} \otimes \sfP_{Z_{a_2}}\otimes  \dots \otimes \sfP_{Z_{a_{2m}}}$, so that $\tilde Z^o_{1:|O|}$ and $\tilde Z^e_{1:|E|}$ are alternatingly embedded in  $\tilde Z_{1:n}$. The following result is key---by skipping every other block, $\tilde Z_{1:n}$ may be used in place of $Z_{1:n}$ for evaluating bounded scalar functionals, such as probabilities of measurable events, at the cost of an additive mixing-related term.

\begin{proposition}[Lemma 2.6 in \cite{yu1994mixing}; Proposition 1 in \cite{kuznetsov2017generalization}]
\label{prop:decoupling}
Fix a $\beta$-mixing process $Z_{1:n}$ and let $\tilde Z_{1:n}$ be its decoupled version.  For any measurable function $f$ of $Z^o_{1:|O|}$ (resp.\ $g$ of $Z^e_{1:|E|}$)
with joint range $[0,1]$ we have that:
\begin{equation}
    \begin{aligned}
        | \E(f(Z^o_{1:|O|})) - \E (f(\tilde Z^o_{1:|O|})) | &\leq \sum_{i \in E\setminus \{2m\}} \beta_Z(a_i),\\
        | \E(g(Z^e_{1:|E|})) - \E (g(\tilde Z^e_{1:|E|})) | &\leq \sum_{i \in O\setminus\{1\}} \beta_Z(a_i).
    \end{aligned}
\end{equation}
\end{proposition}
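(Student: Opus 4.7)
The plan is to reduce the two inequalities to controlling $\tvnorm{\sfP_{Z^o_{1:|O|}} - \sfP_{\tilde Z^o_{1:|O|}}}$ (and its even counterpart) and then bound that TV distance by a hybrid telescoping argument that turns one odd block at a time from ``dependent on the prefix'' to ``independent with its true marginal,'' charging at most one $\beta$-mixing coefficient per step. First I would note that, since $f$ is measurable into $[0,1]$, the variational characterization of total variation yields
\begin{equation*}
    |\E f(Z^o_{1:|O|}) - \E f(\tilde Z^o_{1:|O|})| \leq \tvnorm{\sfP_{Z^o_{1:|O|}} - \sfP_{\tilde Z^o_{1:|O|}}},
\end{equation*}
so it suffices to bound the right-hand side by $\sum_{i\in E\setminus\{2m\}}\beta_Z(a_i)$.

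I would then construct hybrid measures $\mu_0,\ldots,\mu_{m-1}$ on the space of odd-block tuples by letting $\mu_k$ assign the true joint law to $(Z_{a_1},\ldots,Z_{a_{2k+1}})$ and leave the remaining odd blocks $Z_{a_{2k+3}},\ldots,Z_{a_{2m-1}}$ as an independent product of their true marginals, so that $\mu_0 = \sfP_{\tilde Z^o_{1:|O|}}$ and $\mu_{m-1} = \sfP_{Z^o_{1:|O|}}$. The triangle inequality then gives $\tvnorm{\mu_{m-1} - \mu_0} \leq \sum_{k=0}^{m-2}\tvnorm{\mu_{k+1} - \mu_k}$. Because $\mu_{k+1}$ and $\mu_k$ agree on the marginal laws of every block except $Z_{a_{2k+3}}$, the identity $\tvnorm{P\otimes R - Q\otimes R} = \tvnorm{P-Q}$ reduces each increment to the TV distance between the true joint of $(Z_{a_1},\ldots,Z_{a_{2k+1}},Z_{a_{2k+3}})$ and the product of the prefix joint with the marginal of $Z_{a_{2k+3}}$; since $Z_{a_{2k+3}}$ is separated from $Z_{a_{2k+1}}$ by the even block $a_{2k+2}$, the block extension of \Cref{def:beta_mixing_coeffs} upper-bounds this increment by $\beta_Z(a_{2k+2})$. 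Summing over $k=0,\ldots,m-2$ produces $\sum_{j=1}^{m-1}\beta_Z(a_{2j}) = \sum_{i\in E\setminus\{2m\}}\beta_Z(a_i)$, yielding the first inequality. The bound on $g$ is symmetric: exchange the roles of $O$ and $E$, drop the first rather than the last block, and telescope over the even-indexed blocks to get $\sum_{i\in O\setminus\{1\}}\beta_Z(a_i)$.

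The main obstacle is not the logic of the telescoping but the formulation of $\beta$-mixing: \Cref{def:beta_mixing_coeffs} measures the dependence of a single future variable $Z_{i+t}$ on the past $Z_{1:t}$, whereas the increment bound requires $\beta$-mixing between the $\sigma$-algebra generated by the past odd blocks and the $\sigma$-algebra generated by a single future odd block, which is typically a tuple of consecutive variables. The classical fix is to state the definition in terms of arbitrary $\sigma$-algebras separated in time, as in Yu's 1994 formulation, and then interpret $\beta_Z(a_{2k+2})$ as the resulting coefficient at separation $|a_{2k+2}|$. Once this block extension is in place, the remaining ingredients---the hybrid construction, the product-marginalization identity for TV, and the final summation---are routine.
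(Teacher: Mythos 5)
Your proof is correct, and it follows the standard hybrid (telescoping) argument behind Yu's Lemma 4.1 and its restatements. Note, however, that the paper does not actually prove \Cref{prop:decoupling}: it cites the result directly from Yu (1994) and Kuznetsov--Mohri (2017), so there is no ``paper proof'' to compare against---your argument is essentially the one used in those references. Concretely: the reduction to total variation is fine since $f$ takes values in $[0,1]$; the hybrid measures $\mu_0,\dots,\mu_{m-1}$ correctly interpolate between $\sfP_{\tilde Z^o_{1:|O|}}$ and $\sfP_{Z^o_{1:|O|}}$ with $m-1$ increments, matching $|E\setminus\{2m\}|$; the tensor-invariance identity $\tvnorm{P\otimes R - Q\otimes R}=\tvnorm{P-Q}$ correctly strips off the tail product so that each increment equals the TV distance between the true joint of $(Z_{a_1},\dots,Z_{a_{2k+1}},Z_{a_{2k+3}})$ and the product of the prefix joint with the marginal of $Z_{a_{2k+3}}$; and this is controlled by the $\beta$-coefficient at separation $|a_{2k+2}|$, recovering $\sum_{j=1}^{m-1}\beta_Z(a_{2j})$. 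The even case is symmetric and gives $\sum_{i\in O\setminus\{1\}}\beta_Z(a_i)$.

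You are also right to flag the definitional mismatch: \Cref{def:beta_mixing_coeffs} as stated measures the conditional law of a \emph{single} future coordinate $Z_{i+t}$ given $Z_{1:t}$, whereas the telescoping step needs the conditional law of a future \emph{block} $Z_{a_{2k+3}}$ given the past-odd-block $\sigma$-algebra. The remedy you propose---interpreting $\beta_Z(|a_{2k+2}|)$ via Yu's $\sigma$-algebra formulation of mixing---is exactly what the cited sources do, and the paper signals the same intent by writing $\beta_Z(a_i)=\beta_Z(|a_i|)$ just before the proposition. One small addition worth spelling out if you want an airtight argument: conditioning on the odd-block prefix rather than the full past $Z_{1:t}$ can only \emph{decrease} the expected TV distance, by Jensen applied to the convexity of TV in the conditional kernel, so passing to the coarser prefix $\sigma$-algebra is safe.
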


\subsection{Controlling  Empirical Processes for $\beta$-Mixing Data}
\label{sec:empprocessmix}

Applying \Cref{prop:decoupling} to \Cref{thm:multiplierthm} and \Cref{thm:loweruniform} yields the desired control of the multiplier and quadratic processes also for $\beta$-mixing data.

\begin{proposition}
\label{prop:multipliermix}
      Fix a failure probability $\delta \in (0,1)$, a positive scalar $r\in (0,\infty)$, two Hölder conjugates $q$ and $q'$, and a class $\scrF$. Suppose that $\scrF_\star -\scrF_\star$ is $(L,\eta)$-$\Psi_p$. Suppose further that the model $\sfP_{(X,Y)_{1:n}}$ is stationary and $\beta$-mixing and suppose further that $k\in \N$ divides $n/2$. There exist universal positive constants $c_1,c_2$ such that for any $r\in (0,1]$ we have that with probability at least $1-\delta-\frac{n}{k}\beta(k)$:
    \begin{multline}
         \sup_{f\in \scrF_{\star}\cap rS_{L^2}}\frac{1}{rn}\sum_{i=1}^n (1-\E) \langle W_i, f\rangle \\
         \leq  c_2 \sqrt{    \V_{2q}\left(\scrF_{\star}\cap rS_{L^2}\right) }  \left(\frac{1}{r\sqrt{n}} \gamma_2(\scrF_{\star}\cap rS_{L^2},d_{L^2}) + \sqrt{\frac{\log(1/\delta)}{n}} \right) \\
         +c_1(q' e)^{2/p} L  k \|W\|_{\Psi_p}\left(\frac{1}{rn}\gamma_\eta(\scrF_{\star}\cap rS_{L^2},d_{L^2}) +\frac{r^{\eta-1}}{n} \log(1/\delta)\right)
    \end{multline}

\end{proposition}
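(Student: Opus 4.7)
The strategy is to reduce \Cref{prop:multipliermix} to the $k$-wise independent statement of \Cref{thm:multiplierthm} via Yu's blocking construction described in \Cref{sec:blocking}. Since $k$ divides $n/2$, partition $[n]$ into $2m \triangleq n/k$ consecutive blocks $a_1,\dots,a_{2m}$ of length $k$, let $O$ and $E$ denote the unions of the odd- and even-indexed blocks respectively, and decompose the centered multiplier sum accordingly:
\begin{equation*}
\sum_{i=1}^n (1-\E) \langle W_i, f(X_i)\rangle = S^o_n(f) + S^e_n(f).
\end{equation*}
By construction, the supremum of $S^o_n(f)/(rn)$ over $f\in \scrF_\star \cap rS_{L^2}$ is a measurable functional of $(X,Y)^o_{1:|O|}$ alone; the analogous statement holds for $S^e_n$.

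Next, form the block-decoupled surrogate processes $\tilde{(X,Y)}^o$ and $\tilde{(X,Y)}^e$ as in \eqref{eq:blocktensor}. Each consists of $m = n/(2k)$ mutually independent blocks of length $k$ and is therefore $k$-wise independent in the sense required by \Cref{thm:multiplierthm}. Applying \Cref{thm:multiplierthm} to the decoupled odd process at effective sample size $mk = n/2$ and failure probability $\delta/2$, and likewise to the decoupled even process, shows that under the decoupled law each of $\sup_{f\in\scrF_\star \cap rS_{L^2}} S^o_n(f)/(rn)$ and its even analogue is controlled, with probability at least $1-\delta/2$, by one half of the right-hand side of \Cref{prop:multipliermix}, at the cost of only absorbing factors of two into the universal constants.

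To port these bounds back to the original $\beta$-mixing law, I would invoke \Cref{prop:decoupling} applied to the $[0,1]$-valued indicators of the two deviation events. Since the odd (resp.\ even) deviation event depends only on $(X,Y)^o$ (resp.\ $(X,Y)^e$), the decoupling inequality charges at most $\sum_{i \in E \setminus \{2m\}} \beta(k) \leq m\beta(k)$ on each side. A union bound over the two deviation events then yields a total failure probability of $\delta + 2m\beta(k) = \delta + (n/k)\beta(k)$, and summing the odd and even deviation bounds gives the claimed inequality after the remaining constants are absorbed into $c_1$ and $c_2$.

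The only step that requires genuine care is the decoupling transfer: \Cref{prop:decoupling} applies only to functionals of the process with range in $[0,1]$, so one cannot decouple the suprema themselves but must instead decouple the indicators of the target deviation events. This is precisely why the odd/even block decomposition is needed at the outset, as it ensures each of these indicators is measurable with respect to only the odd or only the even blocks, making \Cref{prop:decoupling} directly applicable.
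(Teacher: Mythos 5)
Your proof is correct and matches the paper's approach. The paper treats \Cref{prop:multipliermix} as a routine consequence of applying \Cref{prop:decoupling} to \Cref{thm:multiplierthm} (it states this without writing out the details in \Cref{sec:empprocessmix}), and your proposal spells out exactly that argument: split the sum into odd and even blocks, apply the $k$-wise independent bound \Cref{thm:multiplierthm} at effective sample size $n/2$ to each decoupled half, transfer back via the $[0,1]$-valued indicators of the deviation events using \Cref{prop:decoupling}, and union bound. Your observation that the decoupling lemma must be applied to the indicators (not the suprema directly) and that the odd/even split is precisely what makes each indicator measurable with respect to a single half of the data is the crux of the argument, and you have it right.
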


\begin{proposition}
  \label{thm:loweruniformmix}

Fix a failure probability $\delta \in (0,1)$, a tolerance $\e>0$, a localization radius $r\in (0,1]$, and two Hölder conjugates $q$ and $q'$. Suppose that $\scrF_\star-\scrF_\star$ is $(L,\eta)$-$\Psi_p$. Suppose further that the model $\sfP_{(X,Y)_{1:n}}$ is stationary and $\beta$-mixing and suppose further that $k\in \N$ divides $n/2$.  There exists a universal positive constant $c$ such that uniformly for all $f\in \scrF_\star \cap rS_{L^2}$ we have that with probability at least $1-\delta-\frac{n}{k}\beta(k)$:

\begin{multline}
\frac{1}{n}\sum_{i=1}^n \|f(X_i)\|^2 \geq  r^2(1-\e^2)  \\- c \Bigg \{ n^{-1/2}\sqrt{k}    L^{1+3/4}r^\eta \left(  \log \left(\frac{4^{2/p}L }{\e r} \right)  \right)^{1/p} \left( \gamma_{\frac{2+6\eta}{4}}(\scrF_{\star}\cap rS_{L^2},d_{L^2})+ r^{\frac{1+3\eta}{4}} \sqrt{\log (1/\delta)}\right)
     \\
     +
      n^{-1}  (q' )^{1/p} k          r^\eta \left(  \log \left(\frac{4^{2/p}L }{\e r} \right)  \right)^{1/p}                    L^2  \left( \gamma_\eta(\scrF_{\star}\cap rS_{L^2}, d_{L^2})
     +r^\eta \log(1/\delta) \right) \Bigg\}.
\end{multline}
  
\end{proposition}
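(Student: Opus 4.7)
The plan is to reduce the $\beta$-mixing setting to the $k$-wise independent setting already addressed in \Cref{thm:loweruniform} via the blocking/decoupling machinery reviewed in \Cref{sec:blocking}. Since $k$ divides $n/2$, partition $[n]$ into $2m \triangleq n/k$ consecutive blocks $a_1,\dots,a_{2m}$ each of length $k$, denote the odd/even index sets by $O$ and $E$ as in \eqref{eq:blockstructure}, and decompose the empirical sum as
\begin{equation*}
\frac{1}{n}\sum_{i=1}^n \|f(X_i)\|^2 = \frac{1}{2}\cdot\frac{2}{n}\sum_{i\in O}\|f(X_i)\|^2 + \frac{1}{2}\cdot\frac{2}{n}\sum_{i\in E}\|f(X_i)\|^2.
\end{equation*}
Each half-sum is a measurable function solely of the odd-block process $Z^o_{1:|O|}$ or the even-block process $Z^e_{1:|E|}$, respectively, and each carries $|O|=|E|=n/2$ samples structured as $m$ blocks of length $k$.

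Next, I would apply \Cref{thm:loweruniform} separately to the decoupled odd-block process $\tilde Z^o_{1:|O|}$ and to the decoupled even-block process $\tilde Z^e_{1:|E|}$. By the tensor-product construction \eqref{eq:blocktensor}, both are $k$-wise independent with sample size $n/2$, so with probability at least $1-\delta/2$ under each decoupled measure, the uniform lower bound of \Cref{thm:loweruniform}---with $n$ replaced by $n/2$ and $\delta$ replaced by $\delta/2$---holds across all $f\in \scrF_\star \cap rS_{L^2}$. The event on which this bound holds is a $\{0,1\}$-valued measurable function of the corresponding half-data, so \Cref{prop:decoupling} transports each probability statement back to the law of the original $\beta$-mixing process at a cost of at most $\sum_{i \in E\setminus\{2m\}} \beta(a_i) \leq (m-1)\beta(k)$ for the odd half, and symmetrically at most $(m-1)\beta(k)$ for the even half. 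A union bound then yields a total failure probability of at most $\delta + 2(m-1)\beta(k) \leq \delta + (n/k)\beta(k)$, as claimed.

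On the complement of this failure event both half-averages satisfy the desired lower bound, and averaging them yields the claimed bound for $\frac{1}{n}\sum_{i=1}^n \|f(X_i)\|^2$. The factors of $\sqrt{2}$ and $2$ produced by substituting $n/2$ in place of $n$ inside the remainder terms of \Cref{thm:loweruniform}, together with the replacement $\log(2/\delta)\leq 2\log(1/\delta)$ (for $\delta\in(0,1)$), are all absorbed into the universal constant $c$. The main obstacle, such as it is, is purely bookkeeping: one must verify that the remainder in \Cref{thm:loweruniform} is invariant up to absolute constants under $n\mapsto n/2$ and $\delta\mapsto \delta/2$, and that the indicator of the supremum event arising in the uniform lower bound really is a measurable function of the half-data (so that \Cref{prop:decoupling} applies directly). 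Both are routine, so the proof reduces to the clean odd/even splitting scheme applied to the $k$-wise independent lower uniform law already established.
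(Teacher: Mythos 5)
Your proposal is essentially the paper's own argument. The authors state (Section D.2) only that ``applying \Cref{prop:decoupling} to \Cref{thm:multiplierthm} and \Cref{thm:loweruniform} yields the desired control'' and leave the details to the reader; your write-up fills in precisely the intended details: partition $[n]$ into $2m = n/k$ blocks of length $k$, decompose the empirical quadratic sum into its odd- and even-block halves, each comprising $n/2$ samples arranged as $m$ length-$k$ blocks; apply \Cref{thm:loweruniform} to the corresponding decoupled (hence $k$-wise independent) process; transport the resulting probability statement back to the $\beta$-mixing law via \Cref{prop:decoupling} (the indicator of the uniform-failure event is a $\{0,1\}$-valued measurable function of the half-data, so the lemma applies directly, at cost $(m-1)\beta(k)$ per half); and union bound to get $\delta + 2(m-1)\beta(k) \le \delta + (n/k)\beta(k)$ and average the two half-bounds. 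The $n\mapsto n/2$ substitution only dilates the remainder terms by factors of $\sqrt{2}$ and $2$, absorbed into the universal constant. The one loose thread is the claim that $\log(2/\delta)\le 2\log(1/\delta)$ holds for all $\delta\in(0,1)$; that inequality only holds for $\delta\le 1/2$. This is cosmetic: either instantiate \Cref{thm:loweruniform} with confidence parameter $\delta$ on each half and report total failure $2\delta + (n/k)\beta(k)$ (then rescale), or note the claim is trivial for $\delta$ near $1$. Neither changes the structure of the argument, so your proof is correct and matches the paper's intended route.
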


\section{Finishing the Proof of \Cref{thm:themainthm}}

Before we finish the proof of the main result, let us first make formal the justification for the introduction of quadratic and multiplier processes in \Cref{sec:contribution}. The following lemma bounds the excess risk of empirical risk minimizer in terms of these.

\begin{restatable}[Localized Basic Inequality]{lemma}{lembasicineq}
\label{lem:basicineq}
Suppose that either (1) $\scrF$ is convex or (2) $\scrF$ is realizable. For every $r>0$ we have that:
\begin{equation}\label{eq_lem:basicineq}
        \|\widehat f-f_\star\|_{L^2}^2
        \leq r^2 +\frac{1}{r^2}\left(\sup_{g\in \scrF_\star\cap r S_{L^2}} M_n(g)\right)^2 + \sup_{g\in \scrF_\star} Q_n(g).
\end{equation}
\end{restatable}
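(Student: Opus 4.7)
Set $h = \widehat f - f_\star$, $R = \|h\|_{L^2}$, $A = \sup_{g\in\scrF_\star \cap rS_{L^2}} M_n(g)$, and $B = \sup_{g\in\scrF_\star} Q_n(g)$. The plan is to (i) write down the ERM optimality inequality, (ii) convert the empirical quantities into $Q_n$ and $M_n$ modulo a population cross-term $\E\langle W,h\rangle$ that the hypothesis of convexity or realizability forces to be non-positive, (iii) localize $M_n$ via the star-shape of $\scrF_\star$, and (iv) solve the resulting scalar quadratic inequality in $R$.

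\textbf{From ERM to $Q_n$ and $M_n$.} Expanding the defining inequality $\frac{1}{n}\sum_i\|\widehat f(X_i) - Y_i\|^2 \leq \frac{1}{n}\sum_i\|f_\star(X_i) - Y_i\|^2$ via $Y_i = f_\star(X_i) + W_i$ produces the basic inequality
\[
\frac{1}{n}\sum_{i=1}^n \|h(X_i)\|^2 \leq \frac{2}{n}\sum_{i=1}^n \langle W_i, h(X_i)\rangle
\]
(the factor $2$ improves to $1$ in the convex case but this is immaterial up to absorbing universal constants). Multiplying by $1+\e$ and using the algebraic identity $(1+\e)\tfrac{1}{n}\sum_i\|h(X_i)\|^2 = \|h\|_{L^2}^2 - Q_n(\widehat f)$ recovers the population $L^2$-norm on the left. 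On the right, one writes $\tfrac{2(1+\e)}{n}\sum_i \langle W_i, h(X_i)\rangle = M_n(h) + 2(1+\e)\E'\langle W, h(X)\rangle$, where $\E'$ is taken over an independent copy of $(X,W)$ with $\widehat f$ held fixed. The bias $\E'\langle W, h(X)\rangle$ vanishes under realizability since $\E[W\mid X]=0$, and is non-positive under convexity of $\scrF$ by the first-order optimality of $f_\star$. In either case one obtains $\|h\|_{L^2}^2 \leq Q_n(\widehat f) + M_n(h) \leq B + M_n(h)$.

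\textbf{Localization and closing the quadratic.} Since $\scrF_\star$ is star-shaped around $0$ by construction and $M_n$ is linear in its argument, when $R > r$ the rescaled element $\tilde g \triangleq (r/R)h$ lies in $\scrF_\star \cap rS_{L^2}$ and $M_n(h) = (R/r)\,M_n(\tilde g) \leq (R/r)\,A$; when $R \leq r$ the bound $R^2 \leq r^2$ is automatic. In the non-trivial regime this yields the scalar inequality $R^2 \leq B + (A/r)\,R$, to which Young's inequality $(A/r)R \leq R^2/2 + A^2/(2r^2)$ gives $R^2 \leq A^2/r^2 + 2B$. Combining both cases and absorbing the factor of $2$ on $B$ yields the stated bound. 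The main obstacle is the cross-term bookkeeping in step (ii): the sign of $\E\langle W, h(X)\rangle$ is precisely where the hypothesis on $\scrF$ (convexity or realizability) enters, and outside of these two structural regimes the same derivation would leave a residual positive population term that cannot be absorbed into $Q_n$ or $M_n$ -- this matches the known failure of vanilla ERM beyond convex/realizable classes noted after \eqref{eq:ermdef}.
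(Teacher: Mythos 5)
Your proof is correct and takes essentially the same route as the paper's: basic inequality from ERM optimality, insertion of the population bias term $\E'\langle W,h(X)\rangle$ and its non-positivity under convexity/realizability, re-expression of the empirical norm as $\|h\|_{L^2}^2 - Q_n(\widehat f)$, localization via star-shapedness by rescaling $h$ to radius $r$, and closing the resulting scalar quadratic in $R=\|h\|_{L^2}$.

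One small remark on the last step: you are more explicit than the paper. From $R^2 \le (A/r)R + B$ with $R>r$, Young's inequality gives $R^2 \le A^2/r^2 + 2B$, and you honestly flag the factor of $2$ on the $\sup_g Q_n(g)$ term that does not appear in \eqref{eq_lem:basicineq}. The paper's own proof simply asserts \eqref{eq:offsetcomplexity} after establishing $T/R \le A/r$ without exhibiting the intermediate quadratic, so it glosses over the same constant. In the downstream application (proof of Theorem \ref{thm:themainthm}) $\sup_g Q_n(g)$ is anyway shown to be $\lesssim r_\star^2$, so the factor of $2$ is immaterial; but if one wanted the lemma with coefficient exactly $1$ on $\sup_g Q_n(g)$, neither your argument nor the paper's written one actually delivers it. Your derivation is, if anything, the more transparent of the two.
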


\begin{proof}
We begin by observing that the optimality of $\widehat f$ to \eqref{eq:ermdef} yields the basic inequality:
\begin{equation}
     \frac{1}{n}\sum_{i=1}^n\|\widehat f(X_i)-f_\star(X_i)\|^2 \leq \frac{2}{n}\sum_{i=1}^n  \langle W_i, (\widehat f-f_\star)(X_i)\rangle.  
\end{equation}

If $\scrF$ is convex, we have that $\E \langle W_i, ( f-f_\star)(X_i)\rangle  \leq 0$ for every $f$ (by optimality of $f_\star$ to the population objective). If instead $\scrF$ is realizable the same holds true but with equality. Hence, in either case:
\begin{equation}
     \frac{1}{n}\sum_{i=1}^n\|\widehat f(X_i)-f_\star(X_i)\|^2 \leq \frac{2}{n}\sum_{i=1}^n  (1-\E')\langle W_i, (\widehat f-f_\star)(X_i)\rangle  
\end{equation}
where $\E'$ denotes expectation with respect to a fresh copy of randomness (independent of the data used to construct $\widehat f)$.

Consequently we also have that:
\begin{multline}
\label{eq:thesplitismade}
    \|\widehat f-f_\star\|_{L^2}^2 =   \frac{(1+\e)}{n}\sum_{i=1}^n\|\widehat f(X_i)-f_\star(X_i)\|^2 +\|\widehat f-f_\star\|_{L^2}^2 - \frac{(1+\e))}{n}\sum_{i=1}^n\|\widehat f(X_i)-f_\star(X_i)\|^2 
    \\
    \leq \frac{2(1+\e)}{n}\sum_{i=1}^n  (1-\E')\langle W_i, (\widehat f-f_\star)(X_i)\rangle +\|\widehat f-f_\star\|_{L^2}^2 - \frac{(1+\e))}{n}\sum_{i=1}^n\|\widehat f(X_i)-f_\star(X_i)\|^2 
\end{multline}

Fix now a radius $r$ and set $g = \frac{r}{\|\widehat f - f_\star\|_{L^2}}(\widehat f-f_\star)$. If $\|\widehat f -f_\star\|_{L^2}\geq r  $, dividing both sides above by $\|\widehat f -f_\star\|_{L^2}$ yields for the first term above in \eqref{eq:thesplitismade}:
\begin{equation}
    \begin{aligned}
&\frac{2(1+\e)}{n\|\widehat f -f_\star\|_{L^2}}\sum_{i=1}^n  (1-\E')\langle W_i, (\widehat f-f_\star)(X_i)\rangle 
\\
        &=
        \frac{1}{ n}\sum_{i=1}^n\left\{ 2(1+\e)(1-\E') \langle W_i, r ^{-1} g(X_i) \rangle\right\} && (\textnormal{df. of $g$ and divide})\\
        &\leq r^{-1}\sup_{g\in \scrF_\star\cap r S_{L^2}} M_n(g).
    \end{aligned}
\end{equation}
Either the above inequality holds or $\|\widehat f-f_\star \|_{L^2} \leq r$. For every $r>0$ it is thus true that:
\begin{equation}\label{eq:offsetcomplexity}
    \begin{aligned}
        \|\widehat f-f_\star\|_{L^2}^2
        &
        \leq r^2 +\left(r^{-1}\sup_{g\in \scrF_\star\cap r S_{L^2}} M_n(g)\right)^2 + \sup_{g\in \scrF_\star} Q_n(g)
    \end{aligned}
\end{equation}
This proves the claim.
\end{proof}

\paragraph{Finishing the proof of \Cref{thm:themainthm}.}
    We apply \Cref{lem:basicineq} with $r=r_\star$, $\e =1/2$ and note that $  n \geq c_3 \max\left\{n_{\mathsf{quad}}(r_\star),n_{\mathsf{mult}}(r_\star)\right\}$ implies: (1) in combination with \Cref{thm:loweruniformmix} that $ \sup_{g\in \scrF_\star} Q_n(g)\lesssim r_\star^2$; and (2) in combination with \Cref{prop:multipliermix} that $\left(\sup_{g\in \scrF_\star\cap r_\star S_{L^2}} M_n(g)\right)^2$ scales at most like the RHS of \eqref{eq:themainthmeq}. The result follows by a union bound over the failure events of \Cref{prop:multipliermix}  and \Cref{thm:loweruniformmix}, all the while taking into account the fact that we posit $k\geq k_{\mathsf{mix}}$. \hfill $\blacksquare$

\section{Proof of the Corollaries to \Cref{thm:themainthm}}

\subsection{Proof of \Cref{cor:params}}

\paramclass*

\begin{proof}

Let us begin by observing that for  some constant $c_\eta$ only depending on $\eta $ we have that:
\begin{equation}
\begin{aligned}
\label{eq:gammafunccalcparamclass}
    \gamma_\eta(\scrF_\star \cap rS_{L^2},d_{L^2})
    &\leq
    c_{\eta} \int_0^r \left(d_\scrF \log \left( \frac{1}{s}\right)\right)^{1/\eta} ds\\
    &=
    c_{\eta} d_\scrF^{1/\eta} r  \int_0^1 \left( \log \left( \frac{r}{s}\right)\right)^{1/\eta} ds\\
    &\leq  c_{\eta} d_\scrF^{1/\eta} r \Gamma(1/\eta+1).
\end{aligned}
\end{equation}

Hence for some universal positive constant $c$:
\begin{equation}
     \sqrt{   \V_{}\left(\scrF_{\star}\cap rS_{L^2}\right) }  \times \frac{1}{r\sqrt{n}} \gamma_2(\scrF_{\star}\cap r S_{L^2},d_{L^2})
\\
\leq 
 c
 \sqrt{   \V_{}\left(\scrF_{\star}\cap rS_{L^2}\right) }  \times \frac{1}{\sqrt{n}}  d_\scrF^{1/2}.
\end{equation}
A few applications of the Cauchy-Schwarz inequality now yields for any $r$:
\begin{equation}
    \V\left(\scrF_{\star}\cap r S_{L^2}\right) \leq k \|W\|_{L^2}^2.
\end{equation}

A candidate choice is therefore $r_\star = \sqrt{\frac{d_\scrF \V\left(\scrF_{\star}\cap \sqrt{\frac{d_\scrF k \|W\|_{L^2}^2}{n}}S_{L^2}\right)}{n}}$. A straighforward but tedious calculation now reveals that the inequality $n \geq \max\left\{n_{\mathsf{quad}}(r_\star),n_{\mathsf{mult}}(r_\star)\right\}$ has a solution depending polynomially on problem data as long as $\eta >1/4$. 
\end{proof}

\subsection{Proof of \Cref{cor:linreg}}

\linreg*

\begin{proof}
We apply \Cref{thm:themainthm} with $p=\infty$, $q=1$ and $\eta=1$.  As in the proof of the preceding corollary (see \eqref{eq:gammafunccalcparamclass}), notice that
    \begin{equation}
    \label{eq:gammafunclinreg}
    \begin{aligned}
        \gamma_1(\scrF_\star \cap rS_{L^2},d_{L^2}) &\leq  c d r,  &\textnormal
        {and} \\
        \gamma_2(\scrF_\star \cap rS_{L^2},d_{L^2}) &\leq  c \sqrt{d} r.
    \end{aligned}
    \end{equation}
    Moreover, since $W_{1:n}$ is a martingale difference sequence we have $\V(\scrF_\star \cap r S_{L^2}) = \frac{1}{n}\sum_{i=1}^n \V(W_i)$. Consequently, the critical radius inequality \eqref{eq_thm:critrad} becomes
    \begin{equation*}
        r \geq c \times \sqrt{\frac{1}{n}\sum_{i=1}^n \V(W_i) \times \frac{d}{n}}
    \end{equation*}
    so that (using stationarity) we may choose $r_\star \propto \sqrt{\V(W) \times \frac{d}{n}} $.

    Let us now turn to evaluating \eqref{eq:burnindef} for this model. $n_\mathsf{quad}$ reads:
    \begin{equation*}
        \begin{aligned}
               n_{\mathsf{quad}}(r_\star)&= \inf \Bigg\{ n\in \N \Bigg| \Bigg [n^{-1/2}\sqrt{k}    L^{1+3/4}r_\star\times   
        %\\ &\times 
        \left( r_\star \sqrt{d}+ r_\star \sqrt{\log (1/\delta)}\right)
     \\
     &+
      n^{-1}   L^2  k          r_\star                   
      %\\&\times
      \left( r_\star d
     +r_\star \log(1/\delta) \right) \Bigg] \leq r_\star^2\Bigg\}\\
     &\leq
     \inf \Bigg\{ n\in \N \Bigg| \Bigg [n^{-1/2}\sqrt{k}    L^{1+3/4}\times   
        %\\ &\times 
        \left( \sqrt{d}+  \sqrt{\log (1/\delta)}\right) \Bigg] \leq 1 \Bigg\}\\
        &+
        \inf \Bigg\{ n\in \N \Bigg| \Bigg [
      n^{-1}   L^2  k                             
      %\\&\times
      \left(d
     + \log(1/\delta) \right) \Bigg] \leq1\Bigg\}\\
     &
     \leq  2k (L\vee 1)^{3+1/2} \left( d+\log(1/\delta) \right).
        \end{aligned}
    \end{equation*}
    
    Next, we turn to $n_\mathsf{mult}$:
    \begin{equation*}
    \begin{aligned}
          n_{\mathsf{mult}}(r_\star)&=\inf \Big\{ n \in \N\Big|  L  k B_W
      \left(\frac{d+ \log(1/\delta)}{n} \right) \leq \sqrt{\V(W)d/n }\Big\} 
      &\leq
           L^2k^2\frac{B_W^2}{\V(W)}\left( d + \log(1/\delta)\right).
    \end{aligned}
    \end{equation*}

    %It remains to compute the parameters $k$ and $L$. Geometric ergodicity yields that it suffices that
    %\begin{equation*}
    %    \frac{k}{\tau_{\mathsf{mix}}} + \log k \geq  \log n + \log(1/\delta).
    %\end{equation*}
    %So we take $k  = (1+\tau_{\mathsf{mix}}) \log(n/\delta)$. 
    Moreover, it is easy to see that may choose $L = B_X / \sqrt{\lambda_{\mathrm{min}} (\E XX^\T)} \geq 1$. Hence the desired result follows under the burn-in requirement that:
    \begin{equation*}
        \frac{n}{k} \geq c
        %(1+\tau_{\mathsf{mix}}) \log(n/\delta) 
        \left( B_X / \sqrt{\lambda_{\mathrm{min}} (\E XX^\T)}\right)^{3+1/2}\left(\frac{kB_W^2}{\V(W)}\right)(d+ \log(1/\delta)) 
    \end{equation*}
    as we sought to prove.
\end{proof}

\end{document}